%% file: main_iclr.tex
\documentclass{article} % For LaTeX2e
\usepackage{iclr2026_conference,times}
\usepackage{booktabs}
\usepackage{graphicx}
\usepackage{multirow}
\usepackage{subcaption}
\usepackage{makecell}
\usepackage{float}      % For [H] placement if needed
\input{math_commands.tex}

\usepackage[toc,page]{appendix}
\usepackage{wrapfig}
\usepackage{xcolor}
\usepackage[colorlinks]{hyperref}
\usepackage{url}
\usepackage{comment}
\usepackage{amsthm}
\usepackage[linesnumbered,ruled,vlined]{algorithm2e}
\usepackage{algpseudocodex}
\usepackage{tikz}
\newtheorem{theorem}{Theorem}
\newtheorem{lemma}{Lemma}
\newtheorem{assumption}{Assumption}
\newtheorem{remark}{Remark}

\newtheorem{definition}{Definition}
\newtheorem{proposition}{Proposition}

\newtheorem*{proposition*}{Proposition}
\newtheorem*{theorem*}{Theorem}

\title{Occupancy Reward Shaping:\\ Improving Credit Assignment in Offline \\Goal-Conditioned Reinforcement Learning}

\definecolor{mypurple}{HTML}{7A4988}
\hypersetup{urlcolor=mypurple,citecolor=mypurple,linkcolor=mypurple}

\author{
\centerline{%
  \begin{tabular}[t]{ccc}
    \textbf{Aravind Venugopal}\footnotemark[1] &
    \textbf{Jiayu Chen} &
    \textbf{Xudong Wu} \\
    \textmd{Carnegie Mellon University} &
    \textmd{\makecell{The University of Hong Kong,\\ INFIFORCE Intelligent Technology}} &
    \textmd{The University of Hong Kong} \\
    [1.0em] % vertical space between rows
    \textbf{Chongyi Zheng} &
    \textbf{Benjamin Eysenbach} &
    \textbf{Jeff Schneider} \\
    \textmd{Princeton University} &
    \textmd{Princeton University} &
    \textmd{Carnegie Mellon University} \\
  \end{tabular}%
}%
}

\iclrfinalcopy 
\begin{document}

\maketitle
\footnotetext[1]{Corresponding author: \texttt{avenugo2@andrew.cmu.edu}}   
\begin{abstract}
The temporal lag between actions and their long-term consequences makes credit assignment a challenge when learning goal-directed behaviors from data. Generative world models capture the distribution of future states an agent may visit, indicating that they have captured temporal information. How can that temporal information be extracted to perform credit assignment? In this paper, we formalize how the temporal information stored in world models encodes the underlying geometry of the world. Leveraging optimal transport, we extract this geometry from a learned model of the occupancy measure into a reward function that captures goal-reaching information. Our resulting method, \textbf{Occupancy Reward Shaping (ORS)}, largely mitigates the problem of credit assignment in sparse reward settings. ORS provably does not alter the optimal policy, yet empirically improves performance by $2.2\times$ across 13 diverse long-horizon locomotion and manipulation tasks. Moreover, we demonstrate the effectiveness of ORS in the real world for controlling nuclear fusion on 3 Tokamak control tasks.

Code: \href{https://github.com/aravindvenu7/occupancy_reward_shaping}{\footnotesize\texttt{https://github.com/aravindvenu7/occupancy\_reward\_shaping}}
Website: \href{https://aravindvenu7.github.io/website/ors/}{\footnotesize\texttt{https://github.com/aravindvenu7.github.io/website/ors/}}

\end{abstract}

\input{contents/intro}
\input{contents/related}

\input{contents/prelim}
\input{contents/methods}

\input{contents/expts}
\input{contents/conclusion}
\input{contents/acknowledgements}

\bibliography{iclr2026_conference}
\bibliographystyle{iclr2026_conference}

\appendix
\input{contents/appendix}

\end{document}

%% file: math_commands.tex
\usepackage{amsmath,amsfonts,bm}

\def\eqref#1{equation~\ref{#1}}
\def\1{\bm{1}}

\DeclareMathAlphabet{\mathsfit}{\encodingdefault}{\sfdefault}{m}{sl}
\SetMathAlphabet{\mathsfit}{bold}{\encodingdefault}{\sfdefault}{bx}{n}

%% file: contents/intro.tex
\section{Introduction}

\begin{wrapfigure}{R}{0.35\textwidth}
  \centering
  \vspace{-1em}
\includegraphics[width=0.34\textwidth]{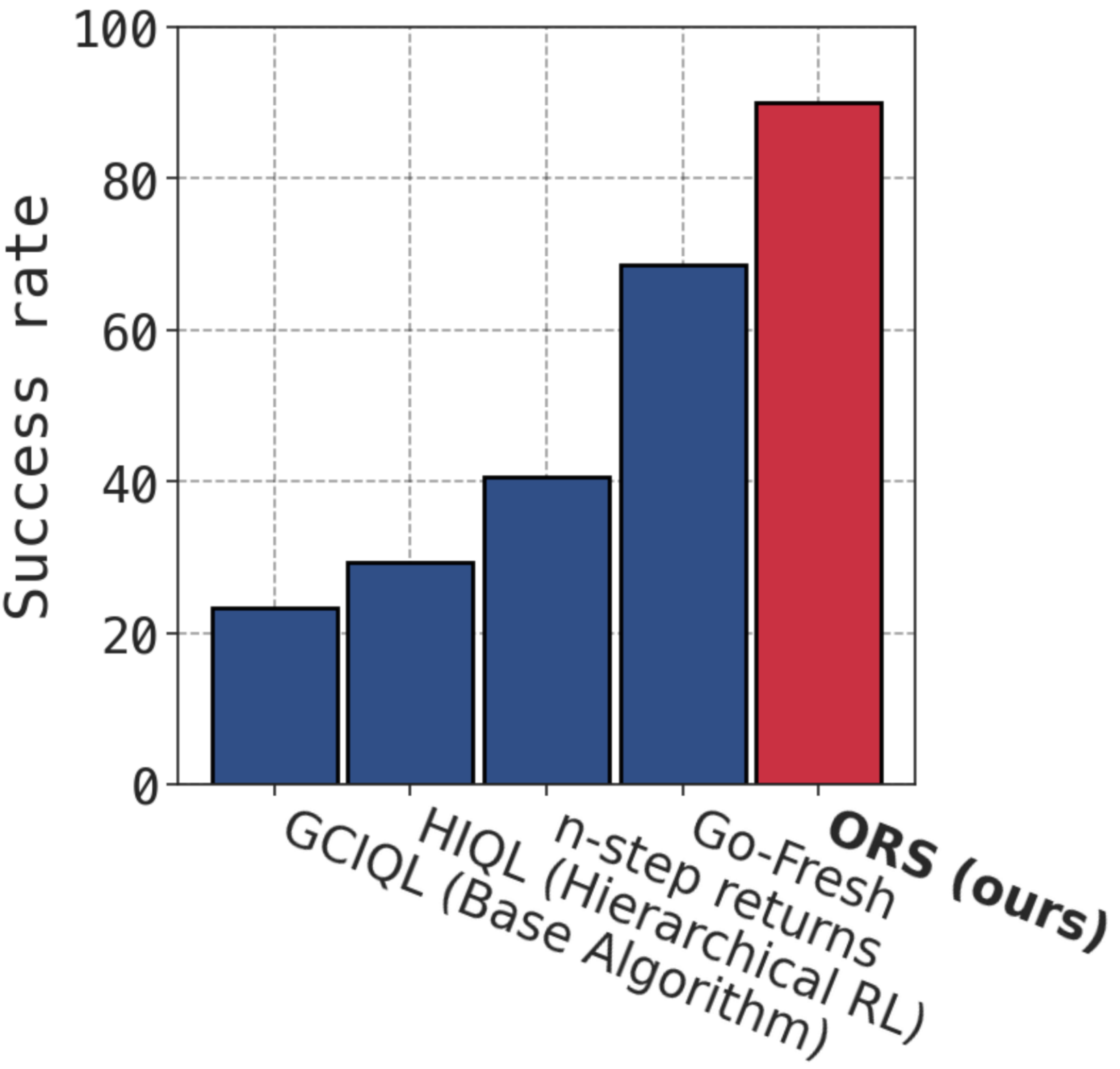}
  \captionsetup{font=small}
  \caption{Average relative success rate (task-normalized) on 7 challenging OGBench tasks~\citep{park2024ogbench}. Higher is better; 100.\% matches the best algorithm per task. Our method, ORS, performs best, improving over the best baseline by 31\%.}
  \label{fig:sidefig}
  \vspace{-1em}
\end{wrapfigure}

Reinforcement learning (RL) is fundamentally tied to credit assignment because effectively reasoning about the long-term consequences of actions is crucial to learn diverse goal-reaching behaviors from data. Although goal-conditioned reinforcement learning (GCRL) offers a simple, domain-agnostic and scalable framework for RL from large offline datasets~\citep{liu2022goal, yang2023essential}, GCRL agents trained with sparse rewards struggle with credit assignment~\citep{eysenbach2022contrastive, ding2019goal, wang2023optimal, park2024ogbench}. Therefore, the design of well-defined and meaningful reward functions, called reward shaping~\citep{ng1999policy} is a promising solution to address the 
credit assignment challenge~\citep{yu2025reward}.

\begin{figure}[t]
  \vspace{-1em}
  \centering
  \includegraphics[width=\textwidth]{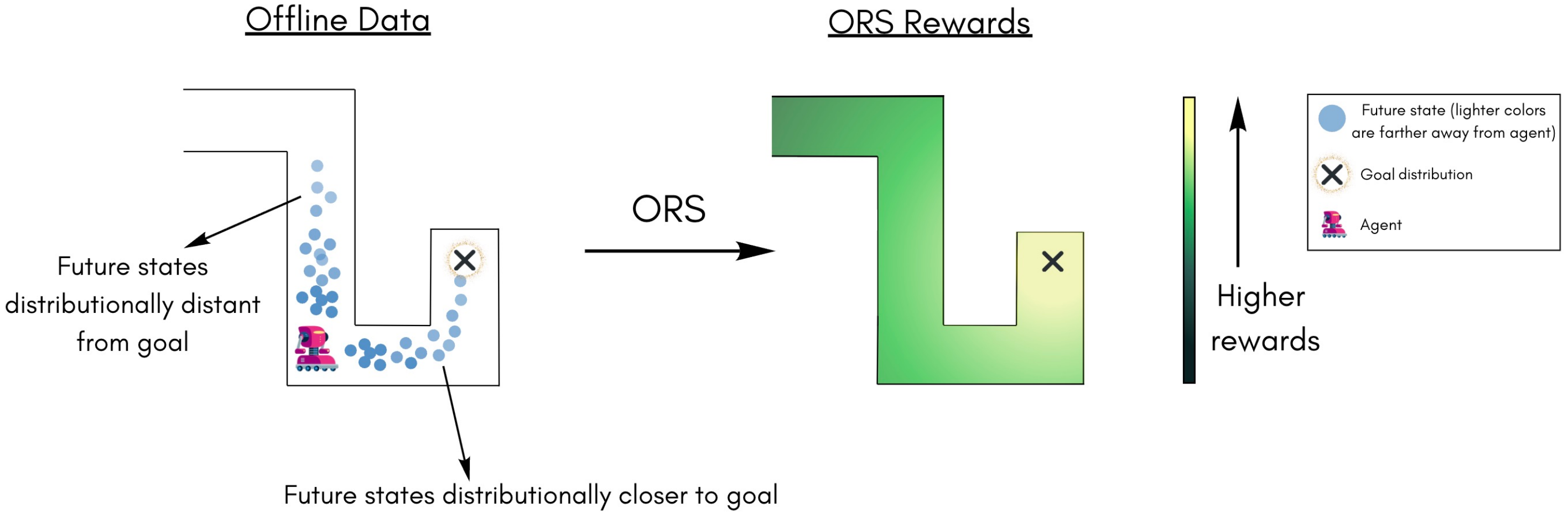}
  \caption{ORS learns the occupancy measure (distribution of future states) and extracts goal-reaching information into a reward function such that states with future states closer to goal have higher rewards.
  }
  \label{main_figure}
  \vspace{-1em}
\end{figure}

While manual reward design for each goal is infeasible, existing work relies on learning local temporal distance estimators from offline data~\citep{hartikainen2019dynamical} and using them to build semi/non-parametric graphs~\citep{savinov2018semi, mezghani2023learning} and indirectly infer temporal information for credit assignment. These approaches run the risk of scaling poorly with task complexity as composing local information to infer global temporal dependencies often leads to compounding errors. So, \textit{can we learn reward functions that accurately and directly capture long-term temporal dependencies and goal-reaching information present in offline data?}

Generative world models~\citep{janner2020generative, schramm2024bellman, farebrother2025temporal} have shown a remarkable ability to capture the multi-modal distribution of future states an agent may visit, i.e., the occupancy measure. This suggests that they must inherently possess the ability to reason about long-term temporal effects of actions by encoding the geometry of the environment. Nonetheless, it remains unclear how to extract such temporal information from a learned world model and leverage it for credit assignment. 

In this paper, we formalize how world models encode world geometry. Specifically, using tools for optimal transport, we show that long-term temporal dependencies can be explicitly recovered from the  velocity field of a learned occupancy model trained with flow matching~\citep{liu2022flow, lipman2022flow}. Building on this insight, we propose \textbf{Occupancy Reward Shaping} (ORS), a reward-shaping method that extracts global goal-reaching information from the dataset occupancy measure into a scalar reward while provably preserving the optimal policy. We demonstrate that ORS significantly enhances performance, improving $\mathbf{2.2\times}$ over its sparse-reward base algorithm and achieving \textbf{24\%-248\%} improvements over state-of-the-art methods on 13 challenging locomotion and manipulation tasks from OGBench~\citep{park2024ogbench}. Furthermore, ORS consistently outperforms baselines on 3 tasks that involve controlling actuators in a nuclear fusion reactor called a Tokamak~\citep{char2023offline}, showing its utility in complex, real-world systems.

%% file: contents/related.tex
\section{Related Work}

\paragraph{Offline GCRL.} Offline GCRL has been a long-standing area of research in RL ~\citep{kaelbling1993learning, schaul2015universal}, stemming from a necessity to build innovative algorithms taking advantage of its rich metric structures, probabilistic goal-reaching interpretations and sub-goal compositionality. As a result, diverse types of algorithms exist in offline GCRL based on behavioral cloning~\citep{yang2022rethinking, hejna2023distance}, actor-critic methods ~\citep{haarnoja2018soft, kostrikov2021offline}, hind-sight goal relabeling~\citep{andrychowicz2017hindsight} as well as more specialized approaches based on metric learning~\citep{wang2023optimal, reichlin2024learning, park2024foundation}, contrastive reinforcement learning~\citep{eysenbach2022contrastive}, dual RL~\citep{ma2022offline, sikchi2023dual} and hierarchical RL~\citep{chane2021goal, park2023hiql, zhou2025flattening}. 

Hierarchical RL methods~\citep{chane2021goal, park2023hiql}  address the challenges posed by long-horizon, sparse-reward tasks with a two-level policy structure where the high level policy predicts sub-goals that a low-level policy learns to reach. Methods such as~\citet{ahn2025option} address this challenge by learning an n-step critic~\citep{de2018multi} to effectively reduce the temporal horizon. Our approach addresses this challenge using reward shaping for effective credit assignment, making ORS complementary to the methods discussed above.

\paragraph{Reward Shaping.} The idea of using reward shaping to facilitate learning dates back to early RL research and applications~\citep{saksida1997shaping, randlov1998learning}. In their canonical work,~\citet{ng1999policy} proposes potential-based reward shaping (PBRS) that preserves the optimal policy under shaped rewards. Potential functions have been learned using expert demonstrations~\citep{brys2015reinforcement, kang2018policy}, transitions~\citep{harutyunyan2015expressing},  uncertainty~\citep{li2025automatic} or with discriminators to be used as exploration bonuses~\citep{durugkar2021adversarial}. Another line of research does not follow PBRS and considers reward shaping as a bonus for exploration~\citep{pathak2017curiosity, mguni2021learning, wang2023efficient, agarwal2023f, ma2024highly, zheng2024can} and as curriculum learning~\citep{andrychowicz2017hindsight,eysenbach2022contrastive, zheng2023contrastive}. 

~\citet{hartikainen2019dynamical} estimates local temporal distance to provide a dense reward for credit assignment in online GCRL. We address the challenge of learning shaped rewards for credit assignment in Offline GCRL. The closest work to ours,~\citet{mezghani2023learning}, uses both a local reward computed with a local temporal distance classifier and a ``global" reward computed using shortest path search on a graph constructed using the local distance classifier. While graph-based methods scale poorly with task complexity, we learn a single reward function capable of directly encoding global, long-horizon goal-reaching information for credit assignment by leveraging the occupancy measure. Our experiments show the effectiveness of ORS over prior state-of-the-art work.

%% file: contents/prelim.tex
\section{Background}

\subsection{Notation}
\paragraph{Goal-Conditioned Reinforcement Learning:}We consider an infinite-horizon controlled Markov Process (a MDP~\citep{puterman2014markov} without rewards) defined by  $(\mathcal{S}, \mathcal{A}, \mu, p, \gamma)$ with state space $\mathcal{S}$ and action space $\mathcal{A}$. $p : S \times A \times S \to \mathbb{R}$ denotes the transition function while $\mu$ denotes the initial state distribution. The discount factor is denoted by $\gamma \in (0, 1)$. A policy $\pi(a|s) \colon S \times A \to \mathbb{R}$ is a probability distribution mapping states to actions. Each policy induces a conditional state-action occupancy distribution $d^\pi(s^+\mid s,a)$ over future states $s^+$: 
$d^{\pi}(s^+\mid s, a) =(1-\gamma) \sum_{\Delta t = 1}^{\infty} \gamma^{\Delta t - 1} \, \mathbb{P}(s_{t+\Delta t} = s^+\mid s, a, \pi)$, which we refer to as the occupancy measure for simplicity.  

The objective of GCRL is to reach any goal, $g \in \mathcal{S}$, from another state $s \in \mathcal{S}$ in the minimum number of steps. A goal-conditioned policy $\pi(a|s,g) \colon S \times S \times A \to \mathbb{R}$ maps states and goals to actions. Formally, the aim is to learn the optimal shortest-path policy $\pi^*(a|s,g)$ that maximizes $\mathbb{E}_{\tau \sim p(\tau | g)} \sum_{t=0}^{\infty} \gamma^{t} \delta_{g}(s_t)$ where $t$ is the timestep, $p(\tau\mid g) = \mu(s_0)\pi(a_0\mid s_0,g)p(s_1\mid s_0,a_0)\cdots p(s_i\mid s_{i-1},a_{i-1})\cdots
$ and $\delta_g$ is a Dirac delta at goal $g$. Such a policy reaches $g$ in the minimum number of steps. In this paper, we focus on offline GCRL. In particular, agents cannot interact with the environment for learning but have access to an offline dataset of trajectories $\mathcal{D}$, where each trajectory is
$\tau = ( s_0, a_0, s_1, a_1, \ldots,;, s_T)$. We denote the dataset behavioral policy by $\pi_{\mathcal{D}}(a \mid s)$ and the dataset occupancy measure corresponding to $\pi_\mathcal{D}$ by $d^{\pi_{\mathcal{D}}}(s^+ \mid s, a)$. 

\textbf{Flow Matching:} Flow matching aims to learn to sample from a distribution $p_{\mathbf{data}}$, given a finite number of samples $x^{(1)},..., x^{(N)} \in \mathbf{R}^d$ drawn from $p_{\mathbf{data}}$. Flow matching transforms a base distribution $p_0$, which is typically a simple Isotropic Gaussian $\mathcal{N}(0, \mathbf{I_d})$ at time $t = 0$ to the target data distribution $p_{\mathbf{data}}(x)$ at $t = 1$. This transformation is defined by a velocity field $v_\theta(t, x) \colon [0, 1]\times \mathbf{R}^d  \rightarrow \mathbf{R}^d$, parameterized by a neural network and having a corresponding flow $\psi_\theta(t, x) \colon[0, 1]\times\mathbf{R}^d \rightarrow \mathbf{R}^d$ which is a unique solution to the Ordinary Differential Equation (ODE) $\frac{d}{dt} \psi_{\theta}(t, x) = v_{\theta}\bigl(t, \psi_{\theta}(t, x))\bigr)$.
%\[
%\frac{d}{dt} \psi_{\theta}(t, x) = v_{\theta}\bigl(t, \psi_{\theta}(t, x))\bigr)
%\]
The velocity field is trained to minimize:
{\small
\begin{equation}
\min_\theta \; \mathbb{E}_{x_0\sim\mathcal{N}(0,I_d),\,x_1\sim p(x),\,t\sim\mathrm{Unif}(0,1)} 
\bigl\|v_\theta(t,x_t)-(x_1-x_0)\bigr\|_2^2
\end{equation}
}

where $\mathrm{Unif}([0,1])$ denotes uniform sampling between 0 and 1 and $x_{t} = (1 - t) \, x_{0} + t \, x_{1}$ is the linear interpolation between $x_0$ and $x_1$. On convergence, $v_\theta(t, x_t)$ learns the velocity field which transforms samples from $p_0$ to samples from $p_{\mathbf{data}}$ by numerically integrating the ODE. In this paper, following~\citet{lipman2024flow, park2025flow}, we use linear interpolation between base and target distributions and uniform time sampling. As in~\citet{park2025flow}, we find that the Euler method is sufficient for solving the ODE.

\subsection{Motivation} 
\label{motivation}

In this section, we motivate our method by examining why credit assignment becomes a challenge when using sparse rewards. We do so by analyzing the learning dynamics of the goal-conditioned value function $V(s,g)$ the GCRL policy $\pi(a|s,g)$ must maximize. Given a goal g, initial state $s_0$, the optimal policy $\pi^*(a^*|s,g)$, and the optimal trajectory $\tau^*=\{ s_0, s_1, ..., s_T=g\}$ induced by $\pi^*$, the optimal value function is monotonically non-decreasing along $\tau^*$: $\forall s_i, s_j \in \tau^*, V^*(s_j,g) \geq V^*(s_i,g) \iff j \geq i$. 
However, non-monotonicity often arises in practice due to sampling or approximation errors~\citep{park2023hiql, ahn2025option}. 

High non-monotonicity in $\hat{V}(s,g)$ can cause the policy to get stuck in sub-optimal regions of the state space, especially when $g$ is distant from $s$. We hypothesize that in long-horizon tasks, under sparse rewards, credit assignment becomes a challenge because $\hat{V}(s,g)$ exhibits high non-monotonicity, leading to poor policies.

\begin{wrapfigure}{r}{0.45\linewidth}
\vspace{-1.0em}
  \centering
  \includegraphics[width=0.9\linewidth]{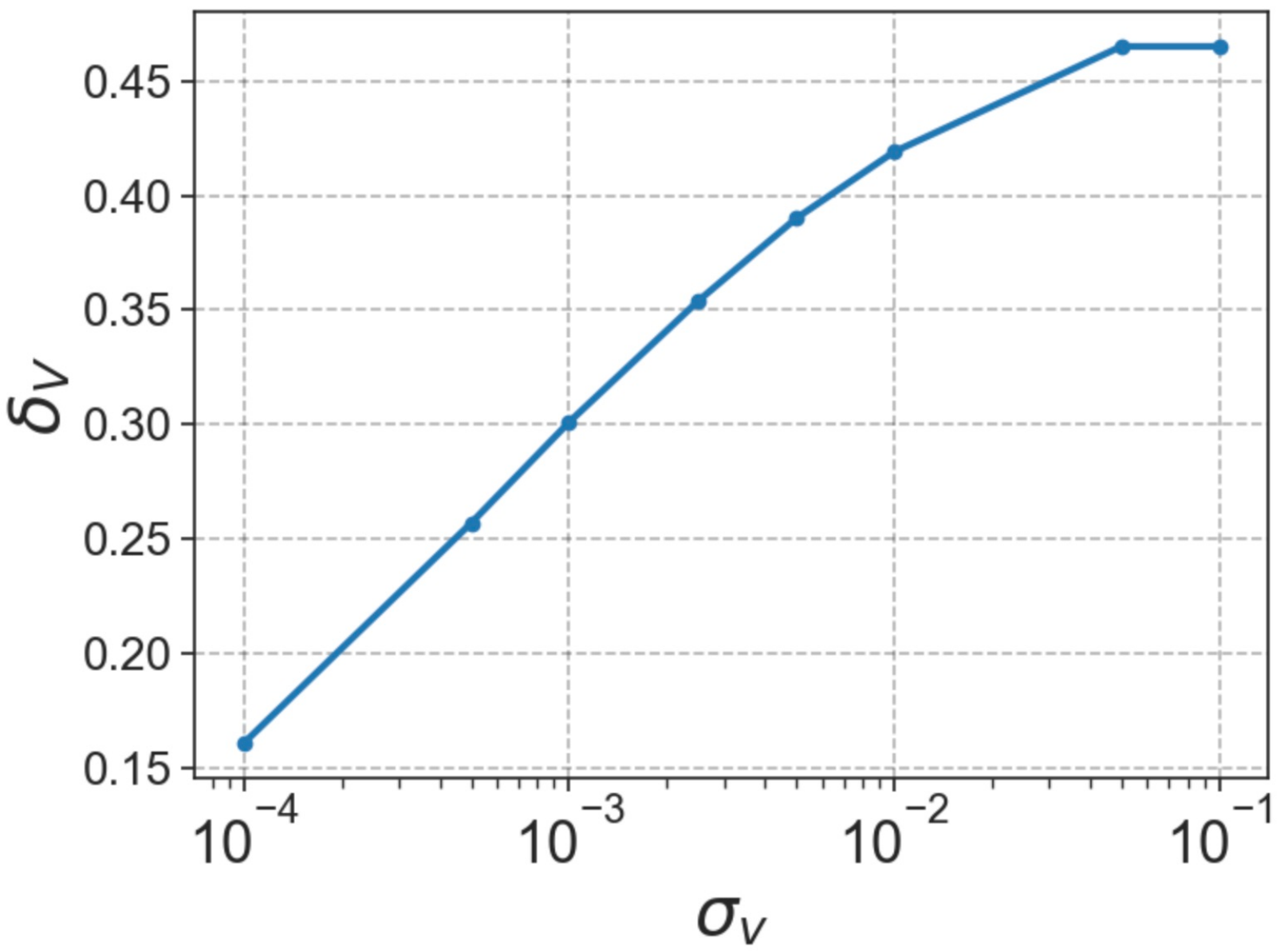}
  \caption{Average $\delta_V$ (\textbf{y-axis}) over 5 expert trajectories on \textbf{antmaze-giant-navigate} as a function of 
  $\sigma_v$ (\textbf{x-axis}). $\delta_V$ increases with $\sigma_v$, denoting increasingly more noisy and non-monotonic $\hat{V}(s,g)$ under sparse rewards.}
  \label{sr_nmc_fig}
  \vspace{-2.5em}
\end{wrapfigure}

We now evaluate this hypothesis empirically using expert trajectories, each with a different length and a unique goal, on the sparse-reward \textbf{antmaze-giant-navigate} task from OGBench~\citep{park2024ogbench} used in~\citet{ahn2025option}. 
Trajectories are visualized in Appendix~\ref{traj_viz}. We analytically compute $\hat{V}(s,g)$ as the discounted sum of rewards. By injecting multiplicative noise with variance $\sigma_v^2$, we simulate errors in value estimation:  $\hat{V}(s,g) \leftarrow r(s,a^*,g) + \gamma*\big(\hat{V}(s', g) + \epsilon*\hat{V}(s', g) \big)$, $\epsilon \sim \mathcal{N}(0, \sigma_v^2)$, with $\gamma=0.99$. We evaluate the quality of $\hat{V}(s', g)$ using $\delta_V$, the value non-monotonicity error,  computed as the proportion of states along an expert trajectory where $\hat{V}(s_{t+1},g) < \hat{V}(s_{t},g)$.

Figure~\ref{sr_nmc_fig} plots average $\delta_V$ over varying $\sigma_v^2$ while Fig.~\ref{sr_value_fig} plots $\hat{V}(s,g)$ over 5  trajectories for $\sigma_v=0.0005$. The results indicate that under sparse rewards, $\delta_V$ is high even under small $\sigma_v$ and increases with $\sigma_v$ and task horizon. This motivates the central question we aim to address: 
\vspace{0.1mm}

\textit{How do we design reward functions that encode long-horizon temporal structure and thereby mitigate value non-monotonicity with effective credit assignment, leading to performative policies?}

%% file: contents/methods.tex
\section{Occupancy Reward Shaping for Offline GCRL}

\begin{wrapfigure}{r}{0.45\linewidth}
  \centering
  \vspace{-2.0em}
  \includegraphics[width=\linewidth]{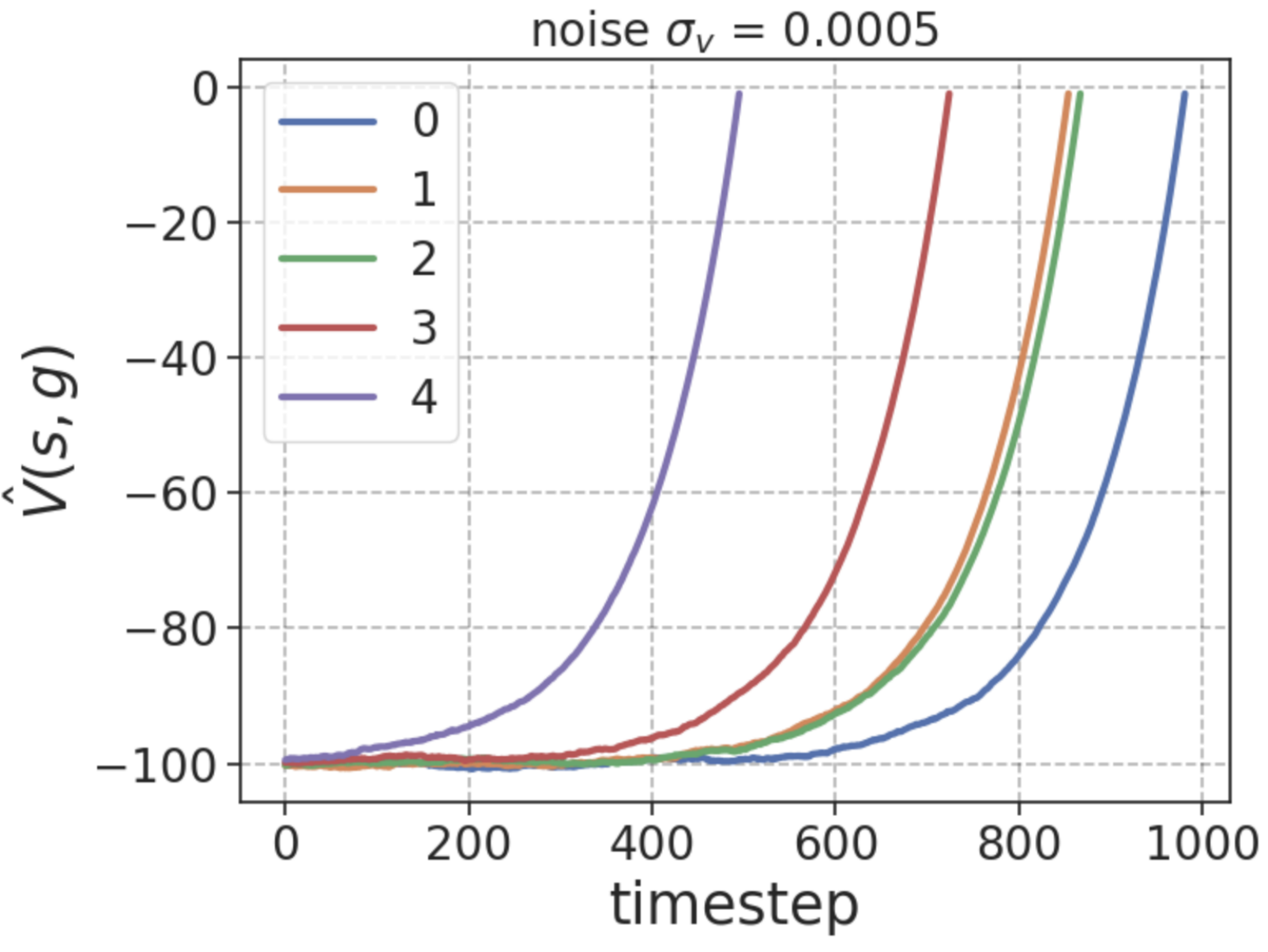}
  \caption{$\hat{V}(s,g)$ (\textbf{y-axis}) with sparse rewards vs time (\textbf{x-axis}) over 5 expert trajectories (each line a trajectory) from \textbf{antmaze-giant-navigate}. $\hat{V}(s,g)$ becomes increasingly noisy as we move away from goal towards the start of a trajectory (moving left on x-axis) even under small $\sigma_{v}$.}
  \vspace{-2.5em}
  \label{sr_value_fig}
\end{wrapfigure}

First, we detail how we learn a generative model of the occupancy measure (Sec. \ref{occ_model}). In Sec.\ref{rewardfn}, we then show exactly how the occupancy measure encodes state geometry (Prop.\ref{prop1statement}). This allows us to extract goal-reaching temporal information from the learned occupancy model into a reward-shaping term (Prop.\ref{prop2statement}). We show that these ``occupancy-shaped" rewards preserve the optimal goal-reaching policy (Sec.\ref{theory}) and finally, we summarize our method in Sec.\ref{method_summary}.

\subsection{Learning the occupancy measure}
\label{occ_model}
The state-action conditioned occupancy measure over future states of the dataset policy $\pi_{\mathcal{D}}$ has the following recursive form~\citep{sutton1995td, janner2020gamma, schramm2024bellman}:

\begin{equation}
d^{\pi_{\mathcal{D}}}_\theta(s^+ \mid s, a) = (1-\gamma)\, p(s' \mid s, a) + \gamma\, d^{\pi_{\mathcal{D}}}_{\theta_-}(s^+ \mid s', a'), \qquad \forall (s, a, s', a') \in \mathcal{D}
\end{equation}

This recursive form, reminiscent of temporal difference learning, allows learning $d^{\pi_{\mathcal{D}}}(s^+ \mid s, a)$ parameterized by  $\theta$ as a mixture over two components: \textbf{1.} the transition function and \textbf{2.} a target $d^{\pi_{\mathcal{D}}}(s^+\mid s', a')$ over subsequent future states under $\pi_{\mathcal{D}}$ (parameterized by $\theta_-$, a time-delayed version of $\theta$). This approach naturally stitches together future states under intersecting trajectories in $\mathcal{D}$. To accurately represent the multi-modality inherent in $d^{\pi_{\mathcal{D}}}$ and to enable efficient computation of the ORS reward function (explained in Sec.~\ref{rewardfn}), we learn $d^{\pi_{\mathcal{D}}}$ using a flow matching model with the following loss~\citep{farebrother2025temporal} :
\begin{align}
\label{flow_loss}
\mathcal{L}_{\text{flow}}(\theta) &= (1 - \gamma)\, \mathcal{L}_{\text{next}}(\theta) + \gamma\, \mathcal{L}_{\text{future}}(\theta); \qquad \forall \, (s, a, s', a') \in \mathcal{D} \\
\mathcal{L}_{\text{next}}(\theta) &= 
\mathbb{E}_{\substack{ s' = x_1 \sim \mathcal{D}\\ x_0 \sim \mathcal{N}(0,\, I_d)\\ t \sim \mathrm{Unif}([0, 1]) }}
\left[
    \left\| v_\theta(t, s, a, x_t) - (x_1 - x_0) \right\|_2^2
\right]; \notag \\
\mathcal{L}_{\text{future}}(\theta) &= 
\mathbb{E}_{\substack{ s^+ = x_1 \sim d^{\pi_{\mathcal{D}}}_{\theta_-}(. \mid s', a')\\ x_0 \sim \mathcal{N}(0,\, I_d)\\ t \sim \mathrm{Unif}([0, 1]) }}
\left[
    \left\| v_\theta(t, s, a, x_t) - \textbf{sg}[v_{\theta_-}(t, s', a', x_t)]\right\|_2^2
\right]. \notag
\end{align}

where $\mathcal{L}_{next}(\theta)$ is the standard flow matching loss over the transition function and $\mathcal{L}_{future}(\theta)$ regresses the velocity field $v_\theta(t, s, a, x_t)$ towards a bootstrapped estimate of the velocity field of $d^{\pi_{\mathcal{D}}}_{\theta_-}(s^+|s', a')$ represented by $v_{\theta_-}(t, s', a', x_t)$. \textbf{sg} represents the stop-gradient operation.

\subsection{Extracting temporal information from the occupancy measure} 
\label{rewardfn}

We now establish a precise connection between the occupancy measure and state space geometry. 

\begin{definition}[Shortest-path distance and level sets]
Let $\textrm{step}^*(s,g)$ denote the minimum number of steps required
to reach goal $g$ from any state $s$  $; \forall (s, g) \in \mathcal{D}$. We define the level sets:
\[
  \mathcal{S}_k = \{ s \in \mathcal{S} : \textrm{step}^*(s,g) = k \}, \quad k = 0,1,2,\dots
\]
Clearly, $\mathcal{S}_0 = \{ g \}$. Under deterministic dynamics, for any
$s \in \mathcal{S}_k$ with $k \ge 1$, there exists an action $a^*$ at state $s$
such that the next-state distribution satisfies
$p(\cdot \mid s, a^*) \subseteq \mathcal{S}_{k-1}.$
\end{definition}

This characterization of distance from goal allows us to quantify how the occupancy measure encodes geometry in terms of the squared \textit{Wasserstein-2 distance} between  $d^{\pi_{\mathcal{D}}}$ and $\delta_g$:

\begin{proposition}[informal]
Under suitable assumptions on goal reachability, dynamics, and dataset coverage, 
for all $(s,a,g) \in \mathcal{D}$, the average squared Wasserstein-2 distance
\[
W_2^2\bigl(\delta_g, d^{\pi_{\mathcal{D}}}(s^+ \mid s)\bigr)
:= \mathbb{E}_{a \sim \pi_{\mathcal{D}}(\cdot \mid s)} 
\Bigl[ W_2^2\bigl(\delta_g, d^{\pi_{\mathcal{D}}}(s^+ \mid s,a)\bigr) \Bigr]
\]
is monotonically increasing with respect to the shortest-path layer index $k$. 
In particular, for a fixed goal $g$ and any $k \ge 1$, if $s_1 \in \mathcal{S}_{k-1}$ 
and $s_2 \in \mathcal{S}_k$, then
\[
W_2^2\bigl(\delta_g, d^{\pi_{\mathcal{D}}}(s^+ \mid s_1)\bigr)
\;\le\;
W_2^2\bigl(\delta_g, d^{\pi_{\mathcal{D}}}(s^+ \mid s_2)\bigr).
\]

Moreover, for any state–goal pair $(s,g)$, the squared \textit{Wasserstein-2 distance}
is minimized by the optimal action:
\[
W_2^2\bigl(\delta_g, d^{\pi_{\mathcal{D}}}(s^+ \mid s, a^*)\bigr)
\;\le\;
W_2^2\bigl(\delta_g, d^{\pi_{\mathcal{D}}}(s^+ \mid s, a)\bigr),
\quad \forall a \in \mathcal{A},\; a^* \sim \pi^*(\cdot \mid s,g).
\]
\vspace{-1.0em}
\label{prop1statement}
\end{proposition}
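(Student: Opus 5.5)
The first step is to make the Wasserstein distance explicit. Since one marginal is a Dirac mass, the only coupling is the product coupling, so
\[
W_2^2\bigl(\delta_g, d^{\pi_{\mathcal{D}}}(s^+\mid s,a)\bigr)=\mathbb{E}_{s^+\sim d^{\pi_{\mathcal{D}}}(\cdot\mid s,a)}\bigl[\|s^+-g\|^2\bigr].
\]
The object in Prop.~\ref{prop1statement} is therefore a discounted expected squared distance to $g$ over future states. Using the definition of $d^{\pi_{\mathcal{D}}}$, I would write it as
\[
(1-\gamma)\sum_{\Delta t\ge1}\gamma^{\Delta t-1}\,\mathbb{E}\bigl[c(s_{t+\Delta t})\bigr],\qquad c(x):=\|x-g\|^2 .
\]
The whole statement then reduces to comparing discounted sums of $c$ along trajectories. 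This is the quantity $Q(s,a):=\mathbb{E}[\sum\gamma^{\Delta t-1}c(s_{t+\Delta t})]$, a cost-to-go for cost $c$ under $\pi_{\mathcal{D}}$.

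Next I would make the "suitable assumptions" concrete enough to close the argument:
\begin{itemize}
\item deterministic dynamics;
\item $g$ absorbing once reached;
\item $\pi_{\mathcal{D}}$ is goal-directed on the data support, meaning that from $s\in\mathcal{S}_k$ it moves to $\mathcal{S}_{k-1}$ (coverage by shortest-path behavior for the relevant goal);
\item a geometric compatibility condition: $c(x)$ is nondecreasing in the layer index, i.e.\ $\sup_{\mathcal{S}_{k-1}}c\le\inf_{\mathcal{S}_k}c$. Equivalently, I would replace $c$ by a surrogate monotone in $\textrm{step}^*$, which is what Euclidean distance approximates.
\end{itemize}

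Under these assumptions, the trajectory from $s\in\mathcal{S}_k$ passes through layers $k-1,k-2,\dots,0$ and then stays at $g$. The future cost sequence is then pointwise dominated. The $j$-th future state from $s_2\in\mathcal{S}_k$ lies in $\mathcal{S}_{\max(k-j,0)}$, and that from $s_1\in\mathcal{S}_{k-1}$ lies in $\mathcal{S}_{\max(k-1-j,0)}$, a layer with index no larger. By the monotonicity of $c$ across layers, every term in the discounted sum for $s_1$ is at most the corresponding term for $s_2$. Summing with nonnegative weights $\gamma^{\Delta t-1}$ and then averaging over $a\sim\pi_{\mathcal{D}}(\cdot\mid s)$ gives the first inequality; a coupling by time index handles the stochastic-action case.

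For the optimal-action claim I would use the recursion from Sec.~\ref{occ_model}:
\[
W_2^2(\delta_g,d(\cdot\mid s,a))=(1-\gamma)\,c(s')+\gamma\,\mathbb{E}_{a'}\bigl[W_2^2(\delta_g,d(\cdot\mid s',a'))\bigr],\qquad s'=p(s,a).
\]
The action $a^*$ sends $s$ to $\mathcal{S}_{k-1}$, the lowest layer reachable in one step, because any successor lies in a layer of index at least $k-1$ by the triangle inequality on $\textrm{step}^*$. Both terms are monotone in the layer of $s'$: the first by the compatibility assumption, the second by the first part of the proposition. Hence $a^*$ minimizes. This conclusion concerns the continuation after $s'$, which follows $\pi_{\mathcal{D}}$, so the argument needs the first part's monotonicity to hold for the continuation regardless of which action was taken.

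The main obstacle is the compatibility between the Euclidean cost $\|s^+-g\|^2$ and the step distance. In a maze, a state can be Euclidean-close to $g$ yet many steps away, and then the claim is false without assumptions. I would first try to state it as an explicit assumption, that $\|x-g\|$ is monotone in $\textrm{step}^*(x,g)$ on the data support, or that the state representation makes it so. A weaker route would bound non-monotonicity by a slack term $\epsilon_k$ and require $\gamma$-discounted layer gaps to dominate it.

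The second delicate point is that $\pi_{\mathcal{D}}$ is not goal-conditioned. I would assume the dataset occupancy is evaluated under goal-relabeled, shortest-path-consistent behavior, which is what "dataset coverage" should mean.
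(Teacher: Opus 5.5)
Your argument is correct under the hypotheses you choose, but for the first claim it takes a different route from the paper, and the difference in assumptions matters.

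\textbf{How the paper argues.} The paper never unrolls the occupancy sum. It treats $M^{\pi_{\mathcal{D}}}(s,g)$ as the fixed point of $(T^{\pi_{\mathcal{D}}}M)(s,g)=(1-\gamma)\Phi(s',g)+\gamma M(s',g)$, with $s'=f(s,\pi_{\mathcal{D}}(s))$. It shows this operator is a $\gamma$-contraction, argues that it preserves layer monotonicity, and passes to the limit from $M_0=0$.

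\textbf{How the hypotheses differ.} The paper assumes more about the cost: $\Phi$ is monotone in $\textrm{step}^*$, which forces $\Phi$ to be constant on each layer, plus a gap $\Delta_\Phi>0$ between layers. It assumes less about the data: $\pi_{\mathcal{D}}$ need only preserve layer order, $\textrm{step}^*(s_1',g)\le\textrm{step}^*(s_2',g)$ for the successors of $s_1\in\mathcal{S}_{k-1}$ and $s_2\in\mathcal{S}_k$. You instead require $\pi_{\mathcal{D}}$ to descend one layer per step, and only the cross-layer ordering $\sup_{\mathcal{S}_{k-1}}c\le\inf_{\mathcal{S}_k}c$ on the cost.

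\textbf{Why your stronger data hypothesis is needed.} The paper's preservation step must compare $M(s_1')$ and $M(s_2')$ when both successors fall in the same layer, and order preservation alone does not control that. Take $\mathcal{S}_1=\{a,y\}$ and $\mathcal{S}_2=\{b\}$, with $g$ absorbing. Let $\pi_{\mathcal{D}}$ idle at $a$ (although $a$ can step to $g$) while sending $y\to g$ and $b\to y$. The order condition holds, yet the averaged $W_2^2$ is $\phi_1$ at $a$ and only $(1-\gamma)\phi_1$ at $b$, where $\phi_1>0$ is the value of $\Phi$ on $\mathcal{S}_1$. Your pathwise domination excludes this by construction and is fully transparent.

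\textbf{Second part.} This is the paper's one-step decomposition. The paper additionally uses $\Delta_\Phi$ to make the inequality strict, by at least $(1-\gamma)\Delta_\Phi$, which it needs later for uniqueness of the greedy action in Theorem 1.

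\textbf{A caveat that applies to both arguments.} Layer comparisons only cover actions whose successor lies in a strictly higher layer. If two shortest-path actions reach different states of $\mathcal{S}_{k-1}$, your condition does not order them. The ``$\forall a\in\mathcal{A}$'' form therefore needs one of two fixes: either $c$ is constant on layers, in which case, under your descent assumption, $W_2^2$ depends only on the layer; or $a^*$ is read as the best shortest-path action.
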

The assumptions and proof are provided in Appendix~\ref{assumptions} and Appendix~\ref{prop1}. This formulation captures not only \textit{how far} the center of mass of $d^{\pi_{\mathcal{D}}}(s^+\mid s, a))$ is from $g$, but also \textit{how spread out} $d^{\pi_{\mathcal{D}}}(s^+\mid s, a))$ is with respect to $g$: if state $A$ and state $B$ are both $5$ transitions away from $g$ but $A$ has more future states that are farther away from the goal than $B$, $W_2^2(;)$ will be lower for state $B$.

Intuitively, this means that by defining a reward $r^{W}(s,a,g) = - W_2^2(\delta_g, d^{\pi_{\mathcal{D}}}(s^+\mid s, a))$, we can extract the state-space geometry as goal-reaching temporal information into a scalar value that is higher for states with $s^+$ closer to $g$ under $\mathcal{D}$. Unlike prior works~\citep{mezghani2023learning, hartikainen2019dynamical}, this reward-shaping term will directly capture global long-horizon information about goal-reachability. What remains is to compute $W_2^2(\delta_g, d^{\pi_{\mathcal{D}}}(s^+\mid s, a))$. While exact computation involves an intractable integral, we show how 
to estimate $r^{W}(s,a,g)$ using the velocity field corresponding to $d^{\pi_{\mathcal{D}}}_\theta(s^+\mid s, a))$:

\begin{proposition}
The squared Wasserstein-2 distance between $\delta_g$ and $d^{\pi_{\mathcal{D}}}(s^+ \mid s,a)$ 
is upper bounded, up to a multiplicative constant $C>0$, by the mean squared error of the associated velocity field corresponding to $d^{\pi_{\mathcal{D}}}(s^+ \mid s,a)$ and the target velocity corresponding to $\delta_g$:
\begin{equation}
W_2^2\bigl(\delta_g, d^{\pi_{\mathcal{D}}}(s^+ \mid s,a)\bigr)
\;\le\;
C\,
\mathbb{E}_{\substack{
    x_1 \sim \delta_g \\
    x_0 \sim \mathcal{N}(0,I_d) \\
    t \sim \mathrm{Unif}([0,1])
}}
\bigl\|
    v(t,s,a,x_t) - (x_1 - x_0)
\bigr\|_2^2,
\end{equation}
where $x_t = t x_1 + (1-t)x_0$.
\label{prop2statement}
\end{proposition}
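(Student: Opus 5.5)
The plan is to pass from the Wasserstein distance to a coupling built from the flow itself, and then to a Grönwall-type stability estimate for ODE flows. The key facts are that for a Dirac target, $W_2^2(\delta_g,\nu)=\mathbb{E}_{y\sim\nu}\|y-g\|_2^2$, since the only coupling is the product one. We also use the fact that $d^{\pi_{\mathcal{D}}}(s^+\mid s,a)$ is the pushforward of $\mathcal{N}(0,I_d)$ under the time-one map $\psi(1,\cdot)$ of the ODE $\frac{d}{dt}\psi(t,x)=v(t,s,a,\psi(t,x))$, $\psi(0,x)=x$. Hence $W_2^2(\delta_g,d^{\pi_{\mathcal{D}}})=\mathbb{E}_{x_0}\|\psi(1,x_0)-g\|_2^2$, and it suffices to bound the pathwise deviation of the learned flow from the \emph{reference} flow that transports $\mathcal{N}(0,I_d)$ to $\delta_g$.

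The reference flow is the straight-line interpolation $x_t=tg+(1-t)x_0$. Its velocity is the constant $u_t=g-x_0$, which is exactly the target $(x_1-x_0)$ in the statement with $x_1=g$, and $x_1=g$ at $t=1$. Set $e(t)=\psi(t,x_0)-x_t$, so $e(0)=0$ and $e(1)=\psi(1,x_0)-g$. Differentiating,
\begin{equation*}
\dot e(t)=v(t,s,a,\psi(t,x_0))-(g-x_0)=\bigl[v(t,s,a,\psi(t,x_0))-v(t,s,a,x_t)\bigr]+\bigl[v(t,s,a,x_t)-(g-x_0)\bigr].
\end{equation*}
We assume, as is standard in flow-matching error analyses and as we would add to the assumptions in Appendix~\ref{assumptions}, that $v(t,s,a,\cdot)$ is $L$-Lipschitz in space, uniformly in $t$. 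Then $\frac{d}{dt}\|e\|\le L\|e\|+\|v(t,s,a,x_t)-(g-x_0)\|$. Grönwall gives
\begin{equation*}
\|e(1)\|\le\int_0^1 e^{L(1-t)}\|v(t,s,a,x_t)-(x_1-x_0)\|\,dt.
\end{equation*}
Squaring and applying Cauchy--Schwarz in $t$ yields $\|e(1)\|^2\le e^{2L}\int_0^1\|v(t,s,a,x_t)-(x_1-x_0)\|^2dt$. Taking expectation over $x_0\sim\mathcal{N}(0,I_d)$ and writing the time integral as an expectation over $t\sim\mathrm{Unif}([0,1])$ gives the claim with $C=e^{2L}$.

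The main obstacle is the Lipschitz step. The bound is only meaningful if the learned velocity field is Lipschitz in $x$ with a constant independent of $(s,a,g)$, and if the ODE is well-posed so that $d^{\pi_{\mathcal{D}}}$ really is the pushforward. Note that the residual must be evaluated along the reference interpolant $x_t$ rather than the learned trajectory, which is why the Lipschitz term is needed to close the argument. I would state this regularity as an explicit assumption, justified by smooth neural parameterizations, rather than try to derive it.

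A secondary point is that the proof bounds $W_2$ for the distribution generated by the velocity field $v$, i.e.\ the learned model $d^{\pi_{\mathcal{D}}}_\theta$. This coincides with the true occupancy measure exactly when the flow-matching objective (\ref{flow_loss}) is solved. If it is not, I would add a triangle-inequality term, which the statement implicitly absorbs.
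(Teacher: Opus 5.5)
Your proof is correct and follows the same route as the paper. The paper uses the product coupling for the Dirac target and then invokes Lemma 1 of \citet{lv2025flow}, a Gr\"onwall stability bound for flows; your version spells out what that citation hides, namely the uniform spatial Lipschitz assumption on $v(t,s,a,\cdot)$ and the resulting constant $C=e^{2L}$, and you rightly note that the bound concerns the model-generated distribution $d^{\pi_{\mathcal{D}}}_\theta$, which is also what the paper's proof actually works with.
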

where $v(t, s, a, x_t)$ is the velocity field corresponding to $d^{\pi_{\mathcal{D}}}(s^+|s,a)$ at time $t$. The proof is in Appendix~\ref{prop2}. As mentioned in Sec.~\ref{occ_model}, this further justifies our choice of using a flow matching model to learn $d^{\pi_D}$. Using the learned $d^{\pi_{\mathcal{D}}}_\theta$ from Sec.~\ref{occ_model}, we can now learn $r^{W}(s,a,g)$ using a neural network $\psi$. Notably, this does not involve running an ODE solver for multiple time-steps:
\begin{equation}
\mathcal{L}_{\mathrm{rew}}(\psi) =
\mathbb{E}_{\substack{
    s,a,g \sim \mathcal{D}
}}
\bigg[
    \bigg\| \hat{r^{W}}_\psi(s, a, g) - 
    \bigg[-\mathbb{E}_{\substack{
     x_1=g \\
    x_0 \sim \mathcal{N}(0,\, I_d) \\
    t \sim \mathrm{Unif}([0, 1])
    }}
        \left\| v_\theta(t, s, a, x_t) - (x_1 - x_0) \right\|_2^2
    \bigg] \bigg\|_2^2
\bigg]
\label{rew_loss}
\end{equation}
\begin{algorithm}[t]
\SetAlgoLined
\KwIn{Occupancy model $d^{\pi_{\mathcal{D}}}_\theta$, reward function $r^W_\psi$, learning rate $\eta$}
\tcp{Training the occupancy model:}
\For{each iteration $t=1$ to $N$}{
    Sample $(s, a, s^+) \sim \mathcal{D}$\;
    Compute $\mathcal{L}_{pretrain}(\theta)$ (Eq.~\ref{pretrain_loss}) and update $\theta \leftarrow \theta - \eta \nabla_\theta \mathcal{L}_{pretrain}(\theta)$\;}
\For{each iteration $t=1$ to $N$}{
    Sample $(s, a, s', a') \sim \mathcal{D}$\;
    Compute $\mathcal{L}_{flow}(\theta)$ (Eq.~\ref{flow_loss}) and update $\theta \leftarrow \theta - \eta \nabla_\theta \mathcal{L}_{flow}(\theta)$\;}
\tcp{Training the reward function:}
\For{each iteration $t=1$ to $N$}{
    Sample $(s, a, g)\sim \mathcal{D}$\;
    Compute $\mathcal{L}_{rew}(\psi)$ (Eq.~\ref{rew_loss}) and update $\psi \leftarrow \psi - \eta \nabla_\psi \mathcal{L}_{rew}(\psi)$\;}
\caption{Occupancy Reward Shaping.}
\label{alg:ors_algo}
\end{algorithm}

\subsection{ORS preserves the optimal policy}
\label{theory}

It remains unclear whether using $r_W$ from Sec.\ref{rewardfn}, one can learn the same optimal policy as with sparse rewards, as it is estimated using the \textit{dataset occupancy measure} $d^{\pi_{\mathcal{D}}}$. 
\begin{theorem}
Under suitable assumptions on goal reachability, dynamics, and dataset coverage, 
the greedy goal-conditioned policy:
\[
\pi^{\mathrm{greedy}}(a \mid s, g) \;=\; \arg\max_{a \in \mathcal{A}} Q^{*}(s, a, g),
\]
where $Q^{*}(s, a, g)$ denotes the optimal action-value function induced by the reward $r^{W}(s, a, g)$,
coincides with the optimal shortest-path policy $\pi^{*}(a \mid s, g)$.
\label{theorem1statement}
\end{theorem}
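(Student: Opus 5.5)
The plan is to show that the optimal $Q$-function under $r^{W}$ has, at every state, the same argmax as the shortest-path $Q$-function. The key input is the second part of Prop.~\ref{prop1statement}: the optimal action minimizes $W_2^2(\delta_g, d^{\pi_{\mathcal{D}}}(s^+\mid s,a))$, hence maximizes the immediate reward $r^{W}(s,a,g)$. The first part of that proposition supplies monotonicity of the state-level quantity across the level sets $\mathcal{S}_k$. Throughout I would work under the appendix assumptions: deterministic dynamics, every $(s,g)$ reachable, and dataset coverage sufficient that $\pi_{\mathcal{D}}$ places mass on optimal actions.

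\textbf{Step 1: a lower bound for the shortest-path policy.} Fix $g$ and write $r^{W}(s,a,g)=-W_2^2(\delta_g,d^{\pi_{\mathcal{D}}}(s^+\mid s,a))$. Consider any state $s\in\mathcal{S}_k$ and any action $a$. Applying Prop.~\ref{prop1statement} at every state along a trajectory, the shortest-path policy $\pi^*$ collects at each step the largest possible immediate reward. Its next state also lies in $\mathcal{S}_{k-1}$, the lowest layer reachable from $\mathcal{S}_k$.

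\textbf{Step 2: a coupling argument.} Let $a'$ be a non-optimal action, so $p(\cdot\mid s,a')$ lands in $\mathcal{S}_{j}$ with $j\ge k$. Compare the trajectory that starts with $a'$ and then follows any policy against the trajectory that follows $\pi^*$. At time $t$, the first trajectory occupies a layer with index at least $\max(k-t,0)$, while the $\pi^*$ trajectory occupies exactly layer $k-1-t$, one layer lower.

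I would then prove, by induction on $k$, that $V_W^*(s)$ is non-increasing in the layer index. The ingredients are:
\begin{itemize}
\item the state-level monotonicity from the first part of Prop.~\ref{prop1statement};
\item the fact that $\pi^*$ attains the per-state maximum reward;
\item the Bellman equation $Q^*(s,a,g)=r^{W}(s,a,g)+\gamma V^*(s',g)$.
\end{itemize}
Together these give
\[
Q^*(s,a^*,g)=r^{W}(s,a^*,g)+\gamma V^*(s^*_{\mathrm{next}},g)\;\ge\; r^{W}(s,a',g)+\gamma V^*(s'_{\mathrm{next}},g)=Q^*(s,a',g).
\]
The first summand dominates by Prop.~\ref{prop1statement}, and the second dominates because $s^*_{\mathrm{next}}\in\mathcal{S}_{k-1}$ lies in a lower layer than $s'_{\mathrm{next}}$. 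Ties, such as several optimal actions, are resolved by noting that $\arg\max$ then equals the set of shortest-path actions. Strictness for non-optimal actions requires a strict version of the proposition's inequality, which I would add as a mild assumption. The base case is the goal: at $g$, remaining at $g$ gives $W_2^2=0$, which is absorbing and maximal.

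\textbf{Main obstacle.} The hard step is the inductive claim that $V^*_W$ is monotone in the layer index. Prop.~\ref{prop1statement} is stated for the action-averaged quantity under $\pi_{\mathcal{D}}$, whereas the induction needs it for the best-action reward. I would first try to bridge this with the second part of the proposition: the best action at a state in $\mathcal{S}_{k-1}$ does at least as well as the average, and a single optimal step maps $\mathcal{S}_k$ into $\mathcal{S}_{k-1}$. Concretely, I would show $\max_a r^W(s_1,a)\ge\max_a r^W(s_2,a)$ for $s_1\in\mathcal{S}_{k-1}$ and $s_2\in\mathcal{S}_k$. If this fails in general, I would fall back to an explicit assumption, justified by dataset coverage, that the best-action reward is layer-monotone. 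I would then conclude by summing the per-step dominations over the coupled trajectories, which shows $\pi^{\mathrm{greedy}}=\pi^*$.
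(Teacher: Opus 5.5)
\paragraph{Gap: the value-monotonicity step is left unproved.} Your top-level decomposition matches the paper's. $Q^*(s,a^*,g)-Q^*(s,a',g)$ splits into an immediate-reward gap, handled by the second part of Proposition~\ref{prop1statement}, plus $\gamma$ times a value gap between successors in different layers. That value gap (layer monotonicity of $V^*$, the paper's Lemma~\ref{lem:v_star_mono}) is the whole nontrivial content of the theorem, and you leave it open. The bridge you sketch fails. ``The best action beats the average'' together with the first part of the proposition gives $\max_a r^W(s_1,a,g)\ge -M^{\pi_{\mathcal{D}}}(s_1,g)\ge -M^{\pi_{\mathcal{D}}}(s_2,g)$. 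But $-M^{\pi_{\mathcal{D}}}(s_2,g)\le \max_a r^W(s_2,a,g)$, so the chain points the wrong way at the last link. Falling back to an assumption that best-action rewards are layer-monotone simply assumes the key step.

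\paragraph{The missing idea.} Unroll the action-conditioned cost one step with the occupancy recursion under deterministic dynamics: $M^{\pi_{\mathcal{D}}}(s,a,g)=(1-\gamma)\Phi(f(s,a),g)+\gamma M^{\pi_{\mathcal{D}}}(f(s,a),g)$, where $\Phi(x,g)=\|x-g\|^2$. Take $s_1\in\mathcal{S}_{k-1}$ and $s_2\in\mathcal{S}_k$ ($k\ge 2$) with their shortest-path actions. Their successors lie in $\mathcal{S}_{k-2}$ and $\mathcal{S}_{k-1}$. So the state-level (action-averaged) monotonicity now applies \emph{at the successors} to the bootstrap term. The $(1-\gamma)\Phi$ term needs a separate assumption, absent from your list: $\|x-g\|^2$ increases by at least $\Delta_\Phi>0$ from each layer to the next. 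This gives the per-step gap $M^{\pi_{\mathcal{D}}}(s_2,a^*(s_2),g)-M^{\pi_{\mathcal{D}}}(s_1,a^*(s_1),g)\ge(1-\gamma)\Delta_\Phi$. Summing along the two shortest-path trajectories yields $V^*(s_1,g)-V^*(s_2,g)\ge(1-\gamma^{k-1})\Delta_\Phi\ge 0$. Chained across layers, this is exactly what your second-summand comparison needs.

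\paragraph{Two smaller points.} First, your base case is wrong as stated. $W_2^2(\delta_g,d^{\pi_{\mathcal{D}}}(s^+\mid g,a))$ is zero only if $\pi_{\mathcal{D}}$ never leaves $g$, because the occupancy from $g$ includes every state $\pi_{\mathcal{D}}$ visits afterwards. You must either treat $g$ as terminal (the paper implicitly does this by truncating returns on arrival) or assume $\pi_{\mathcal{D}}$ is absorbing at $g$. Second, once the per-layer cost gap is in place, your coupling argument is worth keeping. It dominates, step by step, \emph{every} trajectory from $s_2$ by the shortest path from $s_1$, using $M^{\pi_{\mathcal{D}}}\ge 0$ after the latter terminates. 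That bounds the true optimal value under $r^W$. By contrast, the paper's lemma evaluates $V^*$ by summing along the shortest path, so it tacitly assumes $\pi^*$ is already optimal for $r^W$.
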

Please refer to Appendix~\ref{optimal_proof} for the proof. ORS thus enables efficient use of the rich goal-reaching information present in the dataset occupancy measure without having to estimate the new occupancy measure at each intermediate policy improvement step.

\subsection{Method Summary}
\label{method_summary}
ORS has 3 stages: \textbf{1.} Train the flow matching occupancy model $d^{\pi_{\mathcal{D}}}_\theta$ using Eq.~\ref{flow_loss}; \textbf{2.} Train the reward function $r^{W}_\psi$ using Eq.~\ref{rew_loss}; and \textbf{3.} Train a goal-conditioned policy using any offline GCRL algorithm that uses a TD-learning critic. Stages 1 and 2 that involve learning  $d^{\pi_{\mathcal{D}}}_\theta$ and $r^{W}_\psi$ are summarized in Alg.~\ref{alg:ors_algo}. Details on architecture and hyperparameters are provided in Appendix~\ref{model_hyperparams}.

%% file: contents/expts.tex
\section{Experiments}
\label{expts}
In this section, we present an extensive empirical analysis of ORS across a variety of challenging long-horizon offline GCRL tasks. We then conduct detailed analyses and ablations to dissect key design choices underlying our algorithm. 

\paragraph{\textit{Tasks:}}For our empirical analysis, we primarily use OGBench~\citep{park2024ogbench}, a benchmark specifically built for evaluating offline GCRL algorithms. We choose OGBench over older benchmarks such as~\citet{fu2020d4rl, tarasov2023revisiting} for its task diversity, challenging long-horizon sparse-reward tasks that are unsaturated as of 2025, and comprehensive multi-goal evaluation. 

OGBench tasks are broadly categorized into 4 types: maze navigation, cube manipulation, puzzle manipulation, and scene tasks. We select tasks with varying levels of task complexity and planning horizon. from OGBench to be representative of these 4 categories. Within each category, we include tasks of varying complexity (e.g., cube-double and cube-triple) and varying dataset quality (e.g., cube-triple-play and cube-triple-noisy). We provide a detailed explanation of tasks in Appendix~\ref{task_details}. 

To demonstrate effectiveness in real-world settings, we evaluate ORS on actuator control tasks in a nuclear fusion reactor called a Tokamak. We use raw sensor and actuator data collected from the DIII-D tokamak located in San Diego, CA, USA. ORS controls 4 actuators that influence evolution of plasma in the Tokamak, a system characterized by highly non-linear, stochastic dynamics and complex physics. We evaluate over 3 separate tasks, each tracking a certain scalar or profile state variable: $\beta_N$ (normalized ratio between plasma pressure and magnetic pressure), Electron density and Ion rotation. These quantities critically influence plasma stability and overall power output required for commercially viable nuclear fusion.
Please refer to Appendix \ref{tokamak_details} for further details. 

\paragraph{\textit{Baselines:}} 
We compare ORS against a set of representative baselines. These include Goal Conditioned Behavioral Cloning (\textbf{GCBC})~\citep{ghosh2019learning}; Goal-Conditioned Implicit Q Learning (\textbf{GCIQL})~\citep{kostrikov2021offline} and Goal-Conditioned Implicit Value Learning (\textbf{GCIVL})~\citep{park2023hiql} which approximate value functions using expectile regression~\citep{newey1987asymmetric} and recover policies using behavior-constrained deterministic policy gradient (DDPG + BC) and advantage-weighted regression (AWR), respectively~\citep{park2024value}. Quasimetric RL (\textbf{QRL})~\citep{wang2023optimal} learns a specialized quasimetric goal-conditioned value function with a dual objective. Contrastive RL (\textbf{CRL})~\citep{eysenbach2022contrastive} fits a Monte Carlo goal-conditioned value function using contrastive learning and extracts a greedy policy from it. 

We also compare ORS to several methods designed for long-horizon, sparse-reward settings. \textbf{Go-Fresh}~\citep{mezghani2023learning} learns a shaped reward as a sum of a local reward from a temporal distance classifier between states and a global reward computed by shortest-path search on a  graph constructed with the local distance classifier. \textbf{SMORE}~\citep{sikchi2023score} is an occupancy-matching method that uses dual RL to learn unnormalized densities that reflect the distance to goal.

Hierarchical Implicit Q Learning (\textbf{HIQL})~\citep{park2023hiql} trains a low-level policy to reach goals sampled from a high-level policy, both trained via a single GCIVL-style value function. Subgoal Advantage-Weighted Policy Bootstrapping (\textbf{SAW})~\citep{zhou2025flattening} improves over HIQl by training a non-hierarchical goal-conditioned policy by bootstrapping
on subgoal-conditioned policies with advantage-weighted importance sampling. \textbf{n-step GCIQL} and \textbf{GCIQL-OTA} use different techniques to learn a GCIQL critic using n-step TD learning~\citep{de2018multi, ahn2025option}. These baselines ensure a standalone comparison of our method against two key strategies to horizon reduction in RL. Both ORS and Go-Fresh use GCIQL with a Gaussian policy, owing to its popularity and simplicity and to ensure fair comparison. We build on the codebase of \href{https://github.com/seohongpark/ogbench}{OGBench} and make use of the official implementation for \href{https://github.com/facebookresearch/go-fresh}{Go-Fresh}. All algorithms use hindsight relabeling~\citep{kaelbling1993learning, andrychowicz2017hindsight}. Appendix~\ref{baseline_appendix}-\ref{wallclock} provides additional information on baselines, hyperparameters used and wall-clock time.

\subsection{Results}
\label{results}
Performance on OGBench is measured by average (binary)
success rates on 5 test-time goals of each task. We train algorithms to convergence and average the results over \textbf{8 seeds}. We list the number of training iterations per task in Appendix~\ref{training_it}. Performance on Tokamak tasks is measured as the cumulative reward per episode, where reward is the L2 distance between the goal variables to track and the actual achieved quantities. On Tokamak tasks, we evaluate on 10 test-time goals over \textbf{4 seeds}. We discuss the results below:
\paragraph{\underline{\textit{How effective is ORS?}}} ORS achieves the best performance on most tasks, with especially large gains on more complex domains. Table~\ref{table1} summarizes results over 12 offline locomotion and manipulation datasets, where \textbf{*-play/*-navigate} denote noisy expert datasets and \textbf{*-noisy/*-explore} denote highly sub-optimal datasets~\citep{park2024ogbench}. 
\begin{table}[t]
\caption{ORS vs baselines: Overall average (binary) success rate (\%) across the 5 testtime goals over 8 seeds per task per algorithm. We report the 95\% bootstrapped C.I. size after the ± sign.}
\centering
\scalebox{0.85}{
\begin{tabular}{l@{\hskip 12pt}c@{\hskip 12pt}c@{\hskip 12pt}c@{\hskip 12pt}c@{\hskip 12pt}c@{\hskip 12pt}c@{\hskip 12pt}c}
\toprule
\textbf{Dataset} & \textbf{GCBC} & \textbf{GC-IVL} & \textbf{QRL} & \textbf{CRL} & \textbf{GC-IQL} & \textbf{Go-Fresh} & \textbf{ORS (ours)} \\
\midrule
antmaze-large-navigate & 25 $\pm$ 3 & 18 $\pm$ 3 & 64 $\pm$ 18  & \textbf{90 $\pm$ 4} & 34 $\pm$ 4 & \textbf{88 $\pm$ 3} & \textbf{88 $\pm$ 7}  \\
antmaze-giant-navigate    & 0 $\pm$ 0 & 0 $\pm$ 0 & 9 $\pm$ 4  & 39 $\pm$ 8 & 0 $\pm$ 0  & 30 $\pm$ 10    & \textbf{56 $\pm$ 9}  \\
cube-double-play    & 1 $\pm$ 1  & 36 $\pm$ 3 & 1 $\pm$ 0  & 10 $\pm$ 2 & \textbf{40 $\pm$ 5} & 17 $\pm$ 6 & \textbf{45 $\pm$ 7}  \\
cube-triple-play   & 0 $\pm$ 0 & 1 $\pm$ 1 & 0 $\pm$ 0 & 6 $\pm$ 3 & 7 $\pm$ 3    & 18 $\pm$ 5    & \textbf{37 $\pm$ 8}  \\
puzzle-4x4-play       & 0 $\pm$ 0 & 13 $\pm$ 2 & 0 $\pm$ 0  & 0 $\pm$ 0 & 26 $\pm$ 3 & \textbf{74 $\pm$ 6} & \textbf{70 $\pm$ 5}  \\
puzzle-4x5-play                   & 0 $\pm$ 0 & 7  $\pm$ 1  & 0 $\pm$ 0  & 1 $\pm$ 0 &  14 $\pm$ 1   & \textbf{20 $\pm$ 1}  & \textbf{20 $\pm$ 0}\\
puzzle-4x6-play                   & 0 $\pm$ 0 & 10  $\pm$ 2  & 0  $\pm$ 0  & 4$\pm$ 1 & 12 $\pm$ 1    & 17 $\pm$ 4  & \textbf{20 $\pm$ 2} \\
scene-play      & 5 $\pm$ 1  & 42 $\pm$4 & 5 $\pm$ 1  & 19 $\pm$ 2 & 51 $\pm$ 4 & 56 $\pm$ 10 & \textbf{80 $\pm$ 4} \\
\midrule
antmaze-large-explore  & 0 $\pm$ 0 & 8  $\pm$ 6  & 0  $\pm$ 0  & 0 $\pm$ 0 & 1 $\pm$ 1    & \textbf{38 $\pm$ 10}    & 22 $\pm$ 7 \\
cube-triple-noisy                   & 1 $\pm$ 1 & 9  $\pm$ 1  & 1  $\pm$ 0  & 3 $\pm$ 1 & 2 $\pm$ 1    & 5 $\pm$ 4    & \textbf{22 $\pm$ 7} \\
puzzle-4x4-noisy                   & 0 $\pm$ 0 & 20  $\pm$ 3  &  0 $\pm$ 0  &  0 $\pm$ 0 &  29 $\pm$ 7    & \textbf{50 $\pm$ 5}  & \textbf{56 $\pm$ 7} \\
puzzle-4x6-noisy                   & 0 $\pm$ 0 & 17  $\pm$ 2  &  0 $\pm$ 0  &  6 $\pm$ 3 &  \textbf{18 $\pm$ 2}    & \textbf{19 $\pm$ 4}  & \textbf{19 $\pm$ 1} \\
scene-noisy                   & 1 $\pm$ 1 & 26  $\pm$ 5  & 9  $\pm$ 1  & 1 $\pm$ 1 & 26 $\pm$ 2   & 34 $\pm$ 5  & \textbf{40 $\pm$ 5} \\
\midrule
\textbf{Mean} & 2.5 & 15.9  & 6.9  & 13.7 & 20.0  & 35.8    & \textbf{44.2}  \\
\bottomrule
\vspace{-2em}
\end{tabular}
}
\label{table1}
\end{table}
ORS achieves a $\mathbf{2.2\times}$ \textbf{improvement in performance on average} over its base algorithm GCIQL that uses sparse rewards. While we see that GCIQL/GCIVL struggle on  locomotion and QRL/CRL struggle on manipulation, ORS consistently outperforms these baselines on both domains. ORS outperforms Go-Fresh on the majority of tasks, demonstrating the effectiveness of occupancy-based reward shaping over graph-based reward shaping, especially in complex tasks such as \textbf{antmaze-giant-navigate} and \textbf{cube-triple-play} that are characterized by long horizons upto 2000 steps long~\citep{park2024ogbench}. We also see that ORS remains effective even under highly sub-optimal data in the \textbf{*-noisy/*-explore} datasets. Notably, only ORS and Go-Fresh produce non-zero performance on \textbf{antmaze-large-explore}. Although Go-Fresh outperforms ORS on this task, we hypothesize that this could be because the additional local rewards used by Go-Fresh are effective in such a task.

\begin{table}[b]
\centering
\vspace{-1em}
\caption{ORS vs long-horizon sparse-reward offline GCRL strategies on OGBench tasks: Overall average (binary) success rate (\%) across the 5 test time goals over 8 seeds per task per algorithm. We report the 95\% bootstrapped C.I. size after the ± sign.
}
\scalebox{0.8}{
\begin{tabular}{l@{\hskip 12pt}c@{\hskip 12pt}c@{\hskip 12pt}c@{\hskip 12pt}c@{\hskip 12pt}c@{\hskip 12pt}c@{\hskip 12pt}c}
\toprule
\textbf{Dataset} & \textbf{HIQL} & \textbf{SAW} & \textbf{SMORE} & \textbf{n-step GCIQL} & \textbf{GCIQL-OTA}  & \textbf{Go-Fresh} & \textbf{ORS (ours)} \\
\midrule
antmaze-large-navigate      & \textbf{91 $\pm$ 2} & 86 $\pm$ 5 & 22 $\pm$ 5 & 53 $\pm$ 9 & 90 $\pm$ 4 & 88 $\pm$ 3 & 88 $\pm$ 7  \\
antmaze-giant-navigate      & \textbf{72 $\pm$ 7} & 48 $\pm$ 9 & 1 $\pm$ 1 & 1 $\pm$ 1 & 26 $\pm$ 5 & 30 $\pm$ 10 & 56 $\pm$ 9  \\
cube-double-play         & 6 $\pm$ 2 & \textbf{40 $\pm$ 7} & 2 $\pm$ 2 & 4 $\pm$ 3 & 3 $\pm$ 2 & 17 $\pm$ 6 & \textbf{45 $\pm$ 7}  \\
cube-triple-play            & 3 $\pm$ 2 & 6 $\pm$ 6 & 0 $\pm$ 0 & 1 $\pm$ 1 & 2 $\pm$ 2 & 18 $\pm$ 5 & \textbf{37 $\pm$ 8}  \\
puzzle-4x4-play         & 7 $\pm$ 2 & 17 $\pm$ 12 & 0 $\pm$ 0 & 46 $\pm$ 5 & \textbf{85 $\pm$ 4} & 74 $\pm$ 6 & 70 $\pm$ 5  \\
puzzle-4x5-play      & 8 $\pm$ 4 & 8 $\pm$ 4 & 0 $\pm$ 0 & 5 $\pm$ 2 & 19 $\pm$ 1 & \textbf{20 $\pm$ 1} & \textbf{20 $\pm$ 0}  \\
puzzle-4x6-play             & 3 $\pm$ 1 & 8 $\pm$ 4 & 0 $\pm$ 0 & 14 $\pm$ 3 & 15 $\pm$ 3 & 17 $\pm$ 4 & \textbf{20 $\pm$ 2}  \\
scene-play    & 38 $\pm$ 3 & 63 $\pm$ 6 & 8 $\pm$ 2 & 26 $\pm$ 7 & 42 $\pm$ 7 & 56 $\pm$ 10 & \textbf{80 $\pm$ 4}  \\
\midrule
antmaze-large-explore       & 0 $\pm$ 0 & 15 $\pm$ 8 & 0 $\pm$ 0 & 0 $\pm$ 0 & 0 $\pm$ 0 & \textbf{38 $\pm$ 10} & 22 $\pm$ 7  \\
cube-triple-noisy           & 2 $\pm$ 1 & 0 $\pm$ 0 & 1 $\pm$ 1 & 2 $\pm$ 1 & 2 $\pm$ 1 & 5 $\pm$ 4 & \textbf{22 $\pm$ 7}  \\
puzzle-4x4-noisy     & 3 $\pm$ 3 & 3 $\pm$ 3 & 0 $\pm$ 0 & 0 $\pm$ 0 & 0 $\pm$ 0 & \textbf{50 $\pm$ 5} &  \textbf{56 $\pm$ 7}  \\
puzzle-4x6-noisy            & 2 $\pm$ 1 & 8 $\pm$ 6 & 0 $\pm$ 0 & 12 $\pm$ 6 & 15 $\pm$ 3 & \textbf{19 $\pm$ 4} &  \textbf{19 $\pm$ 1}  \\
scene-noisy      & 25 $\pm$ 4 & 33 $\pm$ 6 & 3 $\pm$ 2 & 2 $\pm$ 2 & 3 $\pm$ 2 & 34 $\pm$ 5 & \textbf{40 $\pm$ 5}  \\
\midrule
\textbf{Mean} & 20.0 & 25.8 & 2.8 & 12.7 & 23.2 & 35.8 & \textbf{44.2} \\

\bottomrule
\vspace{-2em}
\end{tabular}}
\label{table2}
\end{table}

\paragraph{\underline{\textit{How does ORS compare to other strategies for long-horizon, sparse-reward offline GCRL?}}} 
On average, ORS outperforms the next-best method GO-Fresh by 24\% and achieves 2.2$\times$ and 1.9$\times$ high performance than hierarchical (HIQL) and n-step return approaches(GCIQL-OTA), respectively. From Table \ref{table2}, we see that while hierarchical or policy-bootstrapped-based methods like HIQL and SAW are effective on antmaze locomotion but struggle on manipulation tasks. SMORE is largely ineffective with a mean performance of just 2.8\% while n-step return methods perform poorly on noisy datasets. In contrast, the strong performance of ORS with a simple non-hierarchical policy underscores the effectiveness of its occupancy-based reward shaping.

\paragraph{\underline{\textit{Is ORS effective in real-world tasks?}}} 
We now examine the ability of ORS to track goals by controlling actuators in real-world Tokamak control tasks. Table \ref{tab:tokamak_results} compares ORS with baselines specifically designed for long-horizon, sparse-reward offline GCRL. While HIQL and SAW perform relatively well for rotation control, they perform poorly on other tasks. While SMORE performs poorly overall, both n-step GCIQL and GCIQL-OTA perform well on density control but show poor performance on other tasks. GO-Fresh performs poorly overall on all tasks; we believe this is because graph-based reward shaping is ineffective in tasks with stochastic dynamics. ORS consistently achieves best performance across the Tokamak control, highlighting its effectiveness in complex real-world tasks. 

\begin{table}[t]
\centering
\caption{ORS vs long-horizon sparse-reward offline GCRL strategies on 3 challenging Tokamak control tasks: Overall average cumulative return across 10 testtime goals over 4 seeds per task per algorithm. We report the 95\% bootstrapped C.I. size after the ± sign.
}
\small 
\setlength{\tabcolsep}{3pt} 
\renewcommand{\arraystretch}{1.1} 
\scalebox{0.8}{%
\begin{tabular}{lccccccc}
\toprule
\textbf{Task} & \textbf{HIQL} & \textbf{SAW} & \textbf{SMORE} & \textbf{n-step GCIQL} & \textbf{GCIQL-OTA} & \textbf{GO-Fresh} & \textbf{ORS} \\

\midrule
Tokamak $\beta_N$   & \textbf{-45.26 $\pm$ 3.83} & \textbf{-51.39 $\pm$ 8.16} & $-56.06 \pm 12.71$ & $-88.7 \pm 44.3$ & $-67.73 \pm 9.73$ & \textbf{-48.20 $\pm$ 11.72} & \textbf{-44.76 $\pm$ 8.42} \\
Tokamak density & $-49.8 \pm 7.5$  & $-44.0 \pm 2.3$ & $-54.0 \pm 10.8$ & \textbf{-25.6 $\pm$ 4.1} & \textbf{-27.1 $\pm$ 2.5}    & $-38.6 \pm 12.6$ & \textbf{-26.9 $\pm$ 6.6}    \\
Tokamak rotation  & $-24.4 \pm 1.3$ & $-26.4 \pm 5.1$ & $-28.5 \pm 6.7$ & $-57.4 \pm 46.6$ & $-30.1 \pm 6.9$    & $-28.9 \pm 7.4$ & \textbf{-22.4 $\pm$ 0.6}   \\
\bottomrule
\end{tabular}%
}
\label{tab:tokamak_results}
\vspace{-1.0em}
\end{table}

\begin{figure}[h]
  \centering
  \begin{subfigure}[b]{0.32\linewidth}
    \includegraphics[width=\linewidth]{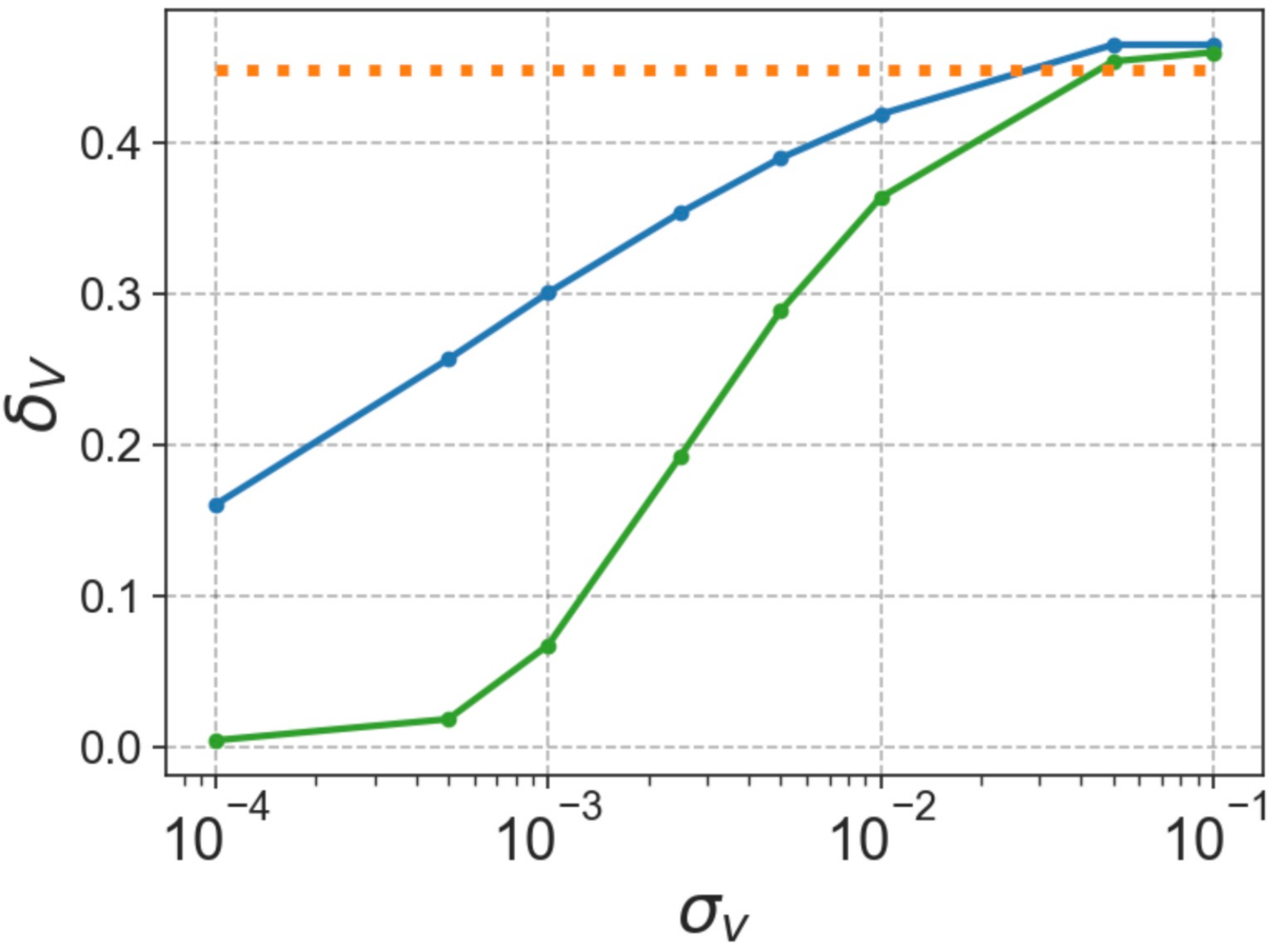}
    \label{fig:ors_nmc_fig}
  \end{subfigure}\hspace{0pt}
  \begin{subfigure}[b]{0.33\linewidth}
     \includegraphics[width=1.0\linewidth]{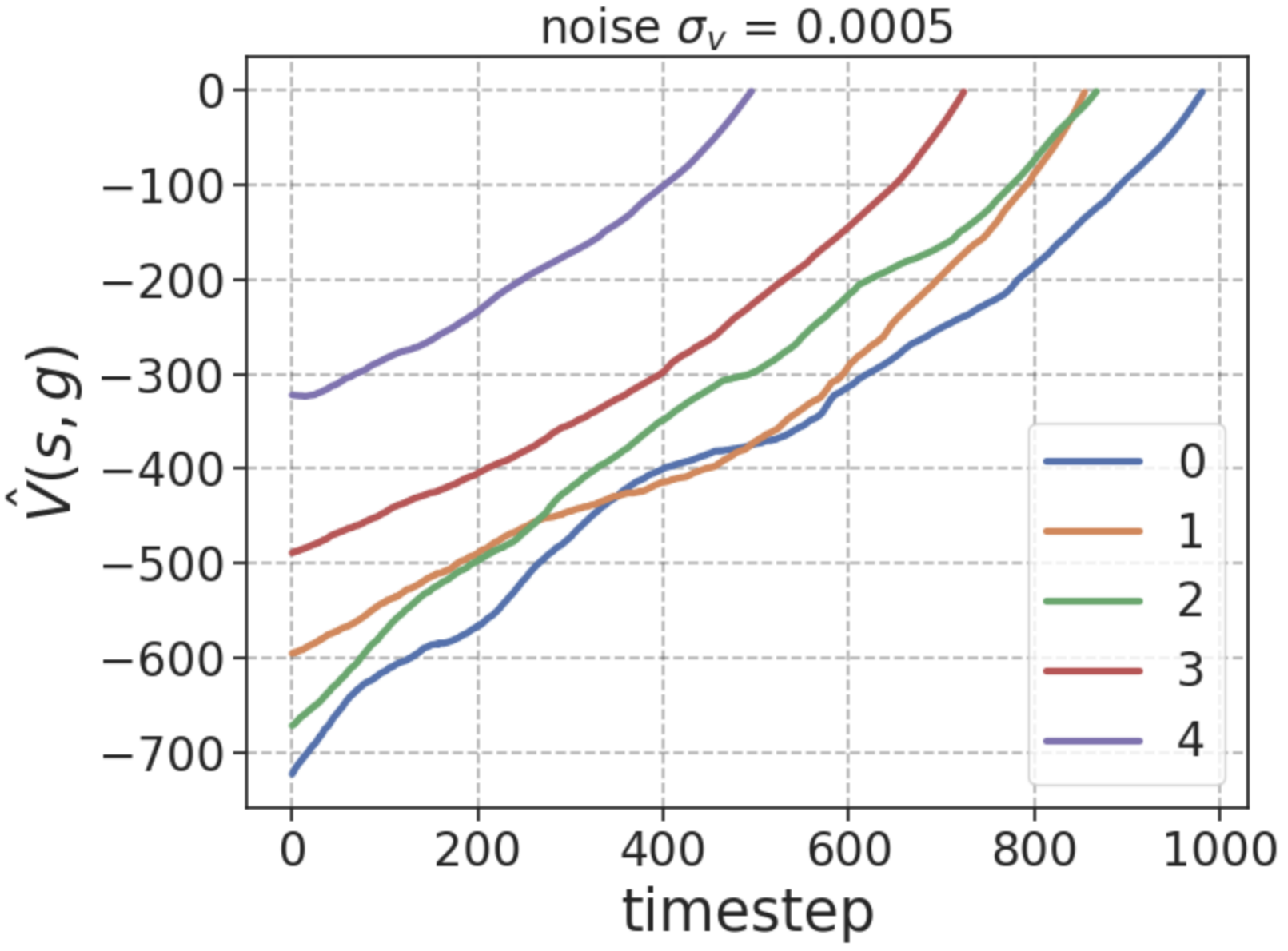}
    \label{fig:ors_value_fig}
  \end{subfigure}\hspace{0pt}
  \begin{subfigure}[b]{0.33\linewidth}
    \includegraphics[width=0.95\linewidth]{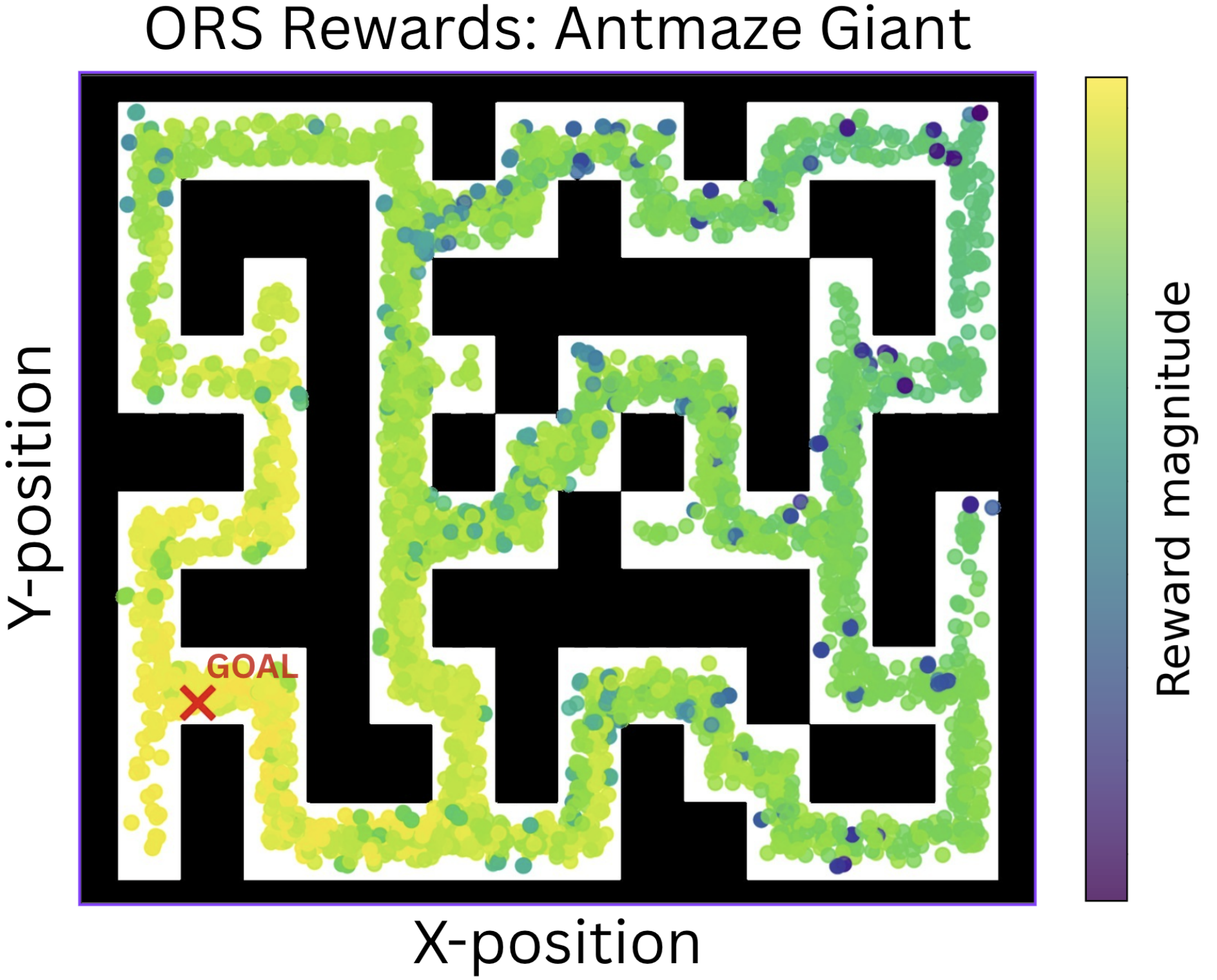}
    \label{fig:ors_reward_fig}
      \vspace{1em}
  \end{subfigure}
  \caption{\textbf{Left:} ORS leads to \textit{lower average value non-monotonicity} $\delta_V$ at lower noise levels ($\sigma_v$) over expert trajectories compared to using sparse rewards or just using $\hat{V}(,g) = r_W(s,g)$; \textbf{Center:} ORS induces \textit{less noisy estimates} of $\hat{V}(s,g)$ over expert trajectories  even for long horizons;
  and \textbf{Right:} ORS rewards over 5000 state-action pairs (denoted as dots) for a single fixed goal (denoted as \textcolor{red}{\textbf{x}}) \textit{smoothly decay in magnitude} with temporal distance from goal. All plots in this figure are computed over \textbf{antmaze-giant-navigate}.}
    \label{visual_analysis}
\vspace{-1em}
\end{figure}

\paragraph{\underline{\textit{How does ORS influence value learning?}}} We analyze how ORS influences the learning dynamics of the induced value function on \textbf{antmaze-giant-navigate} \textit{by improving credit assignment}. Following the setup in Sec.~\ref{motivation}, we analytically compute $\hat{V}(s,g)$ induced by ORS rewards under varying noise levels $\sigma_v$.  Figure~\ref{visual_analysis} \textbf{(left)} reports how the average $\delta_V$ varies with $\sigma_v$ for sparse rewards, ORS, and a variant that directly uses $\hat{V}(s,g) = r^W(s, g)$. Relative to sparse rewards, ORS exhibits $\delta_V$ that is an order of magnitude smaller at lower noise levels. We plot the approximate value function $\hat{V}(s,g)$ induced by ORS over the same 5 trajectories in Sec.~\ref{motivation} in Fig.~\ref{visual_analysis} \textbf{(center)}, showing estimates that are much less noisy than sparse-reward $\hat{V}(s,g)$ over long horizons.

Finally, Figure~\ref{visual_analysis} \textbf{(right)} plots ORS rewards for a fixed goal over 5000 uniformly sampled state-action pairs, each state displayed in the figure by its x-y position in the maze. ORS reward magnitudes decay smoothly with increasing temporal distance from goal, with dark dots corresponding to states (eg. ends of trajectories) very distant from goal.

\paragraph{\underline{\textit{How do different design choices affect ORS?}}}We now examine some key design choices behind ORS. First, we compare ORS with a baseline called L2 that computes the uses L2 distance to goal for reward shaping. Table \ref{l2_rew} shows that L2 rewards are ineffective and can hurt performance as shown by GCIQL with L2 rewards performing worse than sparse-reward GCIQL.

\begin{wraptable}{r}{0.58\textwidth}
    \centering
    \scalebox{0.8}{
    \begin{tabular}{l@{\hskip 12pt}c@{\hskip 12pt}c@{\hskip 12pt}c@{\hskip 12pt}}
        \toprule
        \textbf{Dataset} & \textbf{ORS} & \textbf{L2} & \textbf{GCIQL} \\
        \midrule
        antmaze-giant-navigate & 56 $\pm$ 9 & 3 $\pm$ 2 & 0 $\pm$ 0 \\
        cube-triple-play  & 37 $\pm$ 8  & 3 $\pm$ 1 & 7 $\pm$ 3\\
        \bottomrule
    \end{tabular}}
    %\vspace{1.5mm}
    \caption{ORS vs L2 rewards}
    \label{l2_rew}
    \vspace{-1.0em}
\end{wraptable}

Next, we ablate over design choices of the ORS reward function and $Q$-function by comparing ORS against a version that uses state-only rewards $r_{W}(s,g)$ named ORS-s and a variant that uses the reward function itself as the $Q$-function, i.e., $Q(s,a,g) = r_{W}(s,a,g)$ named ORS-r. We keep the base algorithm the same. 

\begin{wrapfigure}{l}{0.45\textwidth}
    \centering
    \vspace{-0.5em}
    \includegraphics[width=0.9\linewidth]{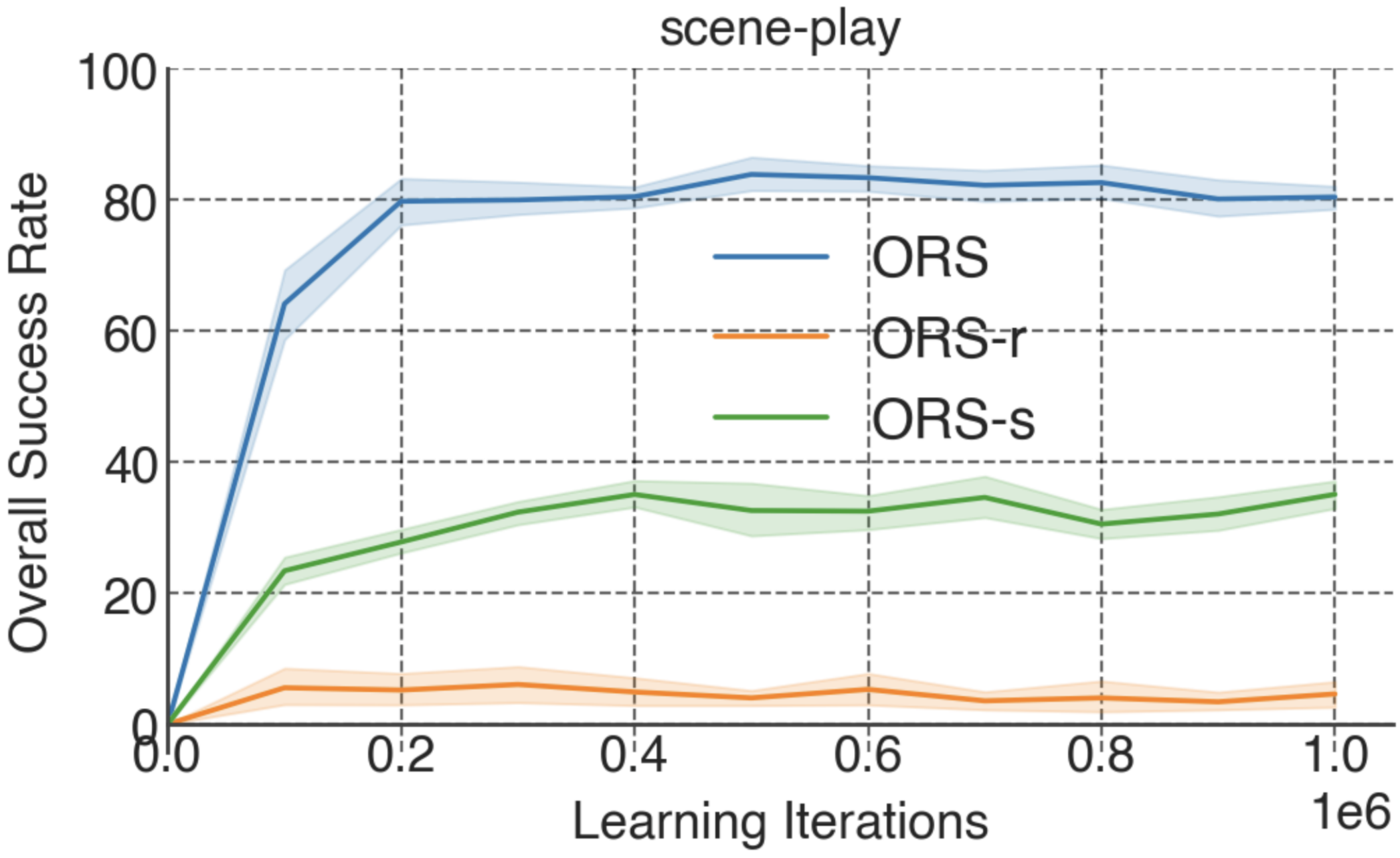}
    \caption{ORS \textbf{vs} ORS-s \textbf{vs} ORS-r over 5 testtime goals and 8 seeds on \textbf{scene-play} (error bars show 95\% bootstrapped CI)}
    \label{ablations_fig}
    \vspace{-1.5em}
\end{wrapfigure}

In experiments on the manipulation task \textbf{scene-play} in Fig.~\ref{ablations_fig}, ORS performs best, showing the importance of learning a $Q$-function that is a cumulative sum over $r_{W}(s,a, g)$, especially when the task involves stitching over trajectories to learn the optimal $Q^*$. We hypothesize that the poor performance of ORS-r is due to noisy estimates of the $Q$ function, as evidenced by our analysis in Fig.~\ref{visual_analysis}.

We also ablate the performance of ORS over varying values of the expectile parameter $\kappa$ of GCIQL which is crucially linked to the density of rewards~\citep{kostrikov2021offline}. The results in Table~\ref{ablations_table} show higher performance at lower $\kappa$ in locomotion tasks. We hypothesize that this is due to the rich learning signal from dense rewards of ORS. In manipulation tasks, as task complexity increases, the $\kappa$ at which we get best performance increases.
\begin{wraptable}{r}{0.58\textwidth}
    \vspace{-4em}
    \centering
    \scalebox{0.85}{
    \begin{tabular}{l@{\hskip 12pt}c@{\hskip 12pt}c@{\hskip 12pt}c@{\hskip 12pt}}
        \toprule
        \textbf{Dataset} & \textbf{$\kappa$=0.6} & \textbf{$\kappa$=0.75} & \textbf{$\kappa$=0.9} \\
        \midrule
        antmaze-giant-navigate & 56 $\pm$ 9 & 15 $\pm$ 6 & 1 $\pm$ 1 \\
        puzzle-4x4-play   & 70 $\pm$ 5 & 40 $\pm$ 4 & 21 $\pm$ 3 \\
        scene-play        & 70 $\pm$ 5 & 80 $\pm$ 4 & 71 $\pm$ 7 \\
        cube-triple-play  & 4 $\pm$ 3  & 27 $\pm$ 12 & 37 $\pm$ 8 \\
        \bottomrule
    \end{tabular}}
    \vspace{0.5mm}
    \caption{Performance of ORS over varying values of $\kappa$.}
    \label{ablations_table}
    \vspace{-0.75em}
\end{wraptable}

%% file: contents/conclusion.tex
\section{Conclusion}

In this paper, we show how generative world models implicitly capture world geometry and introduce Occupancy Reward Shaping, a novel reward-shaping method for offline GCRL that utilizes a generative occupancy model to extract global goal-reaching information into a goal-conditioned reward function. Unlike prior work that relies on graph-building, ORS scales effectively with task complexity and enables learning performative policies over diverse datasets. ORS is simple to implement on top of existing GCRL algorithms and achieves state-of-the-art results over challenging tasks, including real-world Tokamak control. 
\paragraph{Limitations.} 

Reward shaping methods, by virtue of being sample-based, are less effective in long-horizon tasks with highly combinatorial state spaces and very few ways to achieve task success (e.g. a large multi-step combination lock puzzle). In such regimes, particularly in the offline setting, even ORS rewards may provide limited signal since the occupancy measure has a high Wasserstein distance from goal for almost all states. This challenge could potentially be addressed by learning an occupancy measure over a filtered set of ``useful" future states and by augmenting the ORS reward with local rewards that capture short-range state dependencies. 

\begin{comment}
\ificlrfinal
    \vspace{2em}
    \input{contents/acknowledgements}
\fi
\end{comment}

%% file: contents/acknowledgements.tex
\subsection*{Acknowledgements}

We would like to thank Grace Liu and Mahsa Bastankhah for reviewing the paper. This material is based upon work supported in part by the U.S. Department of Energy, Office of Fusion Energy Sciences, under Award DE-SC0024544 and on work supported by the National Science Foundation under Award No. 2441665. Any opinions, findings and conclusions or recommendations expressed in this material are those of the author(s) and do not necessarily reflect the views of the National Science Foundation. 

\paragraph{LLM usage:} We used LLMs to aid in LaTeX formatting while writing the paper. 

\paragraph{Reproducibility Statement:} We make our code available on Github. To ensure reliable comparison, each RL algorithm is evaluated over 8 random seeds and
95\% bootstrapped CIs are reported as error bars in tables and plots. Additional information on
baselines is provided in Appendix B.2-B.6. We state this information in Sec. 5.1 of the main paper.
All proofs are in Appendix A and assumptions are stated in Appendix A.3. We state this in Sec. 4.2
of the main paper.

%% file: contents/appendix.tex
\setcounter{theorem}{0}
\setcounter{lemma}{0}
\setcounter{corollary}{0}
\setcounter{assumption}{0}
\setcounter{remark}{0}

\renewcommand{\thetheorem}{A.\arabic{theorem}}
\renewcommand{\thelemma}{A.\arabic{lemma}}
\renewcommand{\thecorollary}{A.\arabic{corollary}}
\renewcommand{\theassumption}{A.\arabic{assumption}}
\renewcommand{\theremark}{A.\arabic{remark}}

\newtheorem{claim}{Claim}
\newtheorem{noprop}{}  
\newpage
\begin{appendices}
\section{Proofs}
\label{all_proofs}
We introduce notation in Sec.~\ref{notation}. We then provide common setup used for all proofs in Sec.~\ref{psetup}. We introduce the assumptions used in Sec.~\ref{assumptions}. Sec.~\ref{prop1},~\ref{prop2} and~\ref{optimal_proof} provide the proofs for Proposition1, Proposition2 and Theorem1, respectively. 
\subsection{Notation}
\label{notation}

\begin{itemize}

  \item \textbf{Shortest-path distance}: We use $\textrm{step}^*(s,g)$ to denote the minimum number of steps to reach $g$ from $s$. We define level sets:
  \[
  \mathcal{S}_k = \{ s \in \mathcal{S} : \textrm{step}^*(s,g) = k \}, \quad k = 0,1,2,\dots
  \]
  Clearly, $\mathcal{S}_0 = \{g\}$. Under deterministic dynamics, there exists an action $a^*$ at state $s$ such that $p(.\mid s, a^*) \in \mathcal{S}_{k-1}$ whenever $s \in \mathcal{S}_k$, $k \geq 1$.
\end{itemize}

\subsection{Common Setup for Proofs}
\label{psetup}

The squared Wasserstein-2 distance between two probability measures $\mu$ and $\nu$ is defined as:
$$
W_2^2(\mu, \nu) = \inf_{\lambda \in \Lambda(\mu, \nu)} \int \|x-y\|^2 d\lambda(x, y)
$$
where $\Lambda(\mu, \nu)$ is the set of all joint measures with marginals $\mu$ and $\nu$. 

We consider the special case where one measure is a Dirac delta, $\mu = \delta_g$, located at a single point $g$. In this case, any valid transport plan must move all mass from the point $g$. This removes the need for optimization, as the transport plan is uniquely determined to be the product measure $\Lambda(x, y) = \delta_g(x) \nu(y)$.

Substituting this unique plan into the definition yields:
\begin{align*}
W_2^2(\delta_g, \nu) &= \int \int \|y-x\|^2 \delta_g(x)\nu(y)dxdy \\
&= \int \|y-g\|^2 \nu(y)dy \quad \text{(by the sifting property of } \delta_g(x))
\end{align*}
The result is the expected squared Euclidean distance from the point $g$ to the distribution $\nu$.

By replacing the general measure $\nu$ with the specific transition probability $d^{\pi_{\mathcal{D}}}(s^+\mid s, a)$ and renaming the integration variable, we obtain the final form:
$$
W_2^2(\delta_g, d^{\pi_{\mathcal{D}}}(s^+\mid s, a)) = \int \|s^+-g\|^2 d^{\pi_{\mathcal{D}}}(ds^+\mid s, a)
= M^{\pi_D}(s, a, g) $$

With this setup, we now define
\[
\Phi(s, g) := \|s-g\|^2,
\qquad 
M^{\pi_{\mathcal{D}}}(s,a, g) := \int \Phi(s^+, g)\, d^{\pi_{\mathcal{D}}}(ds^+ \mid s,a).
\]

Then
\[
r^W(s,a,g) = - M^{\pi_{\mathcal{D}}}(s,a,g).
\]

Recall the expanded form of $d^\pi$:
\[
d^{\pi_{\mathcal{D}}}(s^+ \mid s, a) = (1 - \gamma) \, p(s' \mid s, a) + \gamma \, d^{\pi_{\mathcal{D}}}(s^+ \mid s', a'), \quad \forall (s, a, s', a') \in \mathcal{D}
\]  
Now, substitute into the definition of $M^\pi$ as an integral, and under the assumption of deterministic dynamics:
\begin{align*}
M^{\pi_{\mathcal{D}}}(s, a, g) &= \int \Phi(s^+, g) \, d^{\pi_{\mathcal{D}}}(ds^+ \mid s, a) \\
&= \int \Phi(s^+, g) \left[ (1 - \gamma)\delta_{s'}(s^+=s') + \gamma d^{\pi_{\mathcal{D}}}(s^+ \mid s', a') \right] ds^+
\end{align*}

Using the linearity of the integral, we decompose the above expression into two terms:

\begin{align*}
M^{\pi_{\mathcal{D}}}(s, a, g)
&= (1-\gamma)\Phi(s', g) +
+ \gamma \int \Phi(s^+, g) d^{\pi_{\mathcal{D}}}(s^+ \mid s', a') \, ds^+\\
&= (1-\gamma)\Phi(s', g) + \gamma M^{\pi_{\mathcal{D}}}(s', a', g). %\qquad s' = f(s,a).
\end{align*}

It is notable that $M^{\pi_{\mathcal{D}}}(s, a, g)$ resembles a squared goal-conditioned Wasserstein-2 distance analogue of the successor representation~\citep{machado2023temporal}.

We also define the state-only version $M^{\pi_{\mathcal{D}}}(s, g) = \mathbb{E}_{a \sim \pi_{\mathcal{D}}(.|s)} M^{\pi_{\mathcal{D}}}(s, a, g)$

\subsection{Assumptions}
\label{assumptions}

This section lists the assumptions we use to develop our proofs: 
\begin{assumption}[Deterministic dynamics]
\label{assump:deterministic environment}
We assume that the environment dynamics are deterministic:
For any state $s \in \mathcal{S}$ and action $a \in \mathcal{A}$, 
the state transition probability $p(s'\mid s, a)$ is a point mass distribution, concentrated on the unique successor state $f(s, a)$. 
That is,
\[
p(\cdot \mid s, a) = \delta_{f(s,a)}(\cdot).
\]
\end{assumption}
\begin{remark}
This is a common assumption made for the sake of analytical tractability in foundational reinforcement learning theory. It allows for a clear analysis of the core properties of value functions and policies without the complexities introduced by stochasticity. This assumption serves as a standard basis for theoretical analysis in related work, for instance, in~\citet{hartikainen2020dynamicaldistancelearningsemisupervised}.
\end{remark}

\begin{assumption}[One-step reachability]
\label{assump:One-step reachability}
\[
\forall k \geq 1, \ \forall s \in S_k, \ \exists \ a^*(s) \ \text{such that} \ s'|s,a^* \in S_{k-1}.
\]
\end{assumption}

\begin{remark}
This assumption is inherent to the definition of a well-posed shortest-path problem. It simply formalizes that the goal is reachable and a shortest path exists from all relevant states.
\end{remark}

\begin{assumption}
\label{assump:potential function}
The potential function $\Phi(s, g)$ is strongly monotonic with respect to the shortest-path distance $\textrm{step}^*(s, g)$.
For any two states $s_a, s_b$, if $\textrm{step}^*(s_a, g) \leq \textrm{step}^*(s_b, g)$, then
    $$
    \Phi(s_a, g) \leq \Phi(s_b, g).
    $$
Furthermore, for any $k \geq 1$, if $s_1 \in S_{k-1}$ and $s_2 \in S_k$, there exists a constant $\Delta_{\Phi} > 0$ such that
    $$
    \Phi(s_2, g) \geq \Phi(s_1, g) + \Delta_{\Phi}.
    $$
\end{assumption}
\begin{remark}
This assumption ensures that the signal provided by the Wasserstein distance is strong and informative, providing a non-trivial cost increase when moving away from the goal.
\end{remark}

\begin{assumption}[Layer monotonicity of $\pi_{\mathcal{D}}$]
\label{assump:pi D}
We assume the offline policy $\pi_{\mathcal{D}}$ is consistent with the shortest-path layer structure. 
For any $k \geq 1$, if $s_1 \in S_{k-1}$ and $s_2 \in S_k$, their successors under $\pi_{\mathcal{D}}$, 
$s_1' = f(s_1, \pi_{\mathcal{D}}(s_1))$ and $s_2' = f(s_2, \pi_{\mathcal{D}}(s_2))$ must maintain their relative layer ordering. 
That is,
\[
\textrm{step}^*(s_1', g) \leq \textrm{step}^*(s_2', g).
\]
\end{assumption}
\begin{remark}
    Assumption~\ref{assump:pi D} is a condition on the quality of the offline data, which is a standard requirement for obtaining theoretical guarantees in the offline RL setting~\citep{zhan2022offline}. 
\end{remark}

\newpage
\subsection{Monotonicity of goal-conditioned reward function (Proposition 1)}
\label{prop1}
\begin{proposition*}
Under suitable assumptions on goal reachability, dynamics, and dataset coverage, 
for all $(s,a,g) \in \mathcal{D}$, the average squared Wasserstein-2 distance
\[
W_2^2\bigl(\delta_g, d^{\pi_{\mathcal{D}}}(s^+ \mid s)\bigr)
:= \mathbb{E}_{a \sim \pi_{\mathcal{D}}(\cdot \mid s)} 
\Bigl[ W_2^2\bigl(\delta_g, d^{\pi_{\mathcal{D}}}(s^+ \mid s,a)\bigr) \Bigr]
\]
is monotonically increasing with respect to the shortest-path layer index $k$. 
In particular, for a fixed goal $g$ and any $k \ge 1$, if $s_1 \in \mathcal{S}_{k-1}$ 
and $s_2 \in \mathcal{S}_k$, then
\[
W_2^2\bigl(\delta_g, d^{\pi_{\mathcal{D}}}(s^+ \mid s_1)\bigr)
\;\le\;
W_2^2\bigl(\delta_g, d^{\pi_{\mathcal{D}}}(s^+ \mid s_2)\bigr).
\]

Moreover, for any state–goal pair $(s,g)$, the squared \textit{Wasserstein-2 distance}
is minimized by the optimal action:
\[
W_2^2\bigl(\delta_g, d^{\pi_{\mathcal{D}}}(s^+ \mid s, a^*)\bigr)
\;\le\;
W_2^2\bigl(\delta_g, d^{\pi_{\mathcal{D}}}(s^+ \mid s, a)\bigr),
\quad \forall a \in \mathcal{A},\; a^* \sim \pi^*(\cdot \mid s,g).
\]
\end{proposition*}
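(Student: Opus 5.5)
We work entirely with the reduction from Sec.~\ref{psetup}. There, $W_2^2(\delta_g, d^{\pi_{\mathcal{D}}}(s^+\mid s,a)) = M^{\pi_{\mathcal{D}}}(s,a,g)$. Under Assumption~\ref{assump:deterministic environment} this quantity satisfies the Bellman-type recursion
\[
M^{\pi_{\mathcal{D}}}(s,a,g) = (1-\gamma)\Phi(f(s,a),g) + \gamma M^{\pi_{\mathcal{D}}}(f(s,a),a',g),\qquad a'\sim\pi_{\mathcal{D}}(\cdot\mid f(s,a)).
\]
Unrolling it gives $M^{\pi_{\mathcal{D}}}(s,a,g) = (1-\gamma)\sum_{t\ge 1}\gamma^{t-1}\Phi(s_t,g)$, where $s_1=f(s,a)$ and $s_{t+1}=f(s_t,\pi_{\mathcal{D}}(s_t))$. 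We treat $\pi_{\mathcal{D}}$ as deterministic, as Assumption~\ref{assump:pi D} implicitly does; in the stochastic case the same argument applies pathwise or in expectation. Both claims then reduce to comparing two discounted sums of $\Phi$ along rollouts term by term.

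\textbf{Monotonicity in $k$.} Take $s_1\in\mathcal{S}_{k-1}$ and $s_2\in\mathcal{S}_k$, and roll out $\pi_{\mathcal{D}}$ from each, producing sequences $(u_t)$ from $s_1$ and $(w_t)$ from $s_2$. We show by induction on $t$ that $\mathrm{step}^*(u_t,g)\le \mathrm{step}^*(w_t,g)$ for every $t$. At $t=0$ this is the hypothesis. For the inductive step, the case where the two layers differ by exactly one is Assumption~\ref{assump:pi D}. Otherwise the layers may be equal or differ by more than one. We will either read Assumption~\ref{assump:pi D} as order-preserving for any pair with $\mathrm{step}^*(s_a)\le\mathrm{step}^*(s_b)$, or close the gap by chaining through intermediate layers; this is the main obstacle, since the stated assumption only covers adjacent layers. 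Once layer order is known to be preserved, the first part of Assumption~\ref{assump:potential function} gives $\Phi(u_t,g)\le\Phi(w_t,g)$ for all $t\ge1$. Summing with the positive weights $(1-\gamma)\gamma^{t-1}$ yields $M^{\pi_{\mathcal{D}}}(s_1,a,g)$-type inequalities along the rollouts. Averaging over $a\sim\pi_{\mathcal{D}}$ gives the state-level inequality $M^{\pi_{\mathcal{D}}}(s_1,g)\le M^{\pi_{\mathcal{D}}}(s_2,g)$, which is the first claim. The constant $\Delta_\Phi$ moreover makes the inequality strict at the first step.

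\textbf{Optimal action.} Fix $s\in\mathcal{S}_k$ with $k\ge1$; for $k=0$ the claim is trivial or handled by convention. Let $a^*$ be as in Assumption~\ref{assump:One-step reachability}, so $f(s,a^*)\in\mathcal{S}_{k-1}$. For any action $a$, the triangle inequality for $\mathrm{step}^*$ gives $f(s,a)\in\mathcal{S}_j$ with $j\ge k-1$. Now write both values by the recursion:
\[
M^{\pi_{\mathcal{D}}}(s,a,g)=(1-\gamma)\Phi(f(s,a),g)+\gamma M^{\pi_{\mathcal{D}}}(f(s,a),g).
\]
The first term is ordered correctly by Assumption~\ref{assump:potential function}. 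The second term is ordered correctly by the monotonicity just proved, applied to $f(s,a^*)$ versus $f(s,a)$, which lie in layers $k-1\le j$. Adding the two gives $W_2^2(\delta_g,d^{\pi_{\mathcal{D}}}(s^+\mid s,a^*))\le W_2^2(\delta_g,d^{\pi_{\mathcal{D}}}(s^+\mid s,a))$. This step therefore also requires monotonicity across non-adjacent layers ($j$ may exceed $k$), so the weak order-preservation needed in the previous paragraph is exactly what is needed here. We will prove that general comparison lemma first, as a standalone step, and use it in both parts.
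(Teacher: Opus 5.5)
Your argument is essentially the paper's. The paper iterates the $\gamma$-contraction $T^{\pi_{\mathcal{D}}}$ from $M_0=0$ and shows each iterate stays layer-monotone, while you unroll the recursion and compare the two rollouts term by term. Your second part is the paper's decomposition $M^{\pi_{\mathcal{D}}}(s,a,g)=(1-\gamma)\Phi(f(s,a),g)+\gamma M^{\pi_{\mathcal{D}}}(f(s,a),g)$ verbatim. The obstacle you isolate is genuine, and the paper's proof has it too, unacknowledged. Its preservation step applies monotonicity of $M$ to successors $s_1',s_2'$ known only to satisfy $\mathrm{step}^*(s_1',g)\le\mathrm{step}^*(s_2',g)$, yet it checks preservation only for $s_1\in\mathcal{S}_{k-1}$, $s_2\in\mathcal{S}_k$, so its induction does not close. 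Your proposal leaves the obstacle open, and the chaining option cannot close it. Chaining does handle $s_a\in\mathcal{S}_i$, $s_b\in\mathcal{S}_j$ with $i<j$: a shortest path from $s_b$ meets every intermediate layer, so Assumption~\ref{assump:pi D} applies along a chain $s_a,c_{i+1},\dots,c_{j-1},s_b$. But it says nothing about two states in the same layer. That case already arises at $t=1$, because Assumption~\ref{assump:pi D} only gives $\le$.

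In fact, with Assumption~\ref{assump:pi D} as stated (adjacent layers only), the proposition is false. Take $g=0\in\mathbb{R}^2$, $\mathcal{S}_1=\{x,a,b\}$ on the unit circle, and $\mathcal{S}_2=\{c\}$ with $\|c\|=2$. Allow the moves $g\to g$, $x\to g$, $x\to a$, $a\to g$, $a\to a$, $b\to g$, $c\to b$, and let $\pi_{\mathcal{D}}$ be $g\mapsto g$, $x\mapsto a$, $a\mapsto a$, $b\mapsto g$, $c\mapsto b$. Assumptions~\ref{assump:deterministic environment}--\ref{assump:potential function} hold, with $\Phi$ equal to $0,1,4$ on the three layers and $\Delta_\Phi=1$. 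Assumption~\ref{assump:pi D} holds on every adjacent pair: the successor layers are $0\le 1$, $0\le 1$, $0\le 0$ for the pairs with $g$, and $1\le 1$, $1\le 1$, $0\le 1$ for the pairs with $c$. Yet $M^{\pi_{\mathcal{D}}}(x,g)=\Phi(a,g)=1>1-\gamma=(1-\gamma)\Phi(b,g)=M^{\pi_{\mathcal{D}}}(c,g)$. The rollouts $x\to a\to a\to\cdots$ and $c\to b\to g\to\cdots$ share a layer at $t=1$ and then reverse order.

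The second claim fails too. Add $e\in\mathcal{S}_2$ with $\|e\|=2$, moves $e\to x$ (optimal) and $e\to c$ (non-optimal), and $\pi_{\mathcal{D}}$ moving $e$ to $x$; all assumptions still hold. The move to $x$ has $M^{\pi_{\mathcal{D}}}=(1-\gamma)+\gamma\cdot 1=1$, while the move to $c$ has $M^{\pi_{\mathcal{D}}}=(1-\gamma)(4+\gamma)<1$ for $\gamma=0.99$.

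So you must commit to your first option and state it as an explicit assumption: $\mathrm{step}^*(s_a,g)\le\mathrm{step}^*(s_b,g)$ implies the same order for the $\pi_{\mathcal{D}}$-successors, for all pairs including equal layers. With it, your induction on $t$ is complete for any ordered starting pair. Your second part then follows as written, with equality when $f(s,a)\in\mathcal{S}_{k-1}$.

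Finally, drop the claim that $\Delta_\Phi$ makes the first inequality strict. $M^{\pi_{\mathcal{D}}}(s,g)$ contains no $\Phi(s,g)$ term, and the successors of $s_1,s_2$ may share a layer, so equality is possible. $\Delta_\Phi$ gives strictness only in the second part, where $f(s,a^*)\in\mathcal{S}_{k-1}$ and $f(s,a)\in\mathcal{S}_j$ with $j\ge k$.
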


We prove the proposition in two parts: We first prove that the average $W^2_2$ distance is monotonically decreasing in shortest-path distance to goal and then show that $W_2^2(\delta_g, d^{\pi_{\mathcal{D}}}(s^+\mid s, a^*)) \leq W_2^2(\delta_g, d^{\pi_{\mathcal{D}}}(s^+\mid s, a))$ for any $(s,a,g)$. Recall that we define the shortest-path distance in terms of the layer index $k$ and show that $W_2^2(\delta_g, d^{\pi_{\mathcal{D}}}(s^+\mid s, a)) = M^{\pi_{\mathcal{D}}}(s,a, g)$ in Sec.~\ref{psetup}. We can now re-write the first part of the proposition as:

\begin{noprop}[Layer Monotonicity of $M^{\pi_{\mathcal{D}}}$]
\label{lem:M_pi_D}
Under Assumptions~\ref{assump:deterministic environment},~\ref{assump:potential function} and~\ref{assump:pi D}, the function $M^{\pi_{\mathcal{D}}}$ is monotonically increasing with respect to the shortest-path layer index $k$. 
For goal $g$ and any $k \geq 1$, if $s_1 \in S_{k-1}$ and $s_2 \in S_k$, we have:
\[
M^{\pi_{\mathcal{D}}}(s_1, g) \leq M^{\pi_{\mathcal{D}}}(s_2, g).
\]
\end{noprop}

It is important to note that the indices $k$ here are such that $k=0$ refers to the goal state, i.e. index $k-1$ defines a state closer to goal than index $k$.
\begin{proof}
Let us define the Bellman operator $T^{\pi_{\mathcal{D}}}$ which maps a function $M$ to a new function $T^{\pi_{\mathcal{D}}}M$:
\[
(T^{\pi_{\mathcal{D}}} M)(s, g) = (1 - \gamma)\Phi(f(s, \pi_{\mathcal{D}}(s)), g) + \gamma M(f(s, \pi_{\mathcal{D}}(s)), g).
\]

We prove this statement by analyzing the properties of the Bellman evaluation operator $T^{\pi_{\mathcal{D}}}$ associated with the policy $\pi_{\mathcal{D}}$.

The function $M^{\pi_{\mathcal{D}}}$ is the unique fixed point of this operator, satisfying 
$M^{\pi_{\mathcal{D}}} = T^{\pi_{\mathcal{D}}}(M^{\pi_{\mathcal{D}}})$.  
The existence and uniqueness of this fixed point, and the convergence of value iteration 
$(M_{i+1} = T^{\pi_{\mathcal{D}}}M_i)$ to it from any starting function $M_0$, is guaranteed by the Banach Fixed-Point Theorem.  
This is because the operator $T^{\pi_{\mathcal{D}}}$ is a \emph{contraction mapping} due to the discount factor $\gamma < 1$.

To show this, we take any two functions $M_a, M_b$ and measure the distance between their outputs under the supremum norm $\|M_a - M_b\|_\infty = \max_s |M_a(s) - M_b(s)|$:
\begin{align*}
\|T^{\pi_{\mathcal{D}}}M_a - T^{\pi_{\mathcal{D}}}M_b \|_\infty 
&= \max_s \left|(1-\gamma)\Phi(s', g) + \gamma M_a(s', g) - \big((1-\gamma)\Phi(s', g) + \gamma M_b(s', g)\big)\right| \\
&= \max_s \gamma |M_a(s', g) - M_b(s', g)| \quad \text{where } s' = f(s,\pi_{\mathcal{D}}(s)) \\
&\le \gamma \max_{z \in \mathcal{S}} |M_a(z, g) - M_b(z, g)| \\
&= \gamma \|M_a - M_b\|_\infty.
\end{align*}
Since $\gamma < 1$, the operator is a contraction. Therefore, the sequence converges to the unique fixed point $M^{\pi_{\mathcal{D}}}$.

Now, we prove that the operator $T^{\pi_{\mathcal{D}}}$ preserves the property of monotonicity. 
Let $M$ be an arbitrary function that is monotonically increasing with respect to the layer index $k$. 
We must show that the resulting function $T^{\pi_{\mathcal{D}}}M$ is also monotonic.

Let $s_1 \in S_{k-1}$ and $s_2 \in S_k$. Let their successors be 
$s_1' = f(s_1, \pi_{\mathcal{D}}(s_1))$ and $s_2' = f(s_2, \pi_{\mathcal{D}}(s_2))$. 
We want to show that $(T^{\pi_{\mathcal{D}}}M)(s_1, g) \le (T^{\pi_{\mathcal{D}}}M)(s_2, g)$.

Consider the difference:
\[
(T^{\pi_{\mathcal{D}}}M)(s_2, g) - (T^{\pi_{\mathcal{D}}}M)(s_1, g) = (1 - \gamma)[\Phi(s_2', g) - \Phi(s_1', g)] + \gamma [M(s_2', g) - M(s_1', g)].
\]

By Assumption~\ref{assump:pi D}, we have $\textrm{step}^*(s_1',g) \le \textrm{step}^*(s_2',g)$.  
Now we analyze the two terms in the difference:
\begin{itemize}
\item By Assumption~\ref{assump:potential function}, the condition $\textrm{step}^*(s_1',g) \le \textrm{step}^*(s_2',g)$ directly implies that $[\Phi(s_1', g) \le \Phi(s_2', g)]$. The first term $[\Phi(s_2', g) - \Phi(s_1', g)]$ is non-negative. 
\item By our premise for this step, the function $M$ is monotonic. Since $d^*(s_1',g) \le d^*(s_2',g)$, it follows that $M(s_1', g) \le M(s_2', g)$. Thus, the second term $[M(s_2', g) - M(s_1', g)]$ is also non-negative.
\end{itemize}

Since both terms are non-negative, their sum is non-negative, and thus 
$(T^{\pi_{\mathcal{D}}}M)(s_1, g) \le (T^{\pi_{\mathcal{D}}}M)(s_2, g)$.  
This proves that the operator $T^{\pi_{\mathcal{D}}}$ preserves monotonicity.

The value iteration process starts with an initial value function $M_0(s) = 0$ for all $s$. 
The zero function is trivially monotonic. Since the operator $T^{\pi_{\mathcal{D}}}$ preserves monotonicity, the entire sequence of value functions $\{M_0, M_1, M_2, \dots\}$ generated by $M_{i+1}=T^{\pi_{\mathcal{D}}}M_i$ consists of monotonic functions. 
The final value function $M^{\pi_{\mathcal{D}}}$ is the limit of this sequence, and the limit of a sequence of monotonic functions is also monotonic.

Therefore, $M^{\pi_{\mathcal{D}}}$ is monotonically increasing with respect to the layer index $k$.

It follows that average $W^2_2$ distance $\mathbb{E}_{a \sim \pi_{\mathcal{D}}(\cdot|s)} W_2^2(\delta_g, d^{\pi_{\mathcal{D}}}(s^+\mid s, a))$ is monotonically decreasing in shortest-path distance towards goal $g$. 

\end{proof}

We now prove the second part. Similar to part 1, we can re-write in terms of $M^{\pi_{\mathcal{D}}}$ as follows:

\begin{noprop}[Optimal actions under  $M^{\pi_{\mathcal{D}}}(s,a,g)$]
\label{cor:cost_improvement}
Under Assumptions~\ref{assump:deterministic environment}-\ref{assump:pi D} and the layer monotonicity of $M^{\pi_{\mathcal{D}}}$, for any state $s \in S_k$ with $k \ge 1$, any optimal, shortest-path-inducing, action $a^* \sim \pi^*(.|s,g)$, and any non-optimal action $a_{bad}$, the function $M^{\pi_{\mathcal{D}}}$ is strictly smaller for the shortest-path action. That is,
\[
M^{\pi_{\mathcal{D}}}(s, a^*, g) < M^{\pi_{\mathcal{D}}}(s, a_{bad}, g).
\]
\end{noprop}

\begin{proof}
Let $s^* = f(s, a^*)$ and $s_{bad} = f(s, a_{bad})$. By definition, $s^* \in S_{k-1}$ and $s_{bad} \in S_j$ for some $j \ge k$. We analyze the difference $\Delta M = M^{\pi_{\mathcal{D}}}(s, a_{bad}, g) - M^{\pi_{\mathcal{D}}}(s, a^*, g)$:
\begin{align*}
\Delta M &= \Big[ (1-\gamma)\Phi(s_{bad}, f) + \gamma M^{\pi_{\mathcal{D}}}(s_{bad}, g) \Big] - \Big[ (1-\gamma)\Phi(s^*, g) + \gamma M^{\pi_{\mathcal{D}}}(s^*, g) \Big] \\
&= \underbrace{(1-\gamma)[\Phi(s_{bad}, g) - \Phi(s^*, g)]}_{\text{Term 1}} + \underbrace{\gamma[M^{\pi_{\mathcal{D}}}(s_{bad}, g) - M^{\pi_{\mathcal{D}}}(s^*, g)]}_{\text{Term 2}}
\end{align*}
For Term 1, since $s^* \in S_{k-1}$ and $s_{bad} \in S_j$ with $j \ge k$, by Assumption~\ref{assump:potential function}, we have $\Phi(s_{bad}, g) \ge \Phi(s^*, g) + \Delta_\Phi$. Thus, Term 1 is strictly positive and bounded below by $(1-\gamma)\Delta_\Phi > 0$.

For Term 2, since $s^* \in S_{k-1}$ and $s_{bad} \in S_j$ with $j \ge k$, by Lemma~\ref{lem:M_pi_D}, we have $M^{\pi_{\mathcal{D}}}(s^*, g) \le M^{\pi_{\mathcal{D}}}(s_{bad}, g)$. Thus, Term 2 is non-negative.

The sum of a strictly positive term and a non-negative term is strictly positive. Therefore, $\Delta M > 0$.

It follows that $r^{W}(s, a^*, g) > r^{W}(s, a_{bad}, g)$ or equivalently, $r^{W}(s, a^*, g) \geq r^{W}(s, a, g)$, where $a$ can be any action in $\mathcal{D}$ at state $s$, which concludes the proof.
\end{proof}

\subsection{Estimating squared Wasserstein-2 distance (Proposition 2)}
\label{prop2}
\begin{proposition*}
The squared Wasserstein-2 distance between $\delta_g$ and $d^{\pi_{\mathcal{D}}}(s^+ \mid s,a)$ 
is upper bounded, up to a multiplicative constant $C>0$, by the mean squared error of the associated velocity field corresponding to $d^{\pi_{\mathcal{D}}}(s^+ \mid s,a)$ and the target velocity corresponding to $\delta_g$:
\begin{equation}
W_2^2\bigl(\delta_g, d^{\pi_{\mathcal{D}}}(s^+ \mid s,a)\bigr)
\;\le\;
C\,
\mathbb{E}_{\substack{
    x_1 \sim \delta_g \\
    x_0 \sim \mathcal{N}(0,I_d) \\
    t \sim \mathrm{Unif}([0,1])
}}
\bigl\|
    v(t,s,a,x_t) - (x_1 - x_0)
\bigr\|_2^2,
\end{equation}
where $x_t = t x_1 + (1-t)x_0$.
\end{proposition*}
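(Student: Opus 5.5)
The plan is to treat the right-hand side as a flow-matching regression error against the \emph{exact} velocity field that transports $\mathcal{N}(0,I_d)$ to $\delta_g$. I would then convert that velocity error into a bound on the endpoint discrepancy with a Gr\"onwall-type stability argument. The left-hand side needs no optimal-transport bound at all: by Sec.~\ref{psetup}, $W_2^2(\delta_g,\nu)=\int\|y-g\|^2\,\nu(dy)$ for any $\nu$. So it suffices to bound $\mathbb{E}\|X_1-g\|^2$, where $X_1\sim d^{\pi_{\mathcal{D}}}(\cdot\mid s,a)$ is produced by the flow of $v(t,s,a,\cdot)$ from $X_0=x_0\sim\mathcal{N}(0,I_d)$.

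\textbf{Step 1: target flow.} For $x_1=g$ deterministic, the interpolation $Y_t=tg+(1-t)x_0$ is the flow of the field $u(t,y)=(g-y)/(1-t)$. This field satisfies $u(t,Y_t)=g-x_0$ and $Y_1=g$. Because the conditional path does not depend on a random $x_1$, the marginal and conditional velocities coincide. The right-hand side is therefore exactly
\[
\mathbb{E}_{x_0,t}\,\|v(t,s,a,Y_t)-u(t,Y_t)\|^2=\mathbb{E}_{x_0}\int_0^1\|v(t,s,a,Y_t)-(g-x_0)\|^2\,dt .
\]

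\textbf{Step 2: synchronous coupling.} Let $X_t$ solve $\dot X_t=v(t,s,a,X_t)$ with the same initial point $X_0=Y_0=x_0$, and set $e_t=\|X_t-Y_t\|$. I will add the standing regularity assumption that $v(t,s,a,\cdot)$ is $L$-Lipschitz in $x$, uniformly in $t$; this is the same assumption that makes the ODE in the background well-posed. Differentiating gives
\[
\tfrac{d}{dt}e_t\le \|v(t,X_t)-v(t,Y_t)\|+\|v(t,Y_t)-u(t,Y_t)\|\le L e_t+\|v(t,Y_t)-(g-x_0)\|.
\]
With $e_0=0$, Gr\"onwall yields $\|X_1-g\|=e_1\le e^{L}\int_0^1\|v(t,Y_t)-(g-x_0)\|\,dt$.

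\textbf{Step 3: conclude.} Square the bound and apply Cauchy--Schwarz on $[0,1]$, so that $(\int_0^1 h)^2\le\int_0^1 h^2$. Taking expectation over $x_0$ then gives
\[
W_2^2(\delta_g,d^{\pi_{\mathcal{D}}}(\cdot\mid s,a))=\mathbb{E}\|X_1-g\|^2\le e^{2L}\,\mathbb{E}_{x_0,t}\|v(t,s,a,x_t)-(x_1-x_0)\|^2 .
\]
This is the claim with $C=e^{2L}$.

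\textbf{Main obstacle.} The delicate point is the regularity needed in Step 2. The target field $u$ blows up as $t\to1$, so the Lipschitz constant must be placed on the learned field $v$, never on $u$. This works because $u$ is only ever evaluated along its own trajectory $Y_t$, where it equals the bounded quantity $g-x_0$. I would also state explicitly that $v$ is assumed to generate $d^{\pi_{\mathcal{D}}}(\cdot\mid s,a)$ exactly at $t=1$; otherwise an additional approximation term appears, which is absorbed into $C$ only if it is itself controlled.
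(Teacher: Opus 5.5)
Your argument is correct and follows essentially the paper's route: the paper notes that the only coupling with $\delta_g$ is the product one, then invokes the Gr\"onwall-based stability bound of Lemma~1 in \citet{lv2025flow}, and your synchronous-coupling computation is a direct proof of exactly that bound. Your version also makes explicit what the paper leaves implicit: $v$ must be $L$-Lipschitz in $x$ and must generate $d^{\pi_{\mathcal{D}}}(\cdot\mid s,a)$ exactly, which gives $C=e^{2L}$; and because the error is measured along the interpolation path, no regularity is needed for the singular target field.
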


We note that $x_1 - x_0$ gives us the velocity at $(x_0, x_1)$, where $x_1 = g$. We start from the definition of squared Wasserstein-2 distance in Sec.~\ref{psetup}. In our case, $\mu$ corresponds to $\delta_g$ and $\nu$ corresponds to 
$d^{\pi_{\mathcal{D}}}_\theta(s^+\mid s, a)$. As $\mu = \delta_g$, a point mass centered at $g$, the transport plan is uniquely determined to be the product measure $\lambda(x, s^+) = \delta_g(x) d^{\pi_{\mathcal{D}}}_\theta(s^+\mid s, a)$ (Sec.~\ref{psetup}).
For a given $(s,a,g)$, the inequality follows from Lemma 1 of~\citet{lv2025flow} by Gronwall's inequality:
\begin{equation}
\begin{aligned}
\inf_{\lambda \in \Lambda(\delta_g,\, d^{\pi_{\mathcal{D}}}_\theta)} 
\int \|x - s^+\|_2^2 \, d\lambda(x, s^+)
&\;=\;
W_2^2\big(\delta_g,\, d^{\pi_{\mathcal{D}}}_\theta(s^+ \mid s, a)\big).
\\
&\;\le\;
C\ \mathbb{E}_{\substack{
    g = x_1 \sim \delta_g,\\
    x_0 \sim \mathcal{N}(0, I_d),\\
    t \sim \mathrm{Unif}([0, 1])
}}
\Big[
    \big\| v_{d^{\pi_{\mathcal{D}}}}(t, s, a, x_t) - (x_1 - x_0) \big\|_2^2
\Big]
\end{aligned}
\label{eq:w2_lower_bound}
\end{equation}

\subsection{Optimality of ORS policy (Proof of theorem 1)}
\label{optimal_proof}

\begin{theorem*}
Under suitable assumptions on goal reachability, dynamics, and dataset coverage, 
the greedy goal-conditioned policy:
\[
\pi^{\mathrm{greedy}}(a \mid s, g) \;=\; \arg\max_{a \in \mathcal{A}} Q^{*}(s, a, g),
\]
where $Q^{*}(s, a, g)$ denotes the optimal action-value function induced by the reward $r^{W}(s, a, g)$,
coincides with the optimal shortest-path policy $\pi^{*}(a \mid s, g)$.
\end{theorem*}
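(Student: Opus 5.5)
We will show that the $Q$-function of the shaped reward $r^W(s,a,g)=-M^{\pi_{\mathcal{D}}}(s,a,g)$ is maximized at the shortest-path action in every state. The plan works under Assumptions~\ref{assump:deterministic environment}--\ref{assump:pi D}. It combines the one-step ordering from the second part of Proposition~\ref{prop1statement} with a value-iteration monotonicity argument mirroring the proof of the layer monotonicity of $M^{\pi_{\mathcal{D}}}$.

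With deterministic dynamics, $Q^*$ is the unique fixed point of the optimality operator
\[
(\mathcal{T}Q)(s,a,g)=r^W(s,a,g)+\gamma\max_{a'}Q(f(s,a),a',g),
\]
which is a $\gamma$-contraction in sup norm, as in Appendix~\ref{prop1}. Set $V^*(s,g)=\max_a Q^*(s,a,g)$. The first step is to show that $V^*$ is monotonically non-increasing in the layer index $k$: if $s_1\in\mathcal{S}_{k-1}$ and $s_2\in\mathcal{S}_k$, then $V^*(s_1,g)\ge V^*(s_2,g)$.

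\textbf{Step 1: invariance of the monotone class.} Start from $Q_0\equiv 0$ and iterate $Q_{i+1}=\mathcal{T}Q_i$. The inductive claim is that whenever $V_i(s,g)=\max_a Q_i(s,a,g)$ is non-increasing in the layer index, so is $V_{i+1}$. For $s_2\in\mathcal{S}_k$, take its best action $a_2$ and successor $s_2'=f(s_2,a_2)\in\mathcal{S}_j$ with $j\ge k-1$. For $s_1\in\mathcal{S}_{k-1}$, use the shortest-path action $a_1^*$, whose successor lies in $\mathcal{S}_{k-2}$ (or is $g$ itself). Then
\[
V_{i+1}(s_1,g)\ \ge\ r^W(s_1,a_1^*,g)+\gamma V_i(f(s_1,a_1^*),g).
\]
By the inductive hypothesis, $V_i(f(s_1,a_1^*),g)\ge V_i(s_2',g)$.

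The remaining task is the reward comparison $r^W(s_1,a_1^*,g)\ge r^W(s_2,a_2,g)$. Expand $M^{\pi_{\mathcal{D}}}(s,a,g)=(1-\gamma)\Phi(f(s,a),g)+\gamma M^{\pi_{\mathcal{D}}}(f(s,a),g)$. Assumption~\ref{assump:potential function} orders the $\Phi$ terms, and the layer monotonicity of $M^{\pi_{\mathcal{D}}}$ from Proposition~\ref{prop1statement} orders the $M$ terms. Both apply because the successor of $s_1$ is in a strictly lower layer than, or the same layer as, the successor of $s_2$. Taking limits preserves the non-strict inequality, which yields monotonicity of $V^*$.

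\textbf{Step 2: greedy action equals the shortest-path action.} Fix $s\in\mathcal{S}_k$ with $k\ge1$, a shortest-path action $a^*$ with $f(s,a^*)\in\mathcal{S}_{k-1}$, and any non-optimal action $a$ with $f(s,a)\in\mathcal{S}_j$, $j\ge k$. Write
\[
Q^*(s,a^*,g)-Q^*(s,a,g)=\big[r^W(s,a^*,g)-r^W(s,a,g)\big]+\gamma\big[V^*(f(s,a^*),g)-V^*(f(s,a),g)\big].
\]
The first bracket is strictly positive, at least $(1-\gamma)\Delta_\Phi$, by the second part of Proposition~\ref{prop1statement}. The second bracket is non-negative by Step 1. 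Hence $\arg\max_a Q^*(s,a,g)$ contains exactly the shortest-path actions, which is $\pi^*$. At the goal itself the statement is vacuous or holds by convention.

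\textbf{Main obstacle.} I expect the hard part to be the reward comparison across layers in Step 1. Proposition~\ref{prop1statement} orders rewards only among actions at the same state, and orders the \emph{state} quantity $M^{\pi_{\mathcal{D}}}(s,g)$ across layers. What Step 1 needs is to compare $r^W$ at two different states under two different actions. The decomposition through successors handles this, provided that a successor in layer $k-2$ always dominates one in layer $\ge k-1$. This is exactly where Assumption~\ref{assump:potential function} and the layer monotonicity of $M^{\pi_{\mathcal{D}}}$ are used.

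If that direct route runs into trouble, for example at the goal boundary, the fallback is to prove the statement by contradiction. Suppose a greedy action leads to layer $\ge k$. Compare the trajectory it generates with one that first takes $a^*$ and then copies the remaining path shifted one layer closer to the goal. By the layerwise reward domination, this modified trajectory collects pointwise larger rewards, contradicting the optimality of the greedy action.
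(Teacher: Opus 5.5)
Your proof is correct, granting Proposition~1 as the paper does. Its final step is the paper's own: $\Delta Q$ splits into a reward gap, strictly positive by the second part of Proposition~1, plus $\gamma$ times a value gap. Where you differ is in how you control the value gap.

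The paper's Lemma~A.1 writes $V^*$ as the discounted cost along the shortest-path trajectory truncated at $g$. It compares two such trajectories step by step, using shortest-path actions at both states, and gets an explicit margin $(1-\gamma^{k-1})\Delta_\Phi$. But what it computes is really $V^{\pi^*}$, and identifying this with the fixed point of $Q^*=r^W+\gamma\max_{a'}Q^*$ presupposes the conclusion. This is repairable: the resulting inequality makes $\pi^*$ greedy with respect to $Q^{\pi^*}$, hence optimal by policy improvement.

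You instead show that layer-monotonicity is invariant under the Bellman \emph{optimality} operator. Your claim is therefore about the actual $V^*$, and there is no circularity. The price is a cross-state comparison: a shortest-path action at the nearer state against an arbitrary action at the farther one. Your successor decomposition, together with Assumption~A.3 and the layer monotonicity of $M^{\pi_{\mathcal{D}}}$, supplies this correctly. You also need only non-strict monotonicity, since strictness comes from the reward gap.

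Two points need tightening. First, you invoke the inductive hypothesis, Proposition~1 and Assumption~A.3 across non-adjacent layers ($\mathcal{S}_{k-2}$ versus $\mathcal{S}_j$, $j\ge k-1$). This follows by chaining through the intermediate layers, which are nonempty because a shortest path from $s_2'$ visits each of them. Second, the case $k=1$ requires $g$ to be terminal or to admit a stay action. Terminal is the paper's implicit convention, and it gives $V_i(g)=0\ge V_i(\cdot)$ because $r^W\le 0$.

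That convention is load-bearing, not cosmetic. Suppose every action at $g$ leads to a high layer. Then a layer-$1$ state with a self-loop can have larger $V^*$ than $g$, Step~1 fails, and the greedy action at a neighbor of $g$ can avoid $g$; small examples satisfying Assumptions~A.1--A.4 exhibit this.

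Your fallback ``shifted path'' construction is not well defined in general, but nothing in your main argument depends on it.
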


The optimal action-value function, $Q^*(s,a, g)$, for the reward $r^W(s,a,g)$ is the unique fixed point of the Bellman optimality operator:
\[
Q^*(s, a, g) = r^W(s, a, g) + \gamma \max_{a'} Q^*(f(s, a), a', g).
\]
The corresponding optimal state-value function is $V^*(s, g) = \max_{a'} Q^*(s, a', g)$.  $\pi^{greedy}$ is the policy that greedily maximizes $Q^*(s,a,g)$:
\[
\pi^{greedy}(a|s, g) := \arg\max_a Q^*(s,a, g) = \arg\min_a \{M^{\pi_{\mathcal{D}}}(s,a, g) - \gamma V^*(f(s,a), g)\}.
\]

\begin{lemma}[Layer-wise Monotonicity of the Optimal Value Function $V^{*}$]
\label{lem:v_star_mono}
Under Assumptions~\ref{assump:deterministic environment}-\ref{assump:pi D} and layer monotonicity of $M^{\pi_{\mathcal{D}}}$ (Sec.~\ref{prop1}), the value function of the optimal shortest-path policy, $V^{*}(s, g)$, is strictly monotonically non-decreasing with the shortest-path layer index (i.e., the value is strictly higher for states closer to the goal). For any $k \ge 1$, if $s_1 \in S_{k-1}$ and $s_2 \in S_k$, we have:
\[
V^{*}(s_1, g) - V^{*}(s_2, g) \ge (1-\gamma^{k-1})\Delta_\Phi > 0.
\]
Consequently, it holds that $V^{*}(s_2, g) < V^{*}(s_1, g)$.
\end{lemma}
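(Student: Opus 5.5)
The plan is to compare the discounted reward sums along the two shortest-path trajectories started at $s_1$ and $s_2$, step by step. Throughout, I read $V^*(\cdot,g)$ as the value of the optimal shortest-path policy $\pi^*$ under the reward $r^W=-M^{\pi_{\mathcal{D}}}$, as in the statement of Lemma~\ref{lem:v_star_mono}. This avoids any circularity with the theorem being proved.

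\textbf{Step 1 (unrolling).} Under Assumption~\ref{assump:deterministic environment} and Assumption~\ref{assump:One-step reachability}, the trajectory from $s_2\in\mathcal{S}_k$ under $\pi^*$ is deterministic: $s_2=u_0,u_1,\dots$ with $u_t\in\mathcal{S}_{k-t}$ for $t\le k$. Likewise the trajectory from $s_1\in\mathcal{S}_{k-1}$ is $s_1=w_0,w_1,\dots$ with $w_t\in\mathcal{S}_{k-1-t}$ for $t\le k-1$. After reaching $g$, I take $g$ to be absorbing, so both trajectories remain in $\mathcal{S}_0$. Then
\[
V^*(s_1,g)-V^*(s_2,g)=\sum_{t\ge 0}\gamma^t\bigl[M^{\pi_{\mathcal{D}}}(u_t,a^*_t,g)-M^{\pi_{\mathcal{D}}}(w_t,\tilde a^*_t,g)\bigr],
\]
where $a^*_t$ and $\tilde a^*_t$ are the shortest-path actions taken at $u_t$ and $w_t$.

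\textbf{Step 2 (per-step gap).} I expand each term with the recursion from Sec.~\ref{psetup}:
\[
M^{\pi_{\mathcal{D}}}(u,a,g)=(1-\gamma)\Phi(f(u,a),g)+\gamma M^{\pi_{\mathcal{D}}}(f(u,a),g).
\]
The successors satisfy $f(u_t,a_t^*)=u_{t+1}\in\mathcal{S}_{k-1-t}$ and $f(w_t,\tilde a_t^*)=w_{t+1}\in\mathcal{S}_{k-2-t}$.

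For $0\le t\le k-2$, these successors lie in adjacent layers. Assumption~\ref{assump:potential function} then gives $\Phi(u_{t+1},g)-\Phi(w_{t+1},g)\ge\Delta_\Phi$. The layer monotonicity of $M^{\pi_{\mathcal{D}}}$ proved in Sec.~\ref{prop1} gives $M^{\pi_{\mathcal{D}}}(u_{t+1},g)\ge M^{\pi_{\mathcal{D}}}(w_{t+1},g)$. So each such term is at least $(1-\gamma)\Delta_\Phi$.

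For $t\ge k-1$, $w_{t+1}=g$ while $u_{t+1}$ lies in layer $0$ or higher. By the non-strict part of Assumption~\ref{assump:potential function} and the monotonicity of $M^{\pi_{\mathcal{D}}}$ (iterated across layers), each such term is $\ge 0$.

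\textbf{Step 3 (summing).} Summing the bounds gives
\[
V^*(s_1,g)-V^*(s_2,g)\ge (1-\gamma)\Delta_\Phi\sum_{t=0}^{k-2}\gamma^t=(1-\gamma^{k-1})\Delta_\Phi .
\]
For $k\ge 2$ this bound is strictly positive. For $k=1$ the bound reads $0$. To get strictness there, I would go back to the $t=0$ term directly. Here $u_1\in\mathcal{S}_0$ as well, so it contributes nothing. Strictness must instead come from the $M$-comparison of $s_2$ versus the goal itself. I would either note that $s_1=g$, so the claim reduces to comparing $V^*$ at the goal with $V^*$ at a layer-$1$ state, or state the strict inequality only for $k\ge2$.

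\textbf{Main obstacle.} The delicate point is Step 2's use of monotonicity of $M^{\pi_{\mathcal{D}}}$. That result was proved for the state-averaged quantity $M^{\pi_{\mathcal{D}}}(s,g)$ and only for adjacent layers. The recursion indeed produces the state version at successor states, so it applies. However, for the tail terms ($t\ge k-1$) I must chain adjacent-layer comparisons to compare arbitrary layers $j\ge 0$ with layer $0$. I also need the absorbing-goal convention to be consistent with the recursion for $d^{\pi_{\mathcal{D}}}$. I would settle both by an explicit induction over layers, using the monotonicity-preserving property of $T^{\pi_{\mathcal{D}}}$.
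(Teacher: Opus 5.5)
Your proposal follows the paper's proof. Both unroll the two shortest-path trajectories and bound each paired step by $(1-\gamma)\Delta_\Phi$, using the recursion $M^{\pi_{\mathcal{D}}}(s,a,g)=(1-\gamma)\Phi(f(s,a),g)+\gamma M^{\pi_{\mathcal{D}}}(f(s,a),g)$, Assumption~\ref{assump:potential function} and layer monotonicity of $M^{\pi_{\mathcal{D}}}$, and then sum the geometric series.

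The only difference is how the goal is handled. The paper truncates the sums at $g$ and discards the unpaired term $\gamma^{k-1}M^{\pi_{\mathcal{D}}}(s_{2,k-1},a^*,g)$ because $M^{\pi_{\mathcal{D}}}\ge 0$. You instead make $g$ absorbing, which tacitly assumes a stay action at $g$. Then for every $t\ge k-1$ both successors equal $g$. Since $M^{\pi_{\mathcal{D}}}(s,a,g)$ depends on $(s,a)$ only through $f(s,a)$, the tail terms vanish identically. The cross-layer chaining you list as your main obstacle is therefore never needed.

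Your remark about $k=1$ is correct, and it exposes an error in the paper's proof, which asserts $1-\gamma^{k-1}>0$ for all $k\ge 1$. Of your two remedies, only the second works:
\begin{itemize}
\item Under your absorbing convention, a state $s_2\in\mathcal{S}_1$ and $g$ both pay the per-step cost $(1-\gamma)\Phi(g,g)+\gamma M^{\pi_{\mathcal{D}}}(g,g)=\gamma M^{\pi_{\mathcal{D}}}(g,g)$ forever. Hence $V^*(g,g)=V^*(s_2,g)$ exactly, and going back to the goal comparison cannot give strictness.
\item Under the paper's truncation, the gap is $\gamma M^{\pi_{\mathcal{D}}}(g,g)$. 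Assumptions~\ref{assump:deterministic environment}--\ref{assump:pi D} do not force this to be positive; it vanishes if $\pi_{\mathcal{D}}$ stays at $g$.
\end{itemize}
So claim the strict inequality only for $k\ge 2$, or add an extra assumption.
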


\begin{proof}
Let $\pi^*$ denote the optimal shortest-path policy. For the sake of clarity, we write that for any state $s$ and goal $g$, let $a^*(s) = \pi^*(a^*|s, g)$. 
We first establish the difference in the one-step cost for taking a shortest-path action from two adjacent layers.

Let $s_a \in S_{j-1}$ and $s_b \in S_j$ for any $j \ge 1$. Let their successors under $\pi^*$ be $s_a' = f(s_a, a^*(s_a)) \in S_{j-2}$ and $s_b' = f(s_b, a^*(s_b)) \in S_{j-1}$.
Consider the difference in their one-step costs:
\begin{align*}
    \Delta M_{step} &= M^{\pi_{\mathcal{D}}}(s_b, a^*(s_b), g) - M^{\pi_{\mathcal{D}}}(s_a, a^*(s_a), g) \\
    &= \Big[ (1-\gamma)\Phi(s_b', g) + \gamma M^{\pi_{\mathcal{D}}}(s_b', g) \Big] - \Big[ (1-\gamma)\Phi(s_a', g) + \gamma M^{\pi_{\mathcal{D}}}(s_a', g) \Big] \\
    &= (1-\gamma)[\Phi(s_b', g) - \Phi(s_a', g)] + \gamma[M^{\pi_{\mathcal{D}}}(s_b', g) - M^{\pi_{\mathcal{D}}}(s_a', g)].
\end{align*}
By Assumption 3 (Strong Monotonicity of $\Phi$), since $s_a' \in S_{j-2}$ and $s_b' \in S_{j-1}$, we have $[\Phi(s_b', g) - \Phi(s_a', g)] \ge \Delta_\Phi$.
By Lemma 1 (Layer Monotonicity of $M^{\pi_{\mathcal{D}}}$), since $s_a' \in S_{j-2}$ and $s_b' \in S_{j-1}$, we have $[M^{\pi_{\mathcal{D}}}(s_b', g) - M^{\pi_{\mathcal{D}}}(s_a', g)] \ge 0$.
Therefore, we have a lower bound on the single-step cost difference:
\[
M^{\pi_{\mathcal{D}}}(s_b, a^*(s_b), g) - M^{\pi_{\mathcal{D}}}(s_a, a^*(s_a), g) \ge (1-\gamma)\Delta_\Phi.
\]

Now, let's consider the two shortest-path trajectories starting from $s_1 \in S_{k-1}$ and $s_2 \in S_k$.
Let $\tau^*_1 = (s_{1,0}, s_{1,1}, \ldots, s_{1,k-1}=g)$ be the trajectory starting from $s_{1,0}=s_1$.
Let $\tau^*_2 = (s_{2,0}, s_{2,1}, \ldots, s_{2,k}=g)$ be the trajectory starting from $s_{2,0}=s_2$.
By definition of the shortest-path policy, we have $s_{1,t} \in S_{k-1-t}$ and $s_{2,t} \in S_{k-t}$ for $t \in [0, k-1]$.

The value functions are the sum of discounted negative costs:
\begin{align*}
V^{*}(s_1, g) &= -\sum_{t=0}^{k-2} \gamma^t M^{\pi_{\mathcal{D}}}(s_{1,t}, a^*(s_{1,t}), g) \\
V^{*}(s_2, g) &= -\sum_{t=0}^{k-1} \gamma^t M^{\pi_{\mathcal{D}}}(s_{2,t}, a^*(s_{2,t}), g)
\end{align*}
Let's analyze the difference $V^{*}(s_1, g) - V^{*}(s_2, g)$:
\begin{align*}
V^{*}(s_1, g) - V^{*}(s_2, g) &= \sum_{t=0}^{k-1} \gamma^t M^{\pi_{\mathcal{D}}}(s_{2,t}, a^*(s_{2,t}), g) - \sum_{t=0}^{k-2} \gamma^t M^{\pi_{\mathcal{D}}}(s_{1,t}, a^*(s_{1,t}), g) \\
&= \gamma^{k-1} M^{\pi_{\mathcal{D}}}(s_{2,k-1}, a^*(s_{2,k-1}), g) + \sum_{t=0}^{k-2} \gamma^t \Big[ M^{\pi_{\mathcal{D}}}(s_{2,t}, a^*(s_{2,t}), g) - M^{\pi_{\mathcal{D}}}(s_{1,t}, a^*(s_{1,t}), g) \Big].
\end{align*}
For each term in the summation (for $t \in [0, k-2]$), we have $s_{1,t} \in S_{k-1-t}$ and $s_{2,t} \in S_{k-t}$. Applying our single-step cost difference result, we get:
\[
M^{\pi_{\mathcal{D}}}(s_{2,t}, a^*(s_{2,t}), g) - M^{\pi_{\mathcal{D}}}(s_{1,t}, a^*(s_{1,t}), g) \ge (1-\gamma)\Delta_\Phi.
\]
The last term for the $\tau^*_2$ trajectory is $M^{\pi_{\mathcal{D}}}(s_{2,k-1}, a^*(s_{2,k-1}), g)$. Since $s_{2,k-1} \in S_1$, its successor is $g \in S_0$. We know $M^{\pi_{\mathcal{D}}} \ge 0$.
So we can lower bound the entire difference:
\begin{align*}
V_W^{\pi^*}(s_1, g) - V_W^{\pi^*}(s_2, g) &\ge \gamma^{k-1} \cdot 0 + \sum_{t=0}^{k-2} \gamma^t (1-\gamma)\Delta_\Phi \\
&= (1-\gamma)\Delta_\Phi \sum_{t=0}^{k-2} \gamma^t \\
&= (1-\gamma)\Delta_\Phi \frac{1-\gamma^{k-1}}{1-\gamma} \\
&= (1-\gamma^{k-1})\Delta_\Phi.
\end{align*}
Since $k \ge 1$ and $\gamma \in (0,1)$, the term $(1-\gamma^{k-1})$ is strictly positive. As $\Delta_\Phi > 0$, the entire lower bound is strictly positive.
This completes the proof.
\end{proof}

\begin{theorem}[Equivalence of the Q-learning Optimum and the Shortest Path]
\label{thm:q-learning-optimality}
Under Assumptions~\ref{assump:deterministic environment}-\ref{assump:pi D} and noting that $r^{W}(s, a^*, g) > r^{W}(s, a_{bad}, g)$ (Sec~\ref{prop1}), the greedy policy with respect to $Q^*$ is the optimal shortest-path policy $\pi^*$.
\end{theorem}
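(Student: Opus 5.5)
Under the greedy rule the task is to show $\arg\max_a Q^*(s,a,g) = \arg\min_a \{M^{\pi_{\mathcal{D}}}(s,a,g) - \gamma V^*(f(s,a),g)\}$ is attained exactly at shortest-path actions. I would do this by a one-step comparison at each state $s \in \mathcal{S}_k$, $k\ge 1$, using the two ingredients already available: the immediate-reward ordering $r^W(s,a^*,g) > r^W(s,a_{bad},g)$ from the second part of Proposition~\ref{prop1statement}, and the layer-wise ordering of the optimal value. The goal is treated as absorbing (Assumption~\ref{assump:One-step reachability} makes every $s\in\mathcal{S}_k$ reach $\mathcal{S}_{k-1}$ in one step). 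Deterministic dynamics (Assumption~\ref{assump:deterministic environment}) reduce everything to successors $f(s,a)$.

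\textbf{Step 1: value ordering.} I first need the layer ordering for the optimal value $V^*$ of the reward $r^W$ itself, not merely for the value of $\pi^*$. Lemma~\ref{lem:v_star_mono} is stated for the value of the shortest-path policy, so the theorem should be proved by a fixed-point/induction argument that identifies the two. Define the candidate $\tilde V(s,g) := V^{\pi^*}(s,g)$ and verify that it satisfies the Bellman optimality equation $\tilde V(s,g) = \max_a \{ r^W(s,a,g) + \gamma \tilde V(f(s,a),g)\}$. Uniqueness of the fixed point (contraction for $\gamma<1$, as in the proof of Proposition~\ref{prop1statement}) then gives $V^* = V^{\pi^*}$. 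With this identification, Lemma~\ref{lem:v_star_mono} yields $V^*(s',g) \le V^*(s'',g)$ whenever $s'\in\mathcal{S}_j$, $s''\in\mathcal{S}_{j'}$ with $j' < j$. Extending from adjacent layers to arbitrary pairs needs a chaining argument, or a direct repeat of the same trajectory-sum comparison.

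\textbf{Step 2: verifying the Bellman equation (the main obstacle).} Fix $s\in\mathcal{S}_k$, let $a^*$ be a shortest-path action with $s^*=f(s,a^*)\in\mathcal{S}_{k-1}$, and let $a_{bad}$ be any other action with $s_{bad}=f(s,a_{bad})\in\mathcal{S}_j$, $j\ge k$. I need
\[
r^W(s,a^*,g)+\gamma V^{\pi^*}(s^*,g) > r^W(s,a_{bad},g)+\gamma V^{\pi^*}(s_{bad},g).
\]
The reward term is strictly larger for $a^*$ by Proposition~\ref{prop1statement}; its proof even gives a gap of at least $(1-\gamma)\Delta_\Phi$. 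For the value term, $s^*$ lies in a strictly lower layer than $s_{bad}$, so Step~1's ordering for $V^{\pi^*}$ gives $V^{\pi^*}(s^*,g) \ge V^{\pi^*}(s_{bad},g)$. Summing the two inequalities gives the strict inequality, so $a^*$ attains the max. Hence $V^{\pi^*}$ is the fixed point.

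The delicate point is that Lemma~\ref{lem:v_star_mono} must be applied to $V^{\pi^*}$ without circularly assuming optimality. Its proof only uses trajectories of $\pi^*$, so this is legitimate. I also have to handle $j$ ranging over layers far from $k$ (including states where $\pi^*$ may never reach, which Assumption~\ref{assump:One-step reachability} rules out). Boundary behavior at $g$ also needs care, where trajectory sums are truncated as in the lemma.

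\textbf{Step 3: conclusion.} With $Q^*(s,a,g) = r^W(s,a,g)+\gamma V^{\pi^*}(f(s,a),g)$, the strict inequality of Step~2 shows every maximizer of $Q^*(s,\cdot,g)$ moves into $\mathcal{S}_{k-1}$, and every such action is a maximizer. Therefore $\pi^{\mathrm{greedy}}$ coincides with $\pi^*$, including ties among shortest-path actions, which have equal successor layers.
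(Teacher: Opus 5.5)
Your argument uses the same two ingredients as the paper: the strict reward gap from the second part of Proposition~\ref{prop1statement} and the layer ordering of Lemma~\ref{lem:v_star_mono}. You organize them as a verification argument, and that difference matters. The paper writes $\Delta Q$ as the reward gap plus $\gamma[V^*(s^*,g)-V^*(s_{bad},g)]$, and bounds the second term by applying Lemma~\ref{lem:v_star_mono} directly to $V^*=\max_a Q^*$. That lemma, however, is proved by summing costs along $\pi^*$-trajectories, so it is a statement about $V^{\pi^*}$. The paper therefore tacitly identifies $V^*$ with $V^{\pi^*}$, which is essentially what is to be proved. Your Steps 1--2 (show that $V^{\pi^*}$ solves the Bellman optimality equation, then invoke uniqueness of the fixed point) are exactly what make that identification legitimate.

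Your version is also more careful in two smaller ways.
\begin{itemize}
\item You need only the weak inequality $V^{\pi^*}(s^*,g)\ge V^{\pi^*}(s_{bad},g)$, with strictness coming from the reward gap. The paper asserts that its Term 2 is strictly positive, but the lemma does not give this when $s,s_{bad}\in\mathcal{S}_1$: then $s^*=g$, and the bound $(1-\gamma^{k-1})\Delta_\Phi$ at $k=1$ is $0$.
\item Declaring $g$ absorbing makes explicit the convention behind the truncated sums in the lemma, which the paper's untruncated Bellman equation leaves unstated.
\end{itemize}
One small correction: Assumption A.2 only constrains states already in a finite layer. It does not exclude successors with $\textrm{step}^*=\infty$, so, like the paper, you must assume every state lies in some $\mathcal{S}_k$.

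The one step that fails as written is the handling of ties. Step 2 compares $a^*$ only with actions whose successor lies in $\mathcal{S}_j$, $j\ge k$. It never compares $a^*$ with another shortest-path action $a'$ with $f(s,a')\in\mathcal{S}_{k-1}$. The assumptions do not force $M^{\pi_{\mathcal{D}}}(\cdot,g)$ or $V^{\pi^*}(\cdot,g)$ to be constant on a layer, so $r^W(s,a',g)+\gamma V^{\pi^*}(f(s,a'),g)$ can exceed the value obtained with $a^*$. Hence, for an arbitrary shortest-path policy $\pi^*$, $V^{\pi^*}$ need not be the fixed point, and your Step 3 claim that every shortest-path action is a maximizer is unjustified.

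The repair is to let $\pi^*$ be the $r^W$-best policy among shortest-path policies, i.e.\ the optimal policy for the problem restricted to shortest-path actions. Lemma~\ref{lem:v_star_mono} applies to it verbatim, since its single-step bound $(1-\gamma)\Delta_\Phi$ holds for any choice of shortest-path actions. Restricted optimality handles comparisons among shortest-path actions, and your Step 2 handles the rest. The correct conclusion is then that every greedy action is a shortest-path action, i.e.\ the greedy policy is \emph{a} shortest-path policy, not that all shortest-path actions are greedy. The paper avoids this issue only by tacitly treating $a^*$ as unique.
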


\begin{proof}

We must show that for any state $(s, g)$, the shortest-path action $a^* = \pi^*(.|s, g)$ is the unique maximizer of the action-value function $Q^{*}(s,a, g)$ and is therefore, the same action chosen by $\pi^{greedy}(.\mid s,g)$. That is, for any non-shortest-path action $a_{bad} \neq a^*$:
\[
Q^{*}(s, a^*, g) > Q^{*}(s, a_{bad}, g).
\]
Let's analyze the difference $\Delta Q = Q^{*}(s, a^*, g) - Q^{*}(s, a_{bad}, g)$. Let the successor states be $s^*=f(s,a^*)$ and $s_{bad}=f(s,a_{bad})$.
\begin{align*}
\Delta Q &= \Big[r^W(s,a^*, g) + \gamma V^{*}(s^*, g)\Big] - \Big[r^W(s,a_{bad}, g) + \gamma V^{*}(s_{bad}, g)\Big] \\
&= \Big[-M^{\pi_{\mathcal{D}}}(s, a^*, g) + \gamma V^{*}(s^*, g)\Big] - \Big[-M^{\pi_{\mathcal{D}}}(s, a_{bad}, g) + \gamma V^{*}(s_{bad}, g)\Big] \\
&= \underbrace{\Big[ M^{\pi_{\mathcal{D}}}(s, a_{bad}, g) - M^{\pi_{\mathcal{D}}}(s, a^*, g) \Big]}_{\text{Term 1}} + \underbrace{\gamma \Big[ V^{*}(s^*, g) - V^{*}(s_{bad}, g) \Big]}_{\text{Term 2} }
\end{align*}
We will now analyze the two terms of this expression.

The first term, $M^{\pi_{\mathcal{D}}}(s, a_{bad}, g) - M^{\pi_{\mathcal{D}}}(s, a^*, g)$, by part 2 of the proof in Sec.~\ref{prop1}, this term is guaranteed to be positive.

We now analyze the second term, $\gamma\!\left[V^{{*}}(s^{*}, g)-V^{{*}}(s_{\mathrm{bad}}, g)\right]$.
By definition, the successor state $s^{*}$ lies in a layer closer to the goal than $s_{\mathrm{bad}}$ (i.e., $d^{*}(s^{*},g) < d^{*}(s_{\mathrm{bad}},g)$).
It therefore follows directly from Lemma~\ref{lem:v_star_mono} that
\[
V^{{*}}(s_{\mathrm{bad}}, g) \;<\; V^{{*}}(s^{*}, g).
\]
Thus, the term $\big[V^{{*}}(s^{*}, g)-V^{{*}}(s_{\mathrm{bad}}, g)\big]$ is strictly positive.

The difference in Q-values, $\Delta Q$, is the sum of a strictly positive term and a strictly positive term (Term 2). The sum is therefore strictly positive.
\[
\Delta Q > 0 \quad i.e\quad Q^{*}(s, a^*, g) > Q^{*}(s, a_{bad}, g)
\]
This shows that for any state $s$, the shortest-path action $a^* \sim \pi^*(a|s, g)$ is the unique greedy action with respect to $Q^{*}$. Therefore, the greedy policy with respect to $Q^{*}$ must be the optimal shortest-path policy $\pi^*$.
\end{proof}
\begin{comment}
\section{Algorithm Pseudocode}
\label{algo_pseudo}
\begin{algorithm}[h]
\SetAlgoLined
\KwIn{Occupancy model $d^{\pi_{\mathcal{D}}}_\theta$, reward function $r^W_\psi$, learning rate $\eta$}
\tcp{Training the occupancy model:}
\For{each iteration $t=1$ to $N$}{
    Sample $(s, a, s', a') \sim \mathcal{D}$\;
    Compute $\mathcal{L}_{flow}(\theta)$ (Eq.~\ref{flow_loss}) and update $\theta \leftarrow \theta - \eta \nabla_\theta \mathcal{L}_{flow}(\theta)$\;}
\tcp{Training the reward function:}
\For{each iteration $t=1$ to $N$}{
    Sample $(s, a, g)\sim \mathcal{D}$\;
    Compute $\mathcal{L}_{rew}(\psi)$ (Eq.~\ref{rew_loss}) and update $\psi \leftarrow \psi - \eta \nabla_\psi \mathcal{L}_{rew}(\psi)$\;}
\caption{Occupancy Reward Shaping.}
\label{alg:ors_algo}
\end{algorithm}
\end{comment}
\section{Additional Information}

\subsection{OGBENCH Tasks}
\label{task_details}

\begin{itemize}
    \item \textbf{Antmaze datasets:} Antmaze involves controlling a quadruped Ant robot with 8 degrees of freedom (DoF) to reach a given goal location in the maze. The agent must learn both high-level navigation and low-level locomotion skills purely from offline data. We perform experiments on 3 types of challenging mazes: \textbf{antmaze-large-navigate}, \textbf{antmaze-giant-navigate} and \textbf{antmaze-large-explore}.
    
    \textbf{antmaze-giant} is twice the size of the large and same size as \textbf{antmaze-ultra} in~\citep{jiang2022efficient} designed to substantially challenge long-horizon planning capabilities. Both \textbf{antmaze-large-navigate} and \textbf{antmaze-giant-navigate} are collected with noisy expert SAC policies that repeatedly move towards randomly sampled goals~\citep{park2024ogbench}. \textbf{antmaze-large-explore} features extremely low quality yet high coverage data consisting of random exploratory trajectories. 
    \item \textbf{Cube datasets:} The cube environments involve manipulation of cube blocks using a robot arm. We perform experiments on 2 cube-play datasets which are collected using a scripted policy that repeatedly picks a cube and places it in other random locations or on another cube: \textbf{cube-double-play} (2 cubes) and \textbf{cube-triple-play} (3 cubes). During evaluation, the agent is required to perform cube moving, stacking, swapping or permutations according to the provided goal configuration. This requires learning generalizable pick-and-place operations with multiple objects and stitching across unstructured data to achieve this. The task horizon also increases with the number of cubes. We also perform experiments on the \textbf{cube-triple-noisy} dataset which involves highly sub-optimal data with noisy transitions. 
    \item \textbf{Puzzle datasets:} These environments specifically test out the combinatorial generalization abilities of the agent (upto $2^{24}$ puzzle configurations and a much larger state space due to robotic manipulation on the puzzle) under very long horizons. They require solving the Lights Out puzzle~\citep{park2024ogbench}, which consists of a 2D grid with buttons in cells, where pressing buttons toggles the color the that button and buttons around it. The agent has to achieve an appropriate goal configuration of colors by pressing buttons by controlling a robot arm. 
    We evaluate algorithms on play datasets of \textbf{puzzle-4x4}, \textbf{puzzle-4x5} and \textbf{puzzle-4x6}; and noisy datasets of \textbf{puzzle-4x4} and \textbf{puzzle-4x6}. These are particularly challenging tasks~\citep{park2024ogbench}.

    \item \textbf{Scene datasets:} The scene tasks challenge sequential, long-horizon planning. They involve manipulating diverse objects: a cube, a window, a drawer and two locks. \textbf{scene-play} features a dataset collected using a scripted policy that randomly interacts with the objects. At test-time, the agent has to arrange objects into a given goal configuration. These tasks can involve up to 8 sub-tasks e.g. unlock a drawer, then open it, then put a cube in it and close it again, etc. The agent must therefore be able to stitch together multiple skills to achieve success. \textbf{scene-noisy} involves the same tasks but the dataset is of substantially lower quality. 

\end{itemize}

\begin{figure}[h]
\centering
\includegraphics[width=0.6\textwidth]{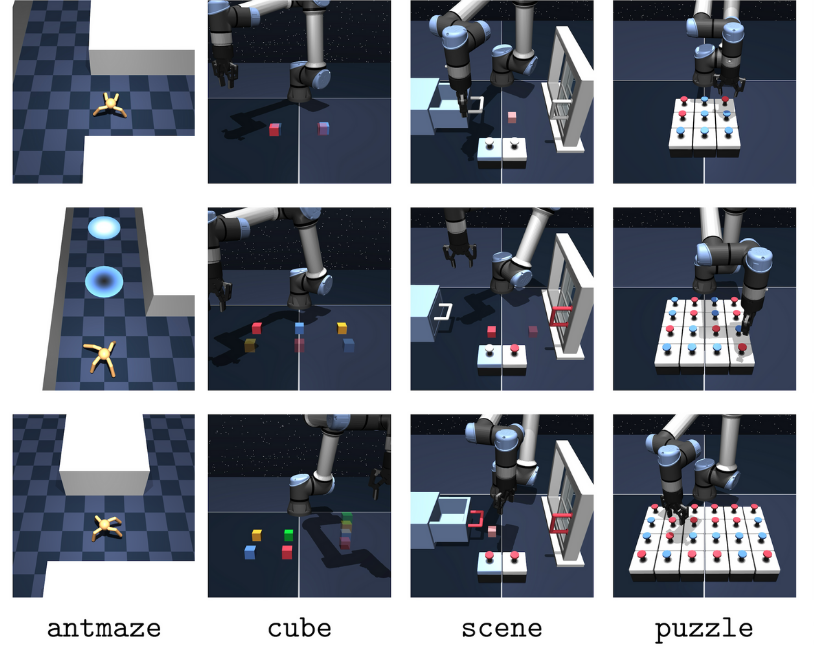}
\caption{OGBench tasks.}
\label{fig:example}
\end{figure}

\subsection{Tokamak tasks}
\label{tokamak_details}

\begin{figure}[b]
\centering
\includegraphics[width=0.6\textwidth]{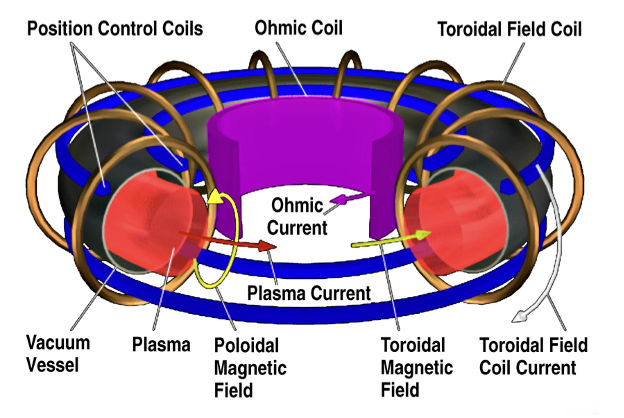}
\caption{Tokamak fusion reactor from~\citep{walker2020introduction}.}
\label{tokamak_figure}
\end{figure}

A tokamak (Figure \ref{tokamak_figure}) is a scientific device that uses electro-magnetic fields to confine extremely hot ionized gas called plasma in a toroidal chamber, with the aim of achieving controlled nuclear fusion
It is considered the most promising experimental design for achieving commercial nuclear fusion at scale, opening up avenues for nearly unlimited clean energy.

We use a dataset of raw sensor and actuator data collected from the DIII-D tokamak located in San Diego, CA, USA. Each trajectory corresponds to one plasma discharge, with a control frequency of 25 ms. A control task corresponds to tracking certain state variables of the plasma through the duration of a discharge. While we were not able to get experiment time on the reactor on short notice, we evaluated each RL algorithm using rollouts from a thoroughly tested and accurately learned model of plasma dynamics~\citep{char2023offline, char2024full} as a proxy. Each RL algorithm was evaluated based on how closely it tracks a given goal state of the plasma. The reward at each time-step during evaluation corresponds to the L2 distance between the goal variables to track and the actual quantities achieved through control.

We evaluated each algorithm over 10 goals. The offline dataset size was roughly 50k samples after data cleaning and processing. The state and action spaces are outlined in the following table:

\begin{table}[h!]
\centering
    \scalebox{1.0}{
\begin{tabular}{|c|l|}
\toprule
\textbf{Category} & \textbf{Variables} \\
\midrule
State Space &
\begin{tabular}[c]{@{}l@{}}
\textbf{Scalar States:} \(\beta_N\), Internal Inductance, Line Averaged Density,\\
Loop Voltage, Stored Energy \\
\textbf{Profile States:} Electron Density, Electron Temperature, Pressure,\\
Safety Factor, Ion Temperature, Ion Rotation
\end{tabular} \\
\midrule
Action Space &
Power Injected, Torque Injected, Total Deuterium Gas Injection, Total ECH Power \\
\bottomrule
\end{tabular}}
\end{table}

Please refer to \citet{abbate2021data, char2023offline, char2024full} for detailed explanations of the specific variables and actuators.

\subsection{Additional information on Baselines}
\label{baseline_appendix}
\begin{itemize}
    \item \textbf{For baselines provided in the OGBench~\citep{park2024ogbench} repository, we use the best hyperparameters provided in the paper. These include: GCBC, GCIVL, GCIQL, QRL, CRL and HIQL}. 
    \item \textbf{GCIQL}: GCIQL is a goal-conditioned variant of Implicit Q Learning~\citep{kostrikov2021offline} that fits the optimal $Q^*$ using expectile regression~\citep{newey1987asymmetric}. GCIQL learns $Q$ and $V$ as follows:
    \begin{align*}
    \mathcal{L}^V_{\mathrm{GCIQL}}(V) &= \mathbb{E}_{(s, a) \sim p^{\mathcal{D}}(s, a),\; g \sim p^{\mathcal{D}}_{\mathrm{mixed}}(g \mid s)} 
    \left[ \ell^2_{\kappa} \left( \bar{Q}(s, a, g) - V(s, g) \right) \right], \\
    \mathcal{L}^Q_{\mathrm{GCIQL}}(Q) &= \mathbb{E}_{(s, a, s') \sim p^{\mathcal{D}}(s, a, s'),\; g \sim p^{\mathcal{D}}_{\mathrm{mixed}}(g \mid s)} 
    \left[ \left( r^W(s, a, g) + \gamma V(s', g) - Q(s, a, g) \right)^2 \right]
    \end{align*}
    Here, $\bar{Q}$ is the target Q function~\citep{mnih2015human}, $\kappa$ is the expectile and $r^W(s,a,g)$ is the ORS reward. For policy extraction, we use deterministic policy gradient with a behavioral regularization (\textbf{DDPG + BC})~\citep{fujimoto2021minimalist}:
    \begin{align*}
    J_{\mathrm{DDPG}+\mathrm{BC}}(\pi) = \mathbb{E}_{(s, a) \sim p^{\mathcal{D}}(s,a),\, g \sim p^{\mathcal{D}}_{\mathrm{mixed}}(g \mid s)} 
    \left[ Q(s, \pi^\mu(s, g), g) + \alpha \log \pi(a \mid s, g) \right]
    \end{align*}
    where $\pi^\mu(s, g) = \mathbb{E}_{a \sim \pi(a \mid s, g)} [a]$. We use double-Q learning and normalize the Q values before policy extraction.

    \item \textbf{Go-Fresh}~\citep{mezghani2023learning}: Go-Fresh involves 3 stages: Training the R-Net, a neural network that learns local temporal distances between states, using R-Net and R-Net embeddings to create a semi-parametric graph of the offline dataset and then training a policy. While training a policy, every 1000 iterations of training, Go-Fresh creates a new dataset where each tuple  corresponds to a randomly sampled $(s,g)$ pair from the dataset. 

    We noticed in initial experiments that even when Go-Fresh is trained using the same GCRL algorithm as ORS (GCIQL), overall success-rate was close to 0 across tasks. To ensure a fair and unbiased comparison, we modified the Go-Fresh codebase to have the same goal sampling strategy from OGBench (detailed in Appendix D of~\citep{park2024ogbench} and in the last item of this sub-section) used for all other baselines. This substantially improved performance and is required to produce non-negligible performance, as reported in Tables~\ref{table1} and~\ref{table2}. 

    Furthermore, noticing that Go-Fresh is tested on relatively simpler tasks compared to ours in their paper~\citep{mezghani2023learning}, we modify the following hyperparameters to account for the increased task difficulty, in addition to using best values for other hyperparameters: 

    We used a memory capacity of 5000 on all tasks which we found to be sufficient (the graph building stage consistently produced a graph with less than 5000 nodes). We increased the R-Net size to an MLP of 64 units (we did not see an improvement in classification accuracy for large sizes) and used a local distance threshold $\tau$ = 10 on all tasks. We also see that weighing the local reward by a factor of 5 before summing it with the global reward gave better performance. 

    \item \textbf{Goal sampling for all algorithms}: The same goal-sampling method was used for all baselines in this paper as well as ORS and all variations. Goals were sampled in three different ways according to probabilities from a categorical distribution defined as follows:
    \begin{itemize}
        \item $p^{\mathcal{D}}_{\mathrm{cur}}(g|s)$:  Dirac delta distribution centered at the current state $s$.
        \item $p^{\mathcal{D}}_{\mathrm{traj}}(g|s)$: goal is sampled from future states of $s$ from the same trajectory with uniform sampling.
        \item $p^{\mathcal{D}}_{\mathrm{rand}}(g|s)$: goal is uniformly sampled from $\mathcal{D}$.
    \end{itemize}
    We provide information on the specific values used in Sec.~\ref{model_hyperparams} and~\ref{rl_hyperparams} and Table~\ref{nstep_hyperparams1}.
    
    \item \textbf{n-step returns:} We implement two ways of doing n-step returns in the critic with sparse rewards (where $r(s, g) = \mathbf{-1}(s \neq g)$ and $t$ is the MDP timestep):
    \begin{enumerate}
        \item \textbf{n-step GCIQL}: $Q(s_t,a_t, g) = \sum_{i=t}^{t+n-1}r(s_{t+i}, g) + \gamma^{n} Q(s_{t+n},\pi(a_{t+n}|s_{t+n}, g), g)$
        \item \textbf{GCIQL-OTA} (based on the core idea in~\citep{ahn2025option}): \\
        $Q(s_t,a_t, g) = \mathbf{-1}(s_{t+n} \neq g) + \gamma Q(s_{t+n},\pi(a_{t+n}|s_{t+n}, g), g)$
    \end{enumerate}

    As evident, we use GCIQL as the base algorithm for n-step return algorithms. We provide the specific hyperparameters used for each task in Table~\ref{nstep_hyperparams1}. Please refer to Sec.~\ref{rl_hyperparams} for information on all other common hyperparameters. 

    \begin{table}[h]
        \centering
        \scriptsize
        \caption{Specific hyperparameters for best performance with \textbf{n-step GCIQL} and \textbf{GCIQL-OTA}}
        \setlength{\tabcolsep}{6pt} 
        \begin{tabular}{l c c c c c c}
        \toprule
        \textbf{Dataset} & \textbf{$\alpha$ (BC-coefficient)} & \textbf{$\kappa$ (Expectile)} & \textbf{n} & \textbf{$\gamma$} & Actor ($p^{\mathcal{D}}_{\mathrm{cur}},p^{\mathcal{D}}_{\mathrm{traj}},p^{\mathcal{D}}_{\mathrm{rand}}$)&\\
        \midrule
        puzzle-4x6-play & 1.0 & 0.9 & 25 & 0.995 & (0, 1, 0) \\
        cube-triple-play & 3.0 & 0.9 & 50 & 0.995&  (0, 1, 0) \\
        antmaze-large-navigate & 0.3 & 0.9 & 50 & 0.995& (0, 1, 0) \\
        antmaze-giant-navigate & 0.3 & 0.9 & 50 & 0.995&  (0, 1, 0)\\
        puzzle-4x6-noisy & 0.03 & 0.9 & 25 & 0.995&  (0, 1, 0)\\
        cube-triple-noisy & 0.03 & 0.9 & 50 & 0.995&  (0, 1, 0)\\
        antmaze-large-explore & 0.01 & 0.9 & 50 & 0.995&  (0, 0, 1)\\
        \bottomrule
        \end{tabular}
        \label{nstep_hyperparams1}
    \end{table}

\end{itemize}
\newpage

\subsection{Trajectory Visualizations: Analysed trajectories}
\label{traj_viz}

\begin{figure}[h]
    \centering
    \begin{tikzpicture}
        \node[anchor=south west, inner sep=0] (img) at (0,0) {\includegraphics[width=\textwidth]{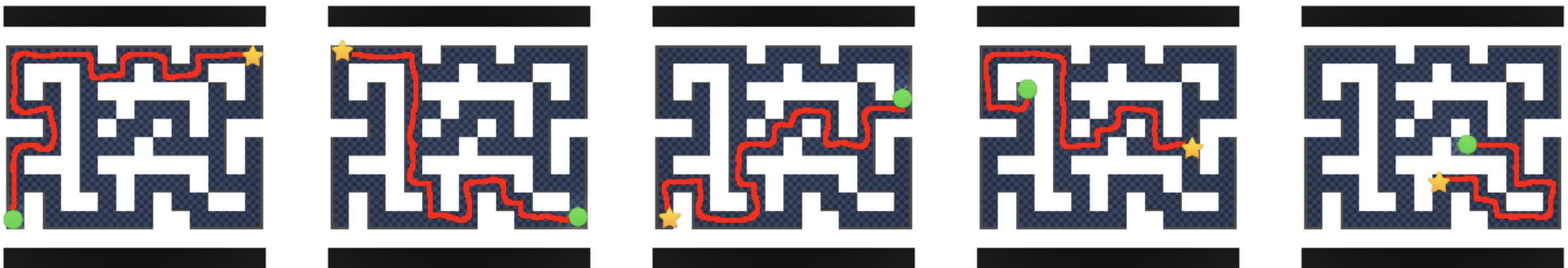}};

        \node[fill=white, text=black, font=\large\bfseries] at (1.25,2.7) {0};

        \node[fill=white, text=black, font=\large\bfseries] at (4.15,2.7) {1};

        \node[fill=white, text=black, font=\large\bfseries] at (7.05,2.7) {2};

        \node[fill=white, text=black, font=\large\bfseries] at (9.95,2.7) {3};

        \node[fill=white, text=black, font=\large\bfseries] at (12.85,2.7) {4};
    \end{tikzpicture}
  \caption{Expert trajectories of varying lengths on \textbf{antmaze-giant-navigate} used for analysis in Sec.~\ref{motivation} and Sec.~\ref{results}. The green dot represents the starting position and the star represents the goal.} 
\end{figure}

\subsection{Training Iterations}
\label{training_it}
\begin{table}[h]
\centering
\scriptsize
\caption{Training iterations per task. All algorithms are trained for the same number of iterations.}
\setlength{\tabcolsep}{6pt} 
\begin{tabular}{l c}
\toprule
\textbf{Dataset} & \textbf{Training Iterations} \\
\midrule

cube-double-play & 1M \\
puzzle-4x4-play & 1M \\
scene-play & 1M \\
puzzle-4x5-play & 1M \\
puzzle-4x6-play & 1M \\
cube-triple-play & 2M \\
antmaze-large-navigate & 3M \\
antmaze-giant-navigate & 6M \\
\midrule
puzzle-4x4-noisy & 1M \\
scene-noisy & 1M \\
puzzle-4x6-noisy & 1M \\
cube-triple-noisy & 1M \\
antmaze-large-explore & 6M \\
\bottomrule
\end{tabular}
\label{table_ti}
\end{table}

\subsection{Occupancy and Reward Model Hyperparameters}
\label{model_hyperparams}

\begin{table}[h]
\centering
\scriptsize
\setlength{\tabcolsep}{3pt} % Very tight column spacing
\caption{Occupancy and reward-model hyperparameters.}
\begin{tabular}{@{}l l@{}}
\toprule
\textbf{Hyperparameter} & \textbf{Value} \\
\midrule
Gradient steps & 2M \\
Optimizer & Adam~\citep{kingma2014adam} \\
Learning rate & 0.0003 \\
Batch size & 256 \\
Architecture & MLP \\
MLP size & \{512, 512, 512, 512\} \\
Nonlinearity & GELU~\citep{hendrycks2016gaussian} \\
Layer normalization & True \\
Target net update rate (occupancy model) & 0.005 \\
Discount factor $\gamma$ (occupancy model)& 0.99 \\
Flow steps & 22 (antmaze-giant-navigate), 55 (antmaze-large-navigate), 45 (other tasks) \\
Reward ($p^{\mathcal{D}}_{\mathrm{cur}},p^{\mathcal{D}}_{\mathrm{traj}},p^{\mathcal{D}}_{\mathrm{rand}}$) ratio for goal sampling & (0.2, 0.5, 0.3) \\
\bottomrule
\end{tabular}
\label{tab:common_hyperparams}
\end{table}

We added a warm-start stage without bootstrapping to the occupancy model by training it to predict future states $s^+$ distributed according to a geometric distribution over trajectory timesteps $t_\tau \sim \textrm{Geom}(1 - \gamma)$ starting the current state $s$:

\begin{align}
\mathcal{L}_{pretrain}(\theta) &= 
\mathbb{E}_{\substack{ s^{t_\tau} = x_1 \sim \mathcal{D}, t_\tau \sim \textrm{Geom}(1 - \gamma), \\  x_0 \sim \mathcal{N}(0,\, I_d),  t \sim \mathrm{Unif}([0, 1]) }}
\left[
    \left\| v_\theta(t, s, a, x_t) - (x_1 - x_0) \right\|_2^2
\right] 
\label{pretrain_loss}
\end{align}

for the first 1M epochs and then training it using the bootstrapping loss in Sec.~\ref{occ_model} for next next 1M epochs, however we did not see any consistent improvements from doing this.

\subsection{Policy Training Hyperparameters}
\label{rl_hyperparams}

We first list hyperparameters common to all algorithms in Table~\ref{rl_common_hyperparams}:

\begin{table}[h]
\centering
\scriptsize
\setlength{\tabcolsep}{3pt} % Very tight column spacing
\caption{Common hyperparameters for policy training.}
\begin{tabular}{@{}l l@{}}
\toprule
\textbf{Hyperparameter} & \textbf{Value} \\
\midrule
Optimizer & Adam~\citep{kingma2014adam} \\
Learning rate & 0.0003 \\
Batch size & 1024 \\
Architecture & MLP \\
MLP size (Actor and Critic) & \{512, 512, 512\} \\
Nonlinearity & GELU~\citep{hendrycks2016gaussian} \\
Layer normalization & True \\
Target critic update rate  & 0.005 \\
Critic ($p^{\mathcal{D}}_{\mathrm{cur}},p^{\mathcal{D}}_{\mathrm{traj}},p^{\mathcal{D}}_{\mathrm{rand}}$) ratio for goal sampling & (0.2, 0.5, 0.3) \\
\bottomrule
\end{tabular}
\label{rl_common_hyperparams}
\end{table}

We then provide specific hyperparameters used for ORS and Go-Fresh (which use GCIQL as the base algorithm to train policies) in Table~\ref{rs_hyperparams}:

\begin{table}[h]
        \centering
        \vspace{10em}
        \scriptsize
        \caption{Specific hyperparameters for policy training: ORS and Go-Fresh}
        \setlength{\tabcolsep}{6pt} 
        \scalebox{0.85}{
        \begin{tabular}{l c c c c c c c}
        \toprule
        \textbf{Dataset} & \textbf{$\alpha$ (BC-coefficient)} & \textbf{$\kappa$ (Expectile)} & \textbf{$\gamma$} & Actor ($p^{\mathcal{D}}_{\mathrm{cur}},p^{\mathcal{D}}_{\mathrm{traj}},p^{\mathcal{D}}_{\mathrm{rand}}$)&reward scaling for ORS (linear scaling $x: (r/x))$&\\
        \midrule
        puzzle-4x4-play & 0.6 & 0.6 & 0.99 & (0, 1, 0) & 0.75\\
        puzzle-4x5-play & 0.6 & 0.6 & 0.99 & (0, 0.5, 0.5) & 0.75\\
        puzzle-4x6-play & 0.6 & 0.9 & 0.99 & (0, 0.5, 0.5) & 0.25\\
        cube-double-play & 0.3 & 0.75 & 0.99 &  (0, 0.5, 0.5) & 0.25\\
        cube-triple-play & 0.3 & 0.9 & 0.99 &  (0, 0.5, 0.5) & 0.25\\
        scene-play & 0.3 & 0.75 & 0.99 &  (0, 0.5, 0.5) & 0.75\\
        antmaze-large-navigate & 0.15 & 0.6 & 0.995 & (0, 1, 0) & 2.0\\
        antmaze-giant-navigate & 0.1 & 0.6 & 0.995 &  (0, 1, 0) & 2.0\\
        puzzle-4x4-noisy & 0.05 & 0.6 & 0.99 &  (0, 1, 0) & 0.25\\
        puzzle-4x6-noisy & 0.05 & 0.75 & 0.99 &  (0, 1, 0) & 0.25\\
        cube-triple-noisy & 0.03 & 0.75 & 0.99 &  (0, 0.5, 0.5) & 1.0 \\
        scene-noisy & 0.1 & 0.75 & 0.99 &  (0, 1, 0) & 0.25\\
        antmaze-large-explore & 0.015 & 0.9 & 0.99 &  (0, 0.5, 0.5) & 3.0\\
        \bottomrule
        \end{tabular}}
        \label{rs_hyperparams}
    \end{table}

\subsection{Wall-clock time}
\label{wallclock}

The average wall-clock time per iteration is as follows:

\begin{table}[h!]
\centering
\small
\setlength{\tabcolsep}{10pt}
\renewcommand{\arraystretch}{1.2}
\begin{tabular}{llc}
\toprule
Method & Component & Time per iteration (ms) \\
\midrule
\multirow{3}{*}{ORS} 
    & Occupancy model   & 14 \\
    & Reward function   & 8  \\
    & Policy training   & 6.5 \\
\midrule
\multirow{2}{*}{GO-FRESH}
    & R-NET training    & 9.5 \\
    & Policy training   & 7.2 \\
\bottomrule
\end{tabular}
\end{table}

GO-FRESH also has a graph computation time that ranges between 1.5-10 mins depending on the task and dataset. For default GCIQL, policy training takes 7.2 ms per iteration. All times were reported on a single NVIDIA RTX A6000 GPU. We typically see that reward function and occupancy model losses converge in around 500k iterations. R-Net training typically runs for 500k iterations.

While ORS does lead to a slightly higher computational overhead, it also leads to a 2.3 $\times$ increase in performance over GCIQL, its base algorithm, and exhibits stronger performance than GO-FRESH on a variety of tasks. GO-FRESH requires training a distance classifier, training a graph memory and building a graph, each of which involves hyperparameter tuning for the size of the R-Net, number of edges in the graph, number of negative pairs for training the R-Net, distance threshold for the R-Net and careful weighing of local and global rewards during policy training. In contrast, ORS is simple to train, does not require storing a large graph in memory and is easy to tune.
We agree that increasing the efficiency of ORS is an important direction for future work.

\end{appendices}

%% file: iclr2026_conference.bib
@inproceedings{ng1999policy,
  title={Policy invariance under reward transformations: Theory and application to reward shaping},
  author={Ng, Andrew Y and Harada, Daishi and Russell, Stuart},
  booktitle={Icml},
  volume={99},
  pages={278--287},
  year={1999},
  organization={Citeseer}
}

@inproceedings{mezghani2023learning,
  title={Learning goal-conditioned policies offline with self-supervised reward shaping},
  author={Mezghani, Lina and Sukhbaatar, Sainbayar and Bojanowski, Piotr and Lazaric, Alessandro and Alahari, Karteek},
  booktitle={Conference on robot learning},
  pages={1401--1410},
  year={2023},
  organization={PMLR}
}

@book{puterman2014markov,
  title={Markov decision processes: discrete stochastic dynamic programming},
  author={Puterman, Martin L},
  year={2014},
  publisher={John Wiley \& Sons}
}

@article{sikchi2023dual,
  title={Dual rl: Unification and new methods for reinforcement and imitation learning},
  author={Sikchi, Harshit and Zheng, Qinqing and Zhang, Amy and Niekum, Scott},
  journal={arXiv preprint arXiv:2302.08560},
  year={2023}
}

@article{farebrother2025temporal,
  title={Temporal Difference Flows},
  author={Farebrother, Jesse and Pirotta, Matteo and Tirinzoni, Andrea and Munos, R{\'e}mi and Lazaric, Alessandro and Touati, Ahmed},
  journal={arXiv preprint arXiv:2503.09817},
  year={2025}
}

@article{janner2020generative,
  title={Generative temporal difference learning for infinite-horizon prediction},
  author={Janner, Michael and Mordatch, Igor and Levine, Sergey},
  journal={arXiv preprint arXiv:2010.14496},
  year={2020}
}

@article{lipman2022flow,
  title={Flow matching for generative modeling},
  author={Lipman, Yaron and Chen, Ricky TQ and Ben-Hamu, Heli and Nickel, Maximilian and Le, Matt},
  journal={arXiv preprint arXiv:2210.02747},
  year={2022}
}

@article{liu2022flow,
  title={Flow straight and fast: Learning to generate and transfer data with rectified flow},
  author={Liu, Xingchao and Gong, Chengyue and Liu, Qiang},
  journal={arXiv preprint arXiv:2209.03003},
  year={2022}
}

@article{park2025flow,
  title={Flow q-learning},
  author={Park, Seohong and Li, Qiyang and Levine, Sergey},
  journal={arXiv preprint arXiv:2502.02538},
  year={2025}
}

@article{lipman2024flow,
  title={Flow matching guide and code},
  author={Lipman, Yaron and Havasi, Marton and Holderrieth, Peter and Shaul, Neta and Le, Matt and Karrer, Brian and Chen, Ricky TQ and Lopez-Paz, David and Ben-Hamu, Heli and Gat, Itai},
  journal={arXiv preprint arXiv:2412.06264},
  year={2024}
}

@article{ma2024highly,
  title={Highly efficient self-adaptive reward shaping for reinforcement learning},
  author={Ma, Haozhe and Luo, Zhengding and Vo, Thanh Vinh and Sima, Kuankuan and Leong, Tze-Yun},
  journal={arXiv preprint arXiv:2408.03029},
  year={2024}
}

@inproceedings{zhan2022offline,
  title={Offline reinforcement learning with realizability and single-policy concentrability},
  author={Zhan, Wenhao and Huang, Baihe and Huang, Audrey and Jiang, Nan and Lee, Jason},
  booktitle={Conference on Learning Theory},
  pages={2730--2775},
  year={2022},
  organization={PMLR}
}

@article{mguni2021learning,
  title={Learning to shape rewards using a game of switching controls},
  author={Mguni, David and Wang, Jianhong and Jafferjee, Taher and Perez-Nieves, Nicolas and Song, Wenbin and Yang, Yaodong and Tong, Feifei and Chen, Hui and Zhu, Jiangcheng and Du, Yali and others},
  journal={arXiv preprint arXiv:2103.09159},
  year={2021}
}

@article{park2024ogbench,
  title={Ogbench: Benchmarking offline goal-conditioned rl},
  author={Park, Seohong and Frans, Kevin and Eysenbach, Benjamin and Levine, Sergey},
  journal={arXiv preprint arXiv:2410.20092},
  year={2024}
}

@article{eysenbach2022contrastive,
  title={Contrastive learning as goal-conditioned reinforcement learning},
  author={Eysenbach, Benjamin and Zhang, Tianjun and Levine, Sergey and Salakhutdinov, Russ R},
  journal={Advances in Neural Information Processing Systems},
  volume={35},
  pages={35603--35620},
  year={2022}
}

@inproceedings{wang2023optimal,
  title={Optimal goal-reaching reinforcement learning via quasimetric learning},
  author={Wang, Tongzhou and Torralba, Antonio and Isola, Phillip and Zhang, Amy},
  booktitle={International Conference on Machine Learning},
  pages={36411--36430},
  year={2023},
  organization={PMLR}
}

@article{ding2019goal,
  title={Goal-conditioned imitation learning},
  author={Ding, Yiming and Florensa, Carlos and Abbeel, Pieter and Phielipp, Mariano},
  journal={Advances in neural information processing systems},
  volume={32},
  year={2019}
}

@article{park2023hiql,
  title={Hiql: Offline goal-conditioned rl with latent states as actions},
  author={Park, Seohong and Ghosh, Dibya and Eysenbach, Benjamin and Levine, Sergey},
  journal={Advances in Neural Information Processing Systems},
  volume={36},
  pages={34866--34891},
  year={2023}
}

@article{janner2020gamma,
  title={Gamma-models: Generative temporal difference learning for infinite-horizon prediction},
  author={Janner, Michael and Mordatch, Igor and Levine, Sergey},
  journal={Advances in neural information processing systems},
  volume={33},
  pages={1724--1735},
  year={2020}
}

@article{schramm2024bellman,
  title={Bellman diffusion models},
  author={Schramm, Liam and Boularias, Abdeslam},
  journal={arXiv preprint arXiv:2407.12163},
  year={2024}
}

@inproceedings{schaul2015universal,
  title={Universal value function approximators},
  author={Schaul, Tom and Horgan, Daniel and Gregor, Karol and Silver, David},
  booktitle={International conference on machine learning},
  pages={1312--1320},
  year={2015},
  organization={PMLR}
}

@article{yang2022rethinking,
  title={Rethinking goal-conditioned supervised learning and its connection to offline rl},
  author={Yang, Rui and Lu, Yiming and Li, Wenzhe and Sun, Hao and Fang, Meng and Du, Yali and Li, Xiu and Han, Lei and Zhang, Chongjie},
  journal={arXiv preprint arXiv:2202.04478},
  year={2022}
}

@article{durugkar2021adversarial,
  title={Adversarial intrinsic motivation for reinforcement learning},
  author={Durugkar, Ishan and Tec, Mauricio and Niekum, Scott and Stone, Peter},
  journal={Advances in Neural Information Processing Systems},
  volume={34},
  pages={8622--8636},
  year={2021}
}

@inproceedings{hejna2023distance,
  title={Distance weighted supervised learning for offline interaction data},
  author={Hejna, Joey and Gao, Jensen and Sadigh, Dorsa},
  booktitle={International Conference on Machine Learning},
  pages={12882--12906},
  year={2023},
  organization={PMLR}
}

@article{park2024foundation,
  title={Foundation policies with hilbert representations},
  author={Park, Seohong and Kreiman, Tobias and Levine, Sergey},
  journal={arXiv preprint arXiv:2402.15567},
  year={2024}
}

@article{reichlin2024learning,
  title={Learning Goal-Conditioned Policies from Sub-Optimal Offline Data via Metric Learning},
  author={Reichlin, Alfredo and Vasco, Miguel and Yin, Hang and Kragic, Danica},
  journal={arXiv preprint arXiv:2402.10820},
  year={2024}
}

@article{kostrikov2021offline,
  title={Offline reinforcement learning with implicit q-learning},
  author={Kostrikov, Ilya and Nair, Ashvin and Levine, Sergey},
  journal={arXiv preprint arXiv:2110.06169},
  year={2021}
}

@article{zhou2025flattening,
  title={Flattening Hierarchies with Policy Bootstrapping},
  author={Zhou, John L and Kao, Jonathan C},
  journal={arXiv preprint arXiv:2505.14975},
  year={2025}
}

@inproceedings{chane2021goal,
  title={Goal-conditioned reinforcement learning with imagined subgoals},
  author={Chane-Sane, Elliot and Schmid, Cordelia and Laptev, Ivan},
  booktitle={International conference on machine learning},
  pages={1430--1440},
  year={2021},
  organization={PMLR}
}

@inproceedings{sikchi2023score,
  title={Score models for offline goal-conditioned reinforcement learning},
  author={Sikchi, Harshit and Chitnis, Rohan and Touati, Ahmed and Geramifard, Alborz and Zhang, Amy and Niekum, Scott},
  booktitle={The Twelfth International Conference on Learning Representations},
  year={2023}
}

@article{ma2022offline,
  title={Offline goal-conditioned reinforcement learning via $ f $-advantage regression},
  author={Ma, Jason Yecheng and Yan, Jason and Jayaraman, Dinesh and Bastani, Osbert},
  journal={Advances in neural information processing systems},
  volume={35},
  pages={310--323},
  year={2022}
}

@article{ahn2025option,
  title={Option-aware Temporally Abstracted Value for Offline Goal-Conditioned Reinforcement Learning},
  author={Ahn, Hongjoon and Choi, Heewoong and Han, Jisu and Moon, Taesup},
  journal={arXiv preprint arXiv:2505.12737},
  year={2025}
}

@misc{hartikainen2020dynamicaldistancelearningsemisupervised,
      title={Dynamical Distance Learning for Semi-Supervised and Unsupervised Skill Discovery}, 
      author={Kristian Hartikainen and Xinyang Geng and Tuomas Haarnoja and Sergey Levine},
      year={2020},
      eprint={1907.08225},
      archivePrefix={arXiv},
      primaryClass={cs.LG},
      url={https://arxiv.org/abs/1907.08225}, 
}

@article{saksida1997shaping,
  title={Shaping robot behavior using principles from instrumental conditioning},
  author={Saksida, Lisa M and Raymond, Scott M and Touretzky, David S},
  journal={Robotics and Autonomous Systems},
  volume={22},
  number={3-4},
  pages={231--249},
  year={1997},
  publisher={Elsevier}
}

@inproceedings{randlov1998learning,
  title={Learning to Drive a Bicycle Using Reinforcement Learning and Shaping.},
  author={Randl{\o}v, Jette and Alstr{\o}m, Preben},
  booktitle={ICML},
  volume={98},
  pages={463--471},
  year={1998}
}

@inproceedings{brys2015reinforcement,
  title={Reinforcement Learning from Demonstration through Shaping.},
  author={Brys, Tim and Harutyunyan, Anna and Suay, Halit Bener and Chernova, Sonia and Taylor, Matthew E and Now{\'e}, Ann},
  booktitle={IJCAI},
  pages={3352--3358},
  year={2015}
}

@inproceedings{kang2018policy,
  title={Policy optimization with demonstrations},
  author={Kang, Bingyi and Jie, Zequn and Feng, Jiashi},
  booktitle={International conference on machine learning},
  pages={2469--2478},
  year={2018},
  organization={PMLR}
}

@inproceedings{harutyunyan2015expressing,
  title={Expressing arbitrary reward functions as potential-based advice},
  author={Harutyunyan, Anna and Devlin, Sam and Vrancx, Peter and Now{\'e}, Ann},
  booktitle={Proceedings of the AAAI conference on artificial intelligence},
  volume={29},
  number={1},
  year={2015}
}

@article{li2025automatic,
  title={Automatic Reward Shaping from Confounded Offline Data},
  author={Li, Mingxuan and Zhang, Junzhe and Bareinboim, Elias},
  journal={arXiv preprint arXiv:2505.11478},
  year={2025}
}

@inproceedings{pathak2017curiosity,
  title={Curiosity-driven exploration by self-supervised prediction},
  author={Pathak, Deepak and Agrawal, Pulkit and Efros, Alexei A and Darrell, Trevor},
  booktitle={International conference on machine learning},
  pages={2778--2787},
  year={2017},
  organization={PMLR}
}

@article{zheng2024can,
  title={Can a MISL fly? analysis and ingredients for mutual information skill learning},
  author={Zheng, Chongyi and Tuyls, Jens and Peng, Joanne and Eysenbach, Benjamin},
  journal={arXiv preprint arXiv:2412.08021},
  year={2024}
}

@inproceedings{haarnoja2018soft,
  title={Soft actor-critic: Off-policy maximum entropy deep reinforcement learning with a stochastic actor},
  author={Haarnoja, Tuomas and Zhou, Aurick and Abbeel, Pieter and Levine, Sergey},
  booktitle={International conference on machine learning},
  pages={1861--1870},
  year={2018},
  organization={Pmlr}
}

@article{andrychowicz2017hindsight,
  title={Hindsight experience replay},
  author={Andrychowicz, Marcin and Wolski, Filip and Ray, Alex and Schneider, Jonas and Fong, Rachel and Welinder, Peter and McGrew, Bob and Tobin, Josh and Pieter Abbeel, OpenAI and Zaremba, Wojciech},
  journal={Advances in neural information processing systems},
  volume={30},
  year={2017}
}

@article{mnih2015human,
  title={Human-level control through deep reinforcement learning},
  author={Mnih, Volodymyr and Kavukcuoglu, Koray and Silver, David and Rusu, Andrei A and Veness, Joel and Bellemare, Marc G and Graves, Alex and Riedmiller, Martin and Fidjeland, Andreas K and Ostrovski, Georg and others},
  journal={nature},
  volume={518},
  number={7540},
  pages={529--533},
  year={2015},
  publisher={Nature Publishing Group}
}

@article{liu2022goal,
  title={Goal-conditioned reinforcement learning: Problems and solutions},
  author={Liu, Minghuan and Zhu, Menghui and Zhang, Weinan},
  journal={arXiv preprint arXiv:2201.08299},
  year={2022}
}

@inproceedings{yang2023essential,
  title={What is essential for unseen goal generalization of offline goal-conditioned rl?},
  author={Yang, Rui and Yong, Lin and Ma, Xiaoteng and Hu, Hao and Zhang, Chongjie and Zhang, Tong},
  booktitle={International Conference on Machine Learning},
  pages={39543--39571},
  year={2023},
  organization={PMLR}
}

@article{fujimoto2021minimalist,
  title={A minimalist approach to offline reinforcement learning},
  author={Fujimoto, Scott and Gu, Shixiang Shane},
  journal={Advances in neural information processing systems},
  volume={34},
  pages={20132--20145},
  year={2021}
}

@article{zheng2023contrastive,
  title={Contrastive difference predictive coding},
  author={Zheng, Chongyi and Salakhutdinov, Ruslan and Eysenbach, Benjamin},
  journal={arXiv preprint arXiv:2310.20141},
  year={2023}
}

@incollection{sutton1995td,
  title={TD models: Modeling the world at a mixture of time scales},
  author={Sutton, Richard S},
  booktitle={Machine learning proceedings 1995},
  pages={531--539},
  year={1995},
  publisher={Elsevier}
}

@article{fu2020d4rl,
  title={D4rl: Datasets for deep data-driven reinforcement learning},
  author={Fu, Justin and Kumar, Aviral and Nachum, Ofir and Tucker, George and Levine, Sergey},
  journal={arXiv preprint arXiv:2004.07219},
  year={2020}
}

@article{tarasov2023revisiting,
  title={Revisiting the minimalist approach to offline reinforcement learning},
  author={Tarasov, Denis and Kurenkov, Vladislav and Nikulin, Alexander and Kolesnikov, Sergey},
  journal={Advances in Neural Information Processing Systems},
  volume={36},
  pages={11592--11620},
  year={2023}
}

@article{ghosh2019learning,
  title={Learning to reach goals without reinforcement learning},
  author={Ghosh, Dibya and Gupta, Abhishek and Fu, Justin and Reddy, Ashwin and Devin, Coline and Eysenbach, Benjamin and Levine, Sergey},
  year={2019}
}

@article{newey1987asymmetric,
  title={Asymmetric least squares estimation and testing},
  author={Newey, Whitney K and Powell, James L},
  journal={Econometrica: Journal of the Econometric Society},
  pages={819--847},
  year={1987},
  publisher={JSTOR}
}

@article{park2024value,
  title={Is value learning really the main bottleneck in offline RL?},
  author={Park, Seohong and Frans, Kevin and Levine, Sergey and Kumar, Aviral},
  journal={Advances in Neural Information Processing Systems},
  volume={37},
  pages={79029--79056},
  year={2024}
}

@article{hartikainen2019dynamical,
  title={Dynamical distance learning for semi-supervised and unsupervised skill discovery},
  author={Hartikainen, Kristian and Geng, Xinyang and Haarnoja, Tuomas and Levine, Sergey},
  journal={arXiv preprint arXiv:1907.08225},
  year={2019}
}

@article{savinov2018semi,
  title={Semi-parametric topological memory for navigation},
  author={Savinov, Nikolay and Dosovitskiy, Alexey and Koltun, Vladlen},
  journal={arXiv preprint arXiv:1803.00653},
  year={2018}
}

@inproceedings{de2018multi,
  title={Multi-step reinforcement learning: A unifying algorithm},
  author={De Asis, Kristopher and Hernandez-Garcia, J and Holland, G and Sutton, Richard},
  booktitle={Proceedings of the AAAI conference on artificial intelligence},
  volume={32},
  number={1},
  year={2018}
}

@article{machado2023temporal,
  title={Temporal abstraction in reinforcement learning with the successor representation},
  author={Machado, Marlos C and Barreto, Andre and Precup, Doina and Bowling, Michael},
  journal={Journal of machine learning research},
  volume={24},
  number={80},
  pages={1--69},
  year={2023}
}

@article{jiang2022efficient,
  title={Efficient planning in a compact latent action space},
  author={Jiang, Zhengyao and Zhang, Tianjun and Janner, Michael and Li, Yueying and Rockt{\"a}schel, Tim and Grefenstette, Edward and Tian, Yuandong},
  journal={arXiv preprint arXiv:2208.10291},
  year={2022}
}

@article{kingma2014adam,
  title={Adam: A method for stochastic optimization},
  author={Kingma, Diederik P and Ba, Jimmy},
  journal={arXiv preprint arXiv:1412.6980},
  year={2014}
}

@article{hendrycks2016gaussian,
  title={Gaussian error linear units (gelus)},
  author={Hendrycks, Dan and Gimpel, Kevin},
  journal={arXiv preprint arXiv:1606.08415},
  year={2016}
}

@article{lv2025flow,
  title={Flow-Based Policy for Online Reinforcement Learning},
  author={Lv, Lei and Li, Yunfei and Luo, Yu and Sun, Fuchun and Kong, Tao and Xu, Jiafeng and Ma, Xiao},
  journal={arXiv preprint arXiv:2506.12811},
  year={2025}
}

@article{abbate2021data,
  title={Data-driven profile prediction for DIII-D},
  author={Abbate, Joseph and Conlin, Rory and Kolemen, Egemen},
  journal={Nuclear Fusion},
  volume={61},
  number={4},
  pages={046027},
  year={2021},
  publisher={IOP Publishing}
}

@inproceedings{walker2020introduction,
  title={Introduction to tokamak plasma control},
  author={Walker, Michael L and De Vries, Peter and Felici, Federico and Schuster, Eugenio},
  booktitle={2020 American Control Conference (ACC)},
  pages={2901--2918},
  year={2020},
  organization={IEEE}
}

@article{yu2025reward,
  title={Reward Models in Deep Reinforcement Learning: A Survey},
  author={Yu, Rui and Wan, Shenghua and Wang, Yucen and Gao, Chen-Xiao and Gan, Le and Zhang, Zongzhang and Zhan, De-Chuan},
  journal={arXiv preprint arXiv:2506.15421},
  year={2025}
}

@inproceedings{char2023offline,
  title={Offline model-based reinforcement learning for tokamak control},
  author={Char, Ian and Abbate, Joseph and Bard{\'o}czi, L{\'a}szl{\'o} and Boyer, Mark and Chung, Youngseog and Conlin, Rory and Erickson, Keith and Mehta, Viraj and Richner, Nathan and Kolemen, Egemen and others},
  booktitle={Learning for Dynamics and Control Conference},
  pages={1357--1372},
  year={2023},
  organization={PMLR}
}

@article{char2024full,
  title={Full shot predictions for the diii-d tokamak via deep recurrent networks},
  author={Char, Ian and Chung, Youngseog and Abbate, Joseph and Kolemen, Egemen and Schneider, Jeff},
  journal={arXiv preprint arXiv:2404.12416},
  year={2024}
}

@article{wang2023efficient,
  title={Efficient potential-based exploration in reinforcement learning using inverse dynamic bisimulation metric},
  author={Wang, Yiming and Yang, Ming and Dong, Renzhi and Sun, Binbin and Liu, Furui and others},
  journal={Advances in Neural Information Processing Systems},
  volume={36},
  pages={38786--38797},
  year={2023}
}

@article{agarwal2023f,
  title={f-policy gradients: A general framework for goal-conditioned rl using f-divergences},
  author={Agarwal, Siddhant and Durugkar, Ishan and Stone, Peter and Zhang, Amy},
  journal={Advances in Neural Information Processing Systems},
  volume={36},
  pages={12100--12123},
  year={2023}
}

@inproceedings{kaelbling1993learning,
  title={Learning to achieve goals},
  author={Kaelbling, Leslie Pack},
  booktitle={IJCAI},
  volume={2},
  pages={1094--8},
  year={1993}
}
